\documentclass[11pt]{article}

\usepackage{microtype}
\usepackage{graphicx}
\usepackage{subcaption}
\usepackage{booktabs}
\usepackage{wrapfig}
\usepackage{hyperref}

\usepackage[preprint]{tmlr}

\usepackage{amsmath}
\usepackage{amssymb}
\usepackage{mathtools}
\usepackage{amsthm}

\usepackage[capitalize,noabbrev]{cleveref}

\theoremstyle{plain}
\newtheorem{theorem}{Theorem}[section]
\newtheorem{proposition}[theorem]{Proposition}
\newtheorem{lemma}[theorem]{Lemma}

\theoremstyle{definition}
\newtheorem{definition}[theorem]{Definition}

\theoremstyle{remark}

\begin{document}

\title{Sobolev--Ricci Curvature}

\author{
\name Kyoichi Iwasaki \email iwasaki.kyoichi@ism.ac.jp \\
\addr The Graduate University for Advanced Studies, SOKENDAI
\AND
\name Tam Le \email tam@ism.ac.jp \\
\addr The Institute of Statistical Mathematics
\AND
\name Hideitsu Hino \email hino@ism.ac.jp \\
\addr The Institute of Statistical Mathematics
}

\date{}
\maketitle

\begin{abstract}
Ricci curvature is a fundamental concept in differential geometry for encoding local geometric structure, and its graph-based analogues have recently gained prominence as practical tools for reweighting, pruning, and reshaping network geometry.
We propose Sobolev--Ricci Curvature (SRC), a graph Ricci curvature canonically induced by Sobolev transport geometry, which admits efficient evaluation via a tree-metric Sobolev structure on neighborhood measures.
We establish two consistency behaviors that anchor SRC to classical transport curvature: (i) on trees endowed with the length measure, SRC recovers Ollivier--Ricci curvature (ORC) in the canonical $W_1$ setting, and (ii) SRC vanishes in the Dirac limit, matching the flat case of measure-theoretic Ricci curvature.
We demonstrate SRC as a reusable curvature primitive in two representative pipelines. We define Sobolev--Ricci Flow by replacing ORC with SRC in a Ricci-flow-style reweighting rule, and we use SRC for curvature-guided edge pruning aimed at preserving manifold structure. 
Overall, SRC provides a transport-based foundation for scalable curvature-driven graph transformation and manifold-oriented pruning.
\end{abstract}

\section{Introduction}
Curvature provides a concise geometric primitive for graph-structured data: it summarizes how local neighborhoods contract or expand, and it can be injected into downstream procedures such as graph reweighting, denoising, and manifold-oriented edge pruning. Among discrete notions, Ollivier--Ricci curvature (ORC) has become particularly influential because it defines curvature through optimal-transport (OT) geometry between neighborhood measures, closely mirroring contraction-based characterizations in the smooth theory~\citep{ollivier2009ricci}.
Despite its geometric appeal, ORC is often impractical at scale. 
ORC involves solving an OT problem for each edge, which becomes the dominant cost in iterative pipelines. We remove this bottleneck: on a tree, Sobolev transport (ST) admits a closed-form evaluation (via cut-mass coordinates), so curvature evaluation reduces to simple algebraic operations rather than iterative optimization.
Evaluating ORC typically requires solving a $1$-Wasserstein OT problem for each edge, which becomes prohibitively expensive on large graphs or within iterative pipelines where curvature must be recomputed repeatedly. Approximate solvers can reduce the cost but still remain a major bottleneck in practice.
Recent progress on scalable transport metrics suggests a different route: ST equips probability measures on a graph with a Sobolev-type metric--measure geometry and admits efficient evaluation through a tree-metric structure induced by rooted graph partitions~\citep{DBLP:conf/aistats/LeNPN22}. In particular, ST yields a closed-form expression on tree supports while retaining a principled connection to Wasserstein geometry.
This closed-form property eliminates per-edge OT solves.
Motivated by this transport geometry, we introduce \emph{Sobolev--Ricci Curvature (SRC)}, a notion of Ricci curvature on graphs naturally induced by ST. 

SRC follows the same transport--contraction template as ORC---curvature is defined by a transport distance between neighborhood measures normalized by a base distance---with ST providing the underlying geometry and enabling efficient evaluation.
Beyond scalability, SRC is designed to inherit two canonical consistency behaviors expected of transport-based curvature: \emph{(i)} on trees endowed with the length measure, SRC recovers ORC in the classical $W_1$ transport formulation; and \emph{(ii)} in the Dirac limit where neighborhood measures collapse to point masses, SRC vanishes, aligning with the flat (zero-curvature) case in measure-theoretic formulations.

We further illustrate SRC as a \emph{geometrically consistent curvature primitive} in two pipelines. First, we define a \emph{Sobolev--Ricci Flow (SRF)} that updates edge lengths via an SRC-driven rule analogous to ORC-based flows, avoiding per-edge OT solves. Second, we propose \emph{SRC--MANL}, a Sobolev-curvature variant of ORC--MANL (Manifold Learning and Recovery)~\citep{DBLP:conf/iclr/SaidiHB25}, where SRC replaces ORC in the curvature-based filtering stage. Together, these examples show that SRC provides a principled ST curvature foundation for scalable curvature-driven graph transformation and manifold-oriented edge pruning.

\paragraph{Contributions.}
Our main contributions are:
\begin{itemize}
  \item \textbf{Sobolev--Ricci Curvature (SRC):} 
  a Sobolev transport–induced graph Ricci curvature, efficient tree-metric evaluation.
  \item \textbf{Consistency:} 
  equals ORC on length-measure trees; vanishes in the Dirac limit.
  \item \textbf{Two representative instantiations:} SRF for curvature-driven reweighting and SRC--MANL for manifold-oriented edge pruning, showcasing SRC as a geometrically consistent curvature primitive.
\end{itemize}

\paragraph{Organization of the paper.}
Sec.~\ref{sec:preliminaries} introduces preliminaries.
Sec.~\ref{sec:related_work} reviews ORC and related variants.
Sec.~\ref{sec:sobolev_curvature} presents SRC.
Sec.~\ref{sec:properties} analyzes its theoretical properties and connection to ORC.
Sec.~\ref{sec:ricci_flow} presents SRF and its complexity.
Sec.~\ref{sec:numerical_experiments} reports experiments on community detection and edge pruning.
Sec.~\ref{sec:conclusion} concludes with future directions.

\section{Preliminaries}\label{sec:preliminaries}

In this section, we introduce problem settings, notations, and related definitions.

\paragraph{Ricci curvature.} \
In Riemannian geometry, the \emph{Ricci curvature} quantifies the amount by which the volume of a geodesic ball in a manifold deviates from that in the Euclidean space~\citep{ricci1900methodes}. Intuitively, it measures the contraction or expansion of geodesics, and plays a central role in geometric analysis.

\paragraph{Ollivier--Ricci curvature.} \
\citet{ollivier2009ricci} introduced a discrete analogue, the \emph{Ollivier--Ricci curvature (ORC)}, which adapts Ricci curvature to metric measure spaces and graphs.
Given a metric space $(X,d)$ with probability measures $\{\mu_x\}_{x\in X}$ supported on local neighborhoods of each $x \in X$, the ORC between two nodes $x,y \in X$ is defined as
\begin{equation}
    \kappa_{\mathrm{ORC}}(x,y) 
    := 1 - \frac{W_1(\mu_x,\mu_y)}{d(x,y)},
    \label{eq:orc_def}
\end{equation}
where $W_1$ denotes the 1-Wasserstein distance. 
Positive curvature indicates that the neighborhood measures are closer (in the Wasserstein sense) than the base distance $d(x,y)$, while negative curvature indicates the opposite. 
This formulation provides a versatile geometric descriptor of graphs, but the computation of $W_1$ for each edge is expensive, motivating scalable transport-based formulations that admit efficient evaluation on graphs. See Appendix~\ref{sec:from_riemannian_rc_to_orc_on_graphs} for the relation between the original Ricci curvature and ORC.

\paragraph{Graph setup.} \
Let $X=\{x_i\}_{i=1}^n\subset\mathbb{R}^d$ and $G=(V,E)$ be a connected graph with $V=X$ and positive edge lengths $\{\lambda(e)\}_{e\in E}$ induced by the $\ell^p$ norm\footnote{For $u=(u_1,\dots,u_d)\in\mathbb{R}^d$, $\|u\|_p=(\sum_{k=1}^d |u_k|^p)^{1/p}$ for $1\le p<\infty$, and $\|u\|_\infty=\max_{1\le k\le d}|u_k|$.} on $X$.  
Fix a root node $z_0\in V$ such that $[z_0,v]$, which is the shortest-path from $z_0$ to $v$, is unique for every $v\in V$ (unique-path-to-root assumption)~\citep{DBLP:conf/aistats/LeNPN22}.\footnote{There may be multiple paths connecting $z_0$ and $v$, but the shortest path $[z_0, v]$ is unique.} 
In this work, we considered two types of trees, i.e., Minimum Spanning Tree (MST) and Shortest Path Tree (SPT); the assumption holds by construction.

\paragraph{Probability measures.} \
We denote by $\mathcal{P}(V)$ the simplex of probability measures on $V$.  
For $x\in V$, we may write $\mu_x\in\mathcal{P}(V)$ when we emphasize a probability measure associated with $x$.  
A canonical example is the Dirac measure $\delta_x \in \mathcal{P}(V)$, defined by $\delta_x(\{v\}) = 1$ if $v = x$, and $0$ otherwise.

\paragraph{Edge partitions and neighborhoods.} \
Fixing the root $z_0\in V$ and assuming the unique-path-to-root property, for each edge $e\in E$ we remove $e$ to obtain two connected components.  
Let $T_e\subseteq V$ be the component not containing $z_0$, equivalently
\begin{equation}\label{formula:t_e}
T_e=\{\, v\in V : e\in [z_0,v] \,\}.
\end{equation}
We denote by $\mathbf{1}_{T_e}$ the indicator of $T_e$, i.e. $\mathbf{1}_{T_e}(v)=1$ if $v\in T_e$ and $0$ otherwise.
For $x\in V$, we denote by $\mathcal{N}(x)\subseteq V$ the set of its $k$-nearest neighbors with respect to the $\ell^p$ norm.

\paragraph{Sobolev transport.} \
Sobolev transport (ST)~\citep{DBLP:conf/aistats/LeNPN22} equips graphs satisfying the unique-path-to-root assumption
with a tree-metric-based transport geometry, and admits a closed-form expression on such structures.

In this work, we instantiate ST through MST and SPT constructions.
For two probability measures $\mu_x,\mu_y \in \mathcal{P}(V)$, the ST admits a closed-form expression as follows: \begin{equation}
    S_p(\mu_x,\mu_y)^p 
    = \sum_{e \in E} \lambda(e)\, \big| \mu_x(\mathbf{1}_{T_e}) - \mu_y(\mathbf{1}_{T_e}) \big|^p,
    \label{eq:st_def}
\end{equation}
where $T_e$ is defined in~\eqref{formula:t_e}.

\section{Related work}\label{sec:related_work}
In this section, we review prior studies on ORC, its theoretical foundations, 
approaches for efficient computation, simplified curvature notions, and representative applications. 
We then highlight the recent development of ST, which motivates our proposed SRC.

\subsection{ORC and theoretical foundations}
ORC was introduced as a discrete analogue of Ricci curvature on Markov chains and metric spaces~\citep{ollivier2009ricci}.
It has been studied in relation to graph Laplacians~\citep{bauer2011ollivier}, the heat equation~\citep{munch2019ollivier}, and random matrix perspectives~\citep{DBLP:journals/corr/abs-2008-01209}. 
In parallel, diffusion-based approaches have been proposed for estimating curvature in data manifolds, offering a continuous viewpoint that complements graph-based formulations~\citep{bhaskar2022diffusion}.

These works collectively establish the theoretical soundness of ORC as a discrete geometric descriptor, though the exact computation involves solving OT problems with high computational cost.

\subsection{Efficient Computation of ORC and Variants}
Due to the high complexity of ORC computation, several acceleration methods have been proposed. 
Entropic regularization with Sinkhorn iterations provides a scalable approximation to the Wasserstein distance~\citep{DBLP:conf/nips/Cuturi13}. 
Recent studies further develop accelerated evaluation schemes for lower bounds of ORC~\citep{DBLP:journals/corr/abs-2405-13302} and algorithms with fine-grained reductions~\citep{DBLP:journals/tcs/DasGuptaGM23}. 

Beyond entropic regularization, sliced and tree-sliced variants of Wasserstein distances have been introduced to reduce computational cost while preserving geometric fidelity~\citep{DBLP:conf/nips/LeYFC19,DBLP:conf/nips/KolouriNSBR19}.

Other variants include Lin--Lu--Yau curvature~\citep{lin2011ricci, hehl2024ollivier}, which provides alternative discretizations of Ricci curvature, and more efficient evaluation strategies for ORC on large graphs~\citep{DBLP:conf/iclr/Fesser024}. 

\subsection{Alternative Discrete Curvature Notions}

Several non-transport discrete curvature notions have been proposed, including overlap-based curvatures (e.g., Jaccard) and combinatorial formulations (e.g., Forman curvature)~\citep{DBLP:journals/corr/abs-1710-01724,DBLP:journals/dcg/Forman03}.
Related directions also explore curvature-aware embeddings beyond purely discrete curvature definitions~\citep{DBLP:journals/corr/abs-2202-01185}.

\subsection{Applications of Discrete Curvature}
ORC and its variants have been applied in a wide range of domains.  
In network science, curvature-based methods have been used for community detection~\citep{DBLP:journals/corr/abs-1907-03993}, analyzing internet topology~\citep{DBLP:conf/infocom/NiLGGS15}, and wireless network robustness~\citep{DBLP:conf/amcc/WangJB14}.  
ORC has also been extended into manifold learning frameworks such as ORC-MANL~\citep{DBLP:conf/iclr/SaidiHB25}.  
In neuroscience, ORC and FC have been used to study brain structural networks~\citep{farooq2019network} and connectivity differences across populations~\citep{yadav2023discrete}.  
During the COVID-19 pandemic, curvature-based vulnerability analysis highlighted weak points in epidemic networks~\citep{de2020using}.  
In machine learning, curvature has been applied to address over-squashing in graph neural networks~\citep{DBLP:conf/iclr/ToppingGC0B22}, biomolecular interaction prediction~\citep{DBLP:journals/corr/abs-2306-13699}, analyzing diffusion processes on graphs~\citep{DBLP:journals/corr/abs-2106-05847}, and curvature-based clustering~\citep{tian2025curvature}.
Beyond these, curvature has also been exploited in computer vision tasks: manifold-regularized dynamic network pruning~\citep{DBLP:conf/cvpr/Tang0XD0T021} uses curvature-aware constraints for efficient model compression, and manifold-aligned neighbor embedding~\citep{DBLP:journals/corr/abs-2205-11257} leverages curvature information to enhance representation learning.

\subsection{Connection to Sobolev Transport}

Most recently, ~\citet{DBLP:conf/aistats/LeNPN22} proposed the Sobolev transport~(ST) as a scalable metric for probability measures on graphs, providing closed-form computation.
Relatedly, convolutional Wasserstein distances have been introduced for efficient optimal transportation on geometric domains~\citep{DBLP:journals/tog/SolomonGPCBNDG15}, highlighting how structural assumptions can drastically reduce computational costs in vision and shape analysis tasks.

Building upon this, our proposed SRC has been introduced as a curvature notion canonically induced by Sobolev transport geometry, with ORC recovered as a special degenerate case, offering comparable geometric fidelity at reduced computational cost. 
This perspective shifts the design axis from OT solvers to transport geometries: SRC treats ST as the underlying geometry and recovers ORC as a singular/degenerate case on trees.

Beyond discrete and graph-based settings, Sobolev-type metrics have also been investigated in the context of diffeomorphism groups and spaces of submanifolds~\citep{micheli2012sobolev}. 
While this line of work focuses on infinite-dimensional Riemannian geometry and applications 
such as shape analysis and fluid dynamics, it shares with our approach the underlying philosophy 
that Sobolev-type metrics provide a powerful tool to capture geometric structures beyond those 
accessible with $L^2$-based formulations.

\paragraph{Positioning within prior work.}
Transport-based curvatures define local geometry by comparing neighborhood measures.
ORC is a prominent instance, but per-edge OT can be prohibitive at scale,
while simplified alternatives such as JC and FC trade transport-based fidelity for efficiency.
ST provides a novel metric--measure geometry that admits closed-form evaluation on tree supports. 
We introduce \emph{Sobolev--Ricci Curvature (SRC)} as the Ricci curvature canonically induced by this Sobolev transport geometry.
In the special case of trees endowed with the length measure ($p=1$), SRC recovers ORC,
while more generally it enables scalable curvature evaluation without entropic or exact OT solvers.
Empirically, SRC instantiates an effective curvature primitive in SRC--MANL edge pruning, achieving ORC-level quality at substantially lower runtime.

\section{Sobolev--Ricci curvatures}\label{sec:sobolev_curvature}

We define neighborhood measures $\mu_x$ from either connectivity or feature-space neighborhoods, and compute ST on an induced rooted tree (MST or SPT), yielding SRC; the consistency results apply to any such instantiation (including the Dirac-limit regime discussed in Sec.~\ref{sec:properties}).

\paragraph{Neighborhood measures.}
We specify $\{\mu_x\}_{x\in V}\subset\mathcal{P}(V)$ supported on $\mathcal{N}(x)\subseteq V$ (e.g., one-hop neighborhoods or $k$NN under $\ell^p$).

\paragraph{Instantiations of $\mu_x$.}

We use a graph-based lazy random walk when features are unavailable, and a feature-based localization on $\mathcal{N}(x)$ (e.g., Gaussian weights) when they are available.

\textbf{(Graph-based / lazy random walk).}
We consider the standard lazy random walk neighborhood measure
\begin{equation}
\mu_x(v)=
\begin{cases}
\alpha, & v=x,\\
(1-\alpha)/|N(x)|, & v\in N(x),\\
0, & \text{otherwise},
\end{cases}
\label{eq:mu_flow}
\end{equation}
where $N(x)$ is the one-hop neighborhood of $x$.

\textbf{(Feature-based / Gaussian localization).}
When feature distances are available, we use a $k$NN-masked Gaussian localization
\begin{align}
\mu_x(v) &= \frac{1}{Z_x}\, \exp\!\left(-\frac{\|x-v\|_p^2}{\sigma^2}\right)\,\mathbf{1}_{\{v\in\mathcal{N}(x)\}}, \label{formula:mu_x_v} \\
Z_x &= \sum_{w\in\mathcal{N}(x)} \exp\!\left(-\frac{\|x-w\|_p^2}{\sigma^2}\right),\label{formula:z_x}
\end{align}

where we slightly abuse notation by identifying a node $x$ with its feature vector.

\paragraph{Graph-based tree induction}
Given an input graph $G=(V,E)$, we induce a rooted tree $T$ by selecting a subset of edges from $E$,
either as a minimum spanning tree (MST; global) or as a shortest-path tree (SPT) rooted at $z_0$ (local).
Here the tree edge length is inherited from the original graph (i.e., the edge weight/length, and set to $1$ if unavailable).
The MST provides a global backbone that connects all vertices with minimum total tree length,
whereas the SPT preserves shortest-path neighborhoods from the root along the original graph.

\paragraph{Feature-based tree induction}
When feature distances are available, we represent each node by its feature vector and measure
pairwise dissimilarity in the feature space; with a slight abuse of notation, we identify a node $x$
with its feature vector and write the feature-space distance as $\|x-v\|_p$.
We use this quantity as the tree edge length, and induce a tree that is short with respect to these
feature-space distances, for example, by computing an MST on the induced complete graph over $V$
with edge costs $\|u-v\|_p$ (optionally restricted to $k$NN edges for efficiency).

\paragraph{Tree-induced partition and ST.}
For each $e\in E$, let $T_e$ be the component not containing the root after removing $e$, and set $\gamma_e:=\mathbf{1}_{T_e}$.
We define $\mu_x(\gamma_e):=\sum_{v\in T_e}\mu_x(v)$.

The ST $S_p$ \citep{DBLP:conf/aistats/LeNPN22} between $\mu_x$ and $\mu_y$ is given in the discrete form by
\begin{equation}\label{def:s_p_via_st}
S_p(\mu_x,\mu_y)^p \;=\; \sum_{e\in E} \lambda(e)\, \bigl|\mu_x(\gamma_e)-\mu_y(\gamma_e)\bigr|^p.
\end{equation}
Eq.~\eqref{def:s_p_via_st} is Eq.~\eqref{eq:st_def} specialized to the induced rooted tree, written in terms of the cut-mass coordinates $\mu_x(\gamma_e)$.
For the Dirac measures, we denote $D_p(x,y)\;=\;S_p(\delta_x,\delta_y)$.

\paragraph{Sobolev--Ricci Curvature.}
The Sobolev--Ricci Curvature $\kappa_{\mathrm{S_p}}$ for a pair $(x,y)\in E$ is defined as
\begin{equation}\label{eq:sobolev_ricci_curvature}
\kappa_{\mathrm{S_p}}(x,y) \;=\; 1 \;-\; \frac{S_p(\mu_x,\mu_y)}{D_p(x,y)}.
\end{equation}
Note that on a tree, $D_p(x,y)=\big(\sum_{e\in[x,y]}\lambda(e)\big)^{1/p}$ since $|\delta_x(\gamma_e)-\delta_y(\gamma_e)|\in\{0,1\}$.

\section{Properties of Sobolev Curvature}\label{sec:properties}

In this section, we present fundamental properties of SRC; equivalence to ORC on trees, and consistency in the Dirac limit (measure-agnostic).

\subsection{ORC as a special case of SRC}

We recall that the ST $S_p$ depends on the choice of the
measure $\lambda$ on the underlying graph. An important case is when
$\lambda$ is the \emph{length measure} of the graph.

\begin{definition}[Length measure]
Let $G=(V,E)$ be a weighted graph, and let $d$ denote the graph metric induced
by the edge lengths $\{\lambda(e)\}_{e\in E}$. The \emph{length measure} $\lambda^\ast$
is defined as the unique Borel measure on $G$ such that for every edge
$e=\langle u,v\rangle$ with length $\lambda(e)$, and for any two points
$x=(1-s)u+sv$, $y=(1-t)u+tv$ with $0\le s \le t \le 1$, we have
\begin{equation}
  \lambda^\ast(\langle x,y\rangle) \;=\; |t-s|\, \lambda(e).    
\end{equation}

In particular, for any shortest path $[x,y] \subset G$, the measure satisfies
\begin{equation}
  \lambda^\ast([x,y]) = d(x,y).
\end{equation}

\end{definition}

With this choice of measure, ST with $p=1$ coincides with the $1$-Wasserstein distance on trees. This fact shows that ORC arises when the underlying Sobolev geometry degenerates to a tree equipped with the length measure.

\begin{proposition}
\label{prop:SRC_orc_equivalence}
Let $G$ be a tree endowed with the length measure $\lambda^*$. 
Then, for $p=1$, the SRC coincides with the ORC:
\begin{equation}
    \kappa_{\mathrm{S_1}}(x,y) = \kappa_{\mathrm{ORC}}(x,y).    
\end{equation}
\end{proposition}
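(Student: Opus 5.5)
The plan is to show that the numerator and the denominator of the SRC in \eqref{eq:sobolev_ricci_curvature} coincide separately with those of the ORC in \eqref{eq:orc_def}. Concretely, the goal is
\[
S_1(\mu_x,\mu_y)=W_1(\mu_x,\mu_y) \quad\text{and}\quad D_1(x,y)=d(x,y).
\]
The statement then follows at once.

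\textbf{Denominator.} This step is immediate from the remark after \eqref{eq:sobolev_ricci_curvature}. On a tree,
\[
D_1(x,y)=\sum_{e\in[x,y]}\lambda(e)=\lambda^\ast([x,y]).
\]
By the defining property of the length measure, this equals $d(x,y)$.

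\textbf{Numerator.} I would prove the classical closed form of $W_1$ on a tree metric:
\[
W_1(\mu,\nu)=\sum_{e}\lambda(e)\,\bigl|\mu(\gamma_e)-\nu(\gamma_e)\bigr|.
\]
The right-hand side is exactly $S_1(\mu,\nu)$ from \eqref{def:s_p_via_st} with $p=1$. I would establish it by matching bounds.

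\emph{Lower bound (duality).} Use Kantorovich--Rubinstein duality, $W_1(\mu,\nu)=\sup_{f\ \text{1-Lip}}\int f\,d(\mu-\nu)$. Construct the test function
\[
f(v)=\sum_{e\in[z_0,v]}\lambda(e)\,s_e,\qquad s_e=\operatorname{sign}\bigl(\mu(\gamma_e)-\nu(\gamma_e)\bigr).
\]
Between adjacent vertices $f$ changes by at most $\lambda(e)$, and since the path metric on a tree is the sum of edge lengths along the unique path, $f$ is 1-Lipschitz. Exchanging sums (this is where $T_e=\{v: e\in[z_0,v]\}$ is used) gives
\[
\int f\,d(\mu-\nu)=\sum_e \lambda(e)s_e\bigl(\mu(\gamma_e)-\nu(\gamma_e)\bigr)=S_1(\mu,\nu).
\]

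\emph{Upper bound (coupling).} Take any coupling $\pi$ of $\mu$ and $\nu$. Write $d(u,v)=\sum_e\lambda(e)\mathbf 1[e\in[u,v]]$, and note that $e\in[u,v]$ exactly when one of $u,v$ lies in $T_e$ and the other does not. Hence
\[
\int d\,d\pi=\sum_e\lambda(e)\,\pi\bigl(\{\mathbf 1_{T_e}(u)\ne\mathbf 1_{T_e}(v)\}\bigr)\ge\sum_e\lambda(e)\,\bigl|\mu(\gamma_e)-\nu(\gamma_e)\bigr|.
\]
The inequality holds because the probability that the two marginals disagree on $T_e$ is at least the difference of their masses. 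Taking the infimum over $\pi$ yields $W_1\ge S_1$.

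Hmm—this coupling argument actually gives $W_1\ge S_1$ again, the same direction as duality. It is the duality step that gives $W_1\ge S_1$, so the coupling step is redundant for the upper bound. For $W_1\le S_1$ I would instead use weak duality on the dual side. For any 1-Lipschitz $f$, write
\[
\int f\,d(\mu-\nu)=\sum_e\bigl(f(\text{child}_e)-f(\text{parent}_e)\bigr)\bigl(\mu(\gamma_e)-\nu(\gamma_e)\bigr),
\]
which is a telescoping (summation by parts) identity. Each increment is bounded by $\lambda(e)$, so the whole sum is at most $S_1(\mu,\nu)$. Taking the supremum over $f$ gives $W_1\le S_1$, and together with the lower bound, $W_1=S_1$.

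\textbf{Main obstacle.} The hard step is justifying the summation-by-parts identity. It relies on the facts that every vertex $v$ satisfies $f(v)-f(z_0)=\sum_{e\in[z_0,v]}\bigl(f(\text{child}_e)-f(\text{parent}_e)\bigr)$ and that $\mu(\gamma_e)=\sum_{v:\,e\in[z_0,v]}\mu(v)$. Both use the unique-path-to-root structure of the tree. I would also point out that if $\mu_x,\mu_y$ are supported on vertices, restricting to vertices loses nothing in $W_1$ under the length measure.
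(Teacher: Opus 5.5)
Your proof is correct. Its skeleton matches the paper's: identify $S_1(\mu_x,\mu_y)$ with $W_1(\mu_x,\mu_y)$ and $D_1(x,y)$ with $d(x,y)$. You obtain the key identity differently, though.

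The paper does not prove $S_1=W_1$. It imports it from Corollary~4.3 of Le et al., which is derived in their continuous graph-Sobolev framework (dual test functions with $\|f'\|_{L^\infty(\lambda^\ast)}\le 1$). It gets the denominator from the same fact applied to Diracs, $S_1(\delta_x,\delta_y)=W_1(\delta_x,\delta_y)=d(x,y)$.

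You instead re-derive the discrete tree-Wasserstein closed form from Kantorovich--Rubinstein duality. The potential $f(v)=\sum_{e\in[z_0,v]}\lambda(e)s_e$ attains $S_1$, and summation by parts shows no $1$-Lipschitz $f$ exceeds it. You read the denominator off $\lambda^\ast([x,y])=d(x,y)$ directly.

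Your route is self-contained and makes explicit what is used:
\begin{itemize}
\item $T_e$ is the descendant set of $e$;
\item $e\in[u,v]$ iff $\mathbf{1}_{T_e}(u)\neq\mathbf{1}_{T_e}(v)$;
\item adjacent vertices are at distance $\lambda(e)$;
\item $\mu(V)=\nu(V)=1$ kills the $f(z_0)$ term.
\end{itemize}
It also shows why the cited corollary holds. On a length-measure tree, the $1$-Lipschitz condition is exactly the $p=1$ Sobolev constraint that edge increments are at most $\lambda(e)$. The citation buys brevity and the generality of Le et al.'s setting.

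In a write-up, tidy two points. First, the paragraph headed ``Upper bound (coupling)'' is really a second, duality-free proof of $W_1\ge S_1$; keep one of the two lower-bound arguments and label it as such. Second, passing from ``every $1$-Lipschitz $f$ gives at most $S_1$'' to $W_1\le S_1$ uses the strong Kantorovich--Rubinstein equality you stated at the outset, not weak duality.
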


The proof is provided in Appendix~\ref{subsec:SRC_orc_equivalence}.

\subsection{Consistency with Dirac Measures}\label{sec:consistency_with_dirac_measures}

A fundamental design principle of our SRC is its consistency with the behavior of ORC in the Dirac limit. 
In particular, for two Dirac measures $\delta_x$ and $\delta_y$ on $(X,d)$, the ORC $\kappa_{\mathrm{ORC}}$ is given by
\begin{equation}
\kappa_{\mathrm{ORC}}(\delta_x, \delta_y) 
= 1 - \frac{W_1(\delta_x, \delta_y)}{d(x,y)} 
= 1 - \frac{d(x,y)}{d(x,y)} = 0,
\end{equation}
since $W_1(\delta_x, \delta_y) = d(x,y)$.

Our proposed SRC naturally inherits the same limiting property. 
By construction, when the localized neighborhood distributions $\mu_x$ and $\mu_y$ converge to Dirac measures, the Sobolev functional reduces to $S_p(\delta_x, \delta_y)$
and the corresponding curvature is calculated as
\begin{equation}
\kappa_{\mathrm{S_p}}(\delta_x, \delta_y) = 1 - \frac{S_p(\delta_x, \delta_y)}{D_p(x,y)} = 0.    
\end{equation}

Thus, in the formal limit of measures $\mu_x\to \delta_x$ which is also considered as $\sigma\to 0$ in~\eqref{formula:mu_x_v}, both ORC and SRC vanish:
\begin{equation}
\lim_{\sigma \to 0} \kappa_{\mathrm{ORC}}(x,y) = \lim_{\sigma \to 0} \kappa_{\mathrm{S_p}}(x,y) = 0,    
\end{equation}
providing a natural justification that SRC can be viewed as a discrete analogue of Ricci curvature, aligned with the well-established behavior of ORC.
Dirac limit corresponds to the flat (zero-curvature) case in the measure-theoretic analogue of Ricci curvature.
This argument can be made rigorous via weak convergence of measures for both ORC and SRC in Appendix~\ref{sec:appendix-dirac-consistency}, and the same reasoning applies to the lazy random walk limit $\alpha\to 1$ in~\eqref{eq:mu_flow}.

\section{Sobolev--Ricci Flow}\label{sec:ricci_flow}

Ricci curvature is a widely used primitive to characterize local graph geometry and to drive curvature-based transformations such as discrete Ricci flow.
ORC-based Ricci flow iteratively reweights edges by curvature and has been shown to improve community structure and robustness~\citep{DBLP:journals/corr/abs-1907-03993},
but its practical use is limited by cost since ORC requires solving an OT problem per edge at every iteration.

We propose a curvature-driven flow based on SRC.
SRC retains the transport-based geometric interpretation while enabling efficient computation via ST.
Moreover, SRC satisfies two key consistency properties---tree equivalence to ORC under the length measure (for $p=1$) and Dirac-limit flatness---which support SRC as a principled curvature measure for curvature-driven flows.

\subsection{Background: ORC-based Ricci Flow}

We briefly recall the ORC-based Ricci flow studied in~\citep{DBLP:journals/corr/abs-1907-03993}.
In Riemannian geometry, let $(M,g)$ be a smooth manifold equipped with a Riemannian metric $g=(g_{ij})$,
and let $R_{ij}$ denote the associated Ricci curvature tensor.
Ricci flow evolves the metric as the nonlinear partial differential equation
\begin{equation}
\label{eq:riemannian_ricci_flow}
    \frac{\partial}{\partial t} g_{ij}(t) \;=\; -2\,R_{ij}(t),
\end{equation}
where $g_{ij}(t)$ is a one-parameter family of metrics on $M$ and $R_{ij}(t)$ is the Ricci curvature computed from $g_{ij}(t)$.
Intuitively, Ricci flow performs a curvature-driven deformation of geometry:
regions of positive curvature tend to contract, while regions of negative curvature tend to expand,
thereby smoothing irregularities of the metric.

\paragraph{From manifolds to graphs.}
On graphs, the metric is represented by edge lengths (or weights), and the role of Ricci curvature is played by a discrete curvature assigned to edges.
The ORC formulation computes the ORC $\kappa^{(t)}(x,y)$ for each edge $\langle x,y\rangle$
using localized neighborhood measures $\mu_x,\mu_y$ and a Wasserstein distance.

\paragraph{Discrete Ricci flow.}
A discrete Ricci flow updates weights $w^{(t+1)}(x,y)$ for edge $\langle x,y\rangle$ according to
\begin{equation}\label{eq:orc_ricci_flow_update}
        w^{(t+1)}(x,y) \;=\; \bigl(1-\kappa^{(t)}(x,y)\bigr)\, d^{(t)}(x,y),
\end{equation}
where $d^{(t)}(x,y)$ is the shortest-path distance induced by the current edge weights.
This update expands negatively curved edges and contracts positively curved ones,
and has been used for community detection and graph surgery.
In practice, ORC-based Ricci flow is computationally expensive since ORC requires
solving an OT problem per edge.

\subsection{Sobolev--Ricci Flow}
We now define an analogous flow driven by SRC.
Let $\kappa_{\mathrm{S}_p}^{(t)}(x,y)$ denote the SRC on edge $\langle x,y\rangle$
defined in~\eqref{eq:sobolev_ricci_curvature}, and let $d^{(t)}$ be the shortest-path
metric induced by $w^{(t)}$.
Following~\eqref{eq:orc_ricci_flow_update}, we update edge weights $w_{xy}^{(t+1)}$ by
\begin{equation}\label{eq:sobolev_ricci_flow}
    w_{xy}^{(t+1)} \;=\; \bigl(1-\kappa_{\mathrm{S}_p}^{(t)}(x,y)\bigr)\, d^{(t)}(x,y),
\end{equation}
with initialization $w_{xy}^{(0)} = d^{(0)}(x,y)$.
This update evolves the graph geometry in a curvature-consistent manner,
while replacing the costly Wasserstein computation in ORC by the scalable
Sobolev transport.
We refer to~\eqref{eq:sobolev_ricci_flow} as the \emph{Sobolev--Ricci Flow (SRF)}.

\subsection{Theoretical justification}\label{sec:theoretical_justification}
Our use of SRC in SRF is supported by two consistency principles established in
Section~\ref{sec:properties}.

\textbf{(i) tree equivalence}: on trees endowed with the length measure and for $p=1$,
SRC coincides with ORC (Proposition~\ref{prop:SRC_orc_equivalence}), hence SRF reduces to the standard ORC-based Ricci flow in this canonical setting.

\textbf{(ii) Dirac-limit flatness}: when neighborhood measures collapse to Dirac masses,
SRC vanishes, ensuring that SRF does not introduce spurious curvature in the flat case discussed in Section~\ref{sec:consistency_with_dirac_measures}.
We therefore view SRF as the Ricci flow naturally induced by ST geometry,
with ORC-based Ricci flow recovered as a special case.

\subsection{Computational Complexity}\label{sec:computational_complexity}

Let $n,m,k$ denote the numbers of nodes, edges, and neighbors (i.e., $|\mathrm{supp}(\mu_x)|$, typically on the degree scale).
ORC solves an OT problem per edge; Sinkhorn requires $T_{\mathrm{sk}}$ iterations with $\tilde{O}(k^2)$ updates (polylog suppressed),
yielding $\tilde{O}(T_{\mathrm{sk}}mk^2)$ per Ricci-flow iteration.
SRC replaces per-edge OT solves with closed-form ST on a tree metric.
SRC(SPT) and SRC(MST) require only tree construction and local aggregation over supports of size $k$.
With dense evaluation over $|E(T_r)|=n-1$ tree edges, the per-iteration cost is
$\tilde{O}(m\log n + nk\,h + mn)$; typically $h_{\mathrm{SPT}}<h_{\mathrm{MST}}$, so SRC(SPT) is faster than SRC(MST),
while the comparison to ORC depends on $(n,k,T_{\mathrm{sk}})$.
See Appendix~\ref{sec:complexity_appendix} for derivations.

\section{Numerical experiments}\label{sec:numerical_experiments}

\subsection{Community Detection with Ricci Flow}
\label{sec:community_detection_with_ricciflow}

We evaluate SRC for Ricci-flow-based community detection, comparing it with ORC in accuracy and runtime.
We report results on synthetic and real-world networks.

\subsubsection{Experimental Setup}
\paragraph{Datasets.}
We employ two synthetic benchmarks, \textbf{SBM} and \textbf{LFR},
as well as five widely used real-world networks
(\texttt{karate}, \texttt{football}, \texttt{polbooks}, \texttt{polblogs}, and \texttt{email-eu-core}).
See Appendix~\ref{sec:profiles_of_datasets} for dataset profiles and licenses.

\paragraph{Baselines.}
We compare SRF with the standard Ollivier Ricci flow (ORC; \texttt{GraphRicciCurvature}~\cite{graphriccicurvature}, default settings, Sinkhorn OT).
Our method has two variants: SRC(SPT) and SRC(MST).
As non-flow baselines, we report results from community detection algorithms in \texttt{igraph} library (\url{https://igraph.org}):
spinglass, infomap, fast greedy, edge betweenness, and label propagation\footnote{We do not include JC/FC as flow baselines, as we focus on transport-based metric flows 
.}.
Detailed hyperparameter settings and dataset configurations are reported in Appendix~\ref{app:exp_settings}.

\paragraph{Evaluation Protocol.}
Our evaluation follows a two-step procedure.
First, we run the Ricci flow for a number of iterations, where each iteration recomputes curvature and updates the edge weights $w_{xy}$,
which are interpreted as edge lengths/dissimilarities and induce a shortest-path metric $d^{(t)}(\cdot,\cdot)$ on the graph.
Second, we perform community detection using the Louvain method (Appendix~\ref{app:louvain}) and compute ARI against the ground-truth partition.

Since Louvain and most modularity-based methods operate on \emph{similarity} weights,
we convert the learned Ricci-flow weights $w_{xy}^{(t)}$ into affinity weights via
\begin{equation}
\label{eq:length_to_similarity}
\tilde{w}_{xy}^{(t)} \;=\; \exp(-\beta\, w_{xy}^{(t)}),
\end{equation}
with a fixed $\beta>0$ throughout the experiments.
This transformation is purely to match the input convention of Louvain and does not alter the Ricci flow itself. ($\beta$ plays the role of a temperature parameter for converting length-like quantities into similarities; unless otherwise stated, we fix $\beta = 1$, and Appendix~\ref{sec:ablation_beta} shows that the downstream performance is insensitive to $\beta$ over a broad range.) 
We then apply the Louvain method to $(V,E,\tilde{w})$ and evaluate ARI.
We adopt Louvain as a common post-processing step for evaluation, following standard practice
in community detection benchmarks.
Compared to direct length-based thresholding, modularity-based clustering provides more stable
partitions across datasets and facilitates fair comparison with existing baselines.

We report ARI on all datasets, and measure wall-clock runtime only on SBM,
where graph size and recovery difficulty can be systematically controlled.
For real networks, we report mean±std over trials (SRC(SPT): multiple roots; spinglass/label propagation: non-deterministic algorithms).

\subsubsection{Results}\label{sec:results_on_srf}

On synthetic benchmarks (SBM/LFR), Fig.~\ref{fig:sbm_lfr_real_runtime}(a,b) reports ARI on SBM and LFR while varying the difficulty of community recovery.
Across both benchmarks, the curvature-driven flows (ORC/SRC) achieve near-perfect recovery in the easy regimes and degrade gracefully as the problem becomes harder.
In the transition region, ORC attains the best performance, and SRC(MST) is typically slightly better than SRC(SPT), while both remain competitive.
Across repeated trials, the variability of ORC and SRC remains low for most settings, with increased dispersion mainly observed around the transition region where recovery becomes intrinsically unstable.
Additional ablation studies and curvature statistics are reported in Appendix~\ref{app:ablation_study_on_comm_detection}.
While prior work evaluates ORC-based flows with a direct length-based partitioning~\citep{DBLP:journals/corr/abs-1907-03993}, we report Louvain-based ARI as a standard downstream readout and compare both evaluations in Appendix~\ref{sec:ablation_p_downstream_lfr}, under which the methods remain competitive.
Also, SRC(SPT) is root-dependent; Theorem~\ref{thm:root_dependence_bound} in Appendix~\ref{app:proofs} bounds the curvature variation under changes of the root, and Fig.~\ref{fig:result_sbm_lfr_root_ratio_mean_std} empirically shows that the resulting perturbations are small on both SBM and LFR.

On real networks, Fig.~\ref{fig:sbm_lfr_real_runtime}(c) summarizes results on five real-world networks.
SRC remains competitive with ORC across datasets, and SRC(SPT) often matches ORC in ARI; standard baselines behave consistently with prior observations~\citep{DBLP:journals/corr/abs-1907-03993}.
Since SRC(SPT) can depend on the root, we aggregate results over multiple roots, and the resulting variability is limited, indicating robustness in practice.
On \texttt{karate}, however, both SRC(MST) and SRC(SPT) achieve relatively low ARI compared to other real-world datasets, unlike the stable performance on larger networks.
We further analyze this behavior via an $\alpha$ sensitivity study in Appendix~\ref{app:sensitivity_to_alpha_small_nw}.

On SBM, Fig.~\ref{fig:sbm_lfr_real_runtime}(d) reports runtime: SRC(SPT) is the fastest, followed by SRC(MST), while ORC is $10$--$100\times$ slower.
This trend is in line with Appendix~\ref{subsec:complexity_sbm_numbers}, which suggests that ORC incurs a higher per-edge cost due to OT computations.

\begin{figure*}[t]
    \centering
    \begin{subfigure}[t]{0.48\textwidth}
        \centering
        \includegraphics[width=\linewidth]{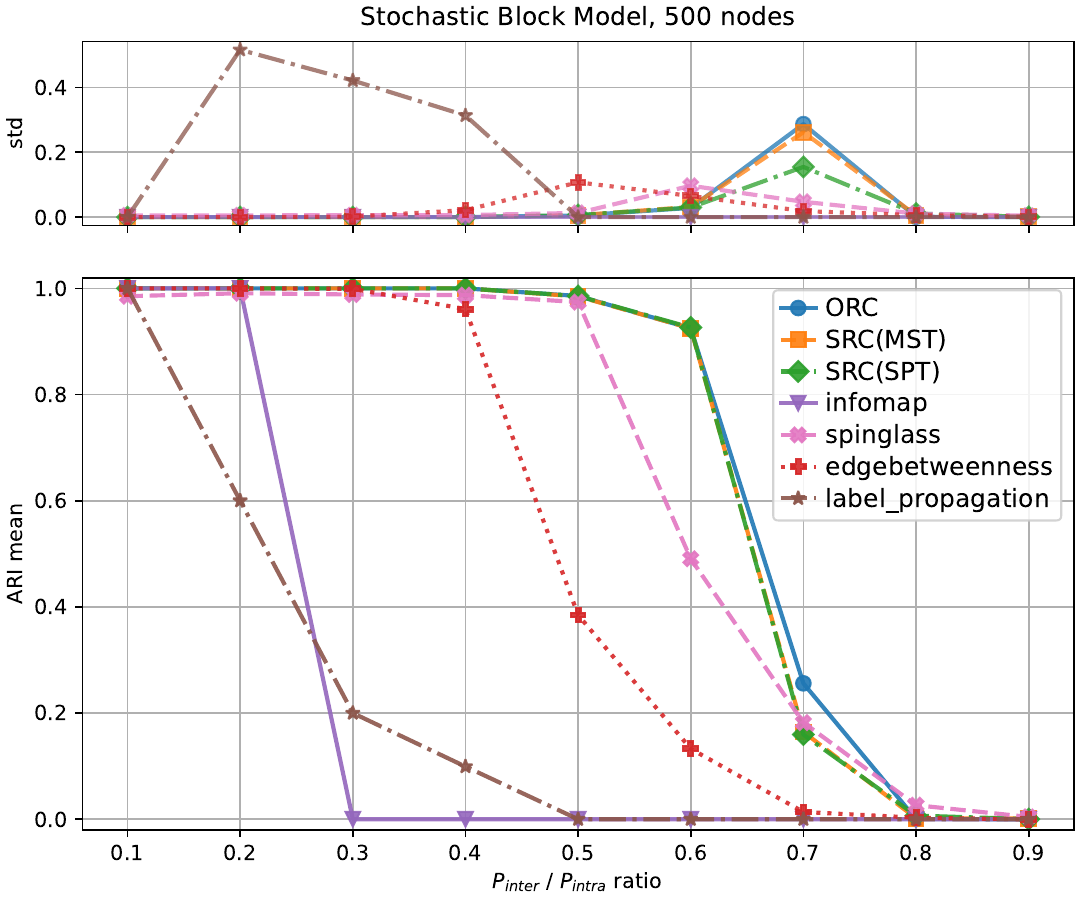}
        \caption{SBM}
        \label{fig:sbm}
    \end{subfigure}
    \hfill
    \begin{subfigure}[t]{0.48\textwidth}
        \centering
        \includegraphics[width=\linewidth]{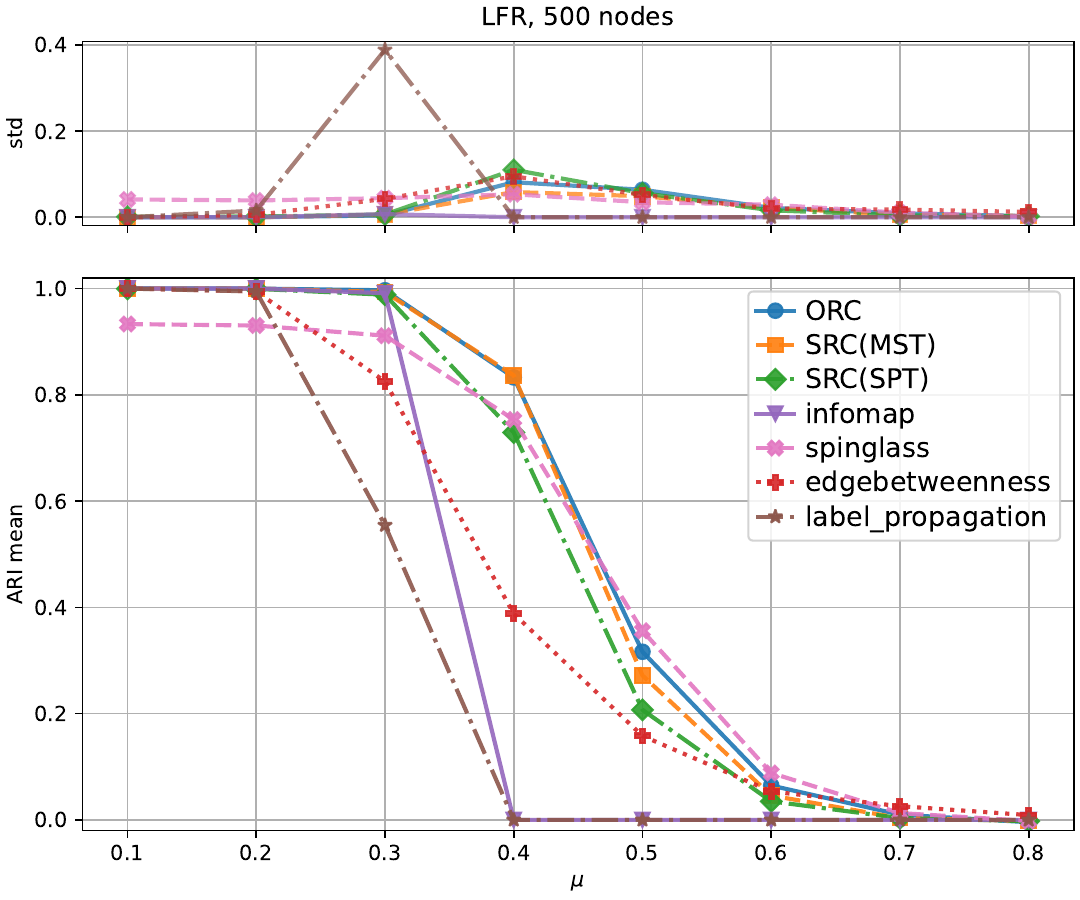}
        \caption{LFR}
        \label{fig:lfr}
    \end{subfigure}

    \vspace{2mm}

    \begin{subfigure}[t]{0.48\textwidth}
        \centering
        \includegraphics[width=\linewidth]{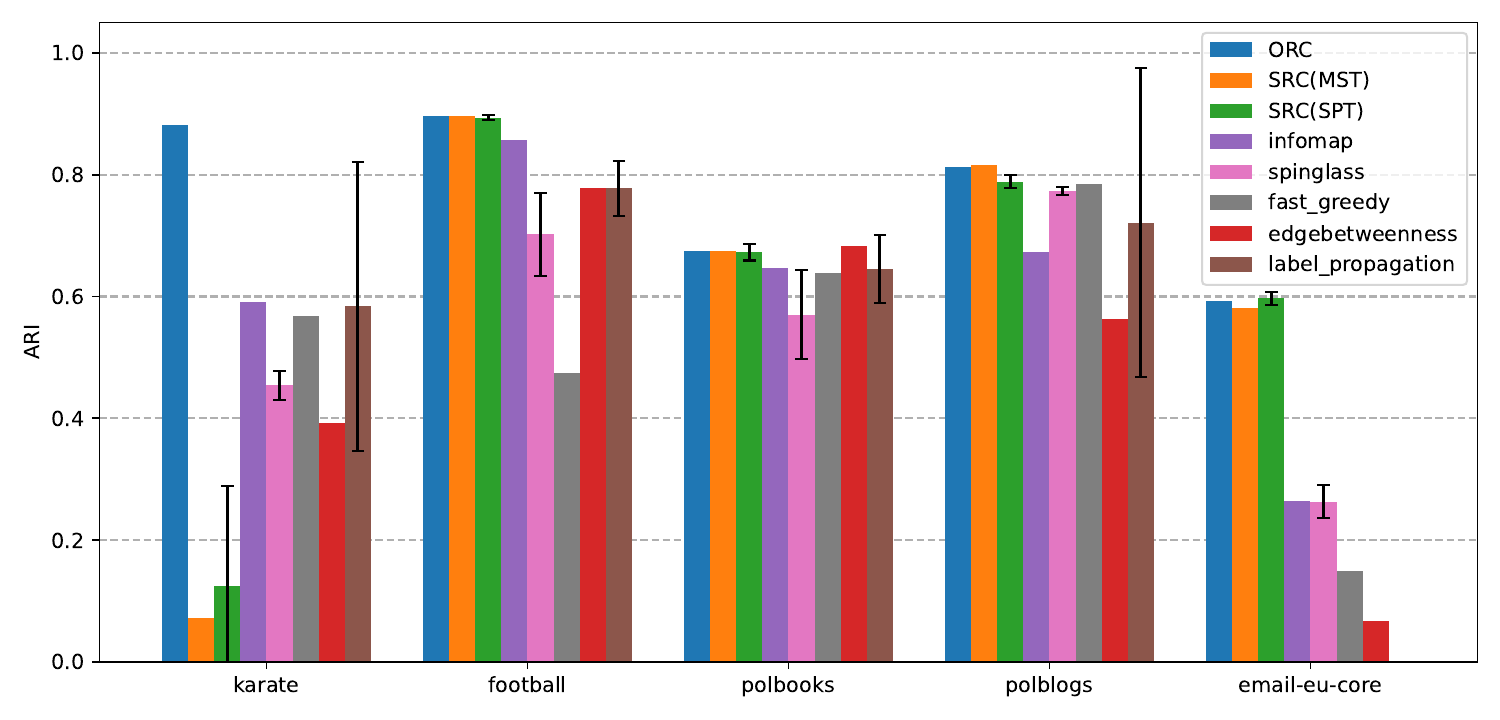}
        \caption{Real Networks}
        \label{fig:real_summary}
    \end{subfigure}
    \hfill
    \begin{subfigure}[t]{0.48\textwidth}
        \centering
        \includegraphics[width=\linewidth]{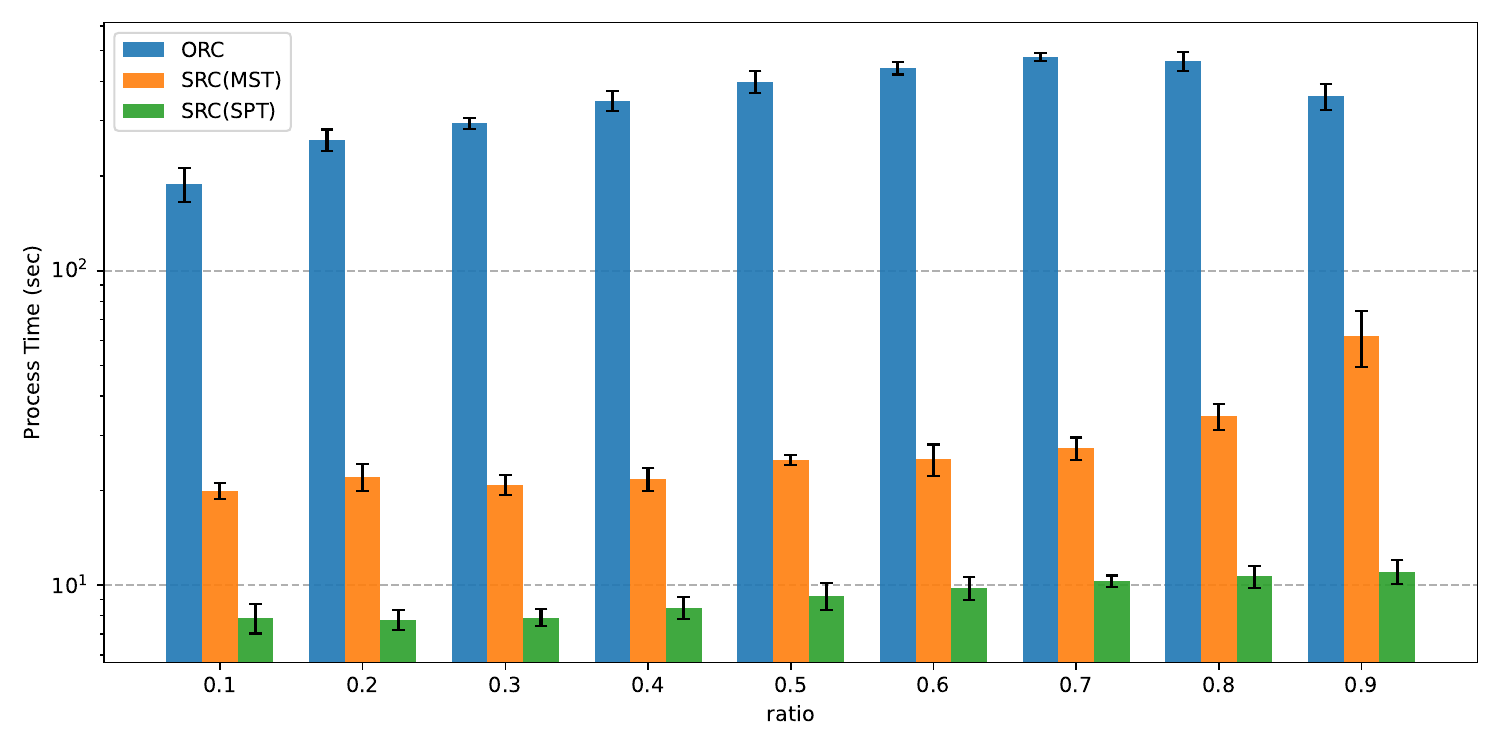}
        \caption{SBM Runtime}
        \label{fig:sbm_runtime}
    \end{subfigure}

    \vspace{-2mm}
    \caption{Community detection performance (ARI) and computational cost under Ricci flow.}
    \vspace{-2mm}
    \label{fig:sbm_lfr_real_runtime}
\end{figure*}

\subsection{Pruning edges with Curvatures}\label{sec:edge_pruning_with_Curvature}

We address the edge pruning problem in nearest-neighbor graphs.
When graphs are built from noisy samples on a manifold, they often contain shortcut edges that connect distant parts of the manifold through the ambient space. Such edges distort geodesic distances and degrade downstream tasks.

The goal of pruning is therefore to remove these shortcuts while preserving good edges that reflect the intrinsic manifold structure.
To evaluate pruning quality, we measure the proportion of correctly removed shortcut edges (true positives) and the proportion of mistakenly removed good edges (false positives). High pruning accuracy means that the curvature-based method successfully separates shortcuts from desirable edges, thus serving as a test of whether the curvature captures meaningful geometric information.

We conduct edge pruning experiments across ten benchmark datasets
(e.g., \texttt{concentric\_circles}, \texttt{torii}, \texttt{s\_curve}, and \texttt{3D\_swiss\_roll}),
each consisting of $4{,}000$ data points.
The full list and visualizations are provided in Appendix~\ref{sec:visualization_of_datasets_for_edge_pruning}.

Following the two-stage pruning strategy of ORC-MANL (Manifold Learning and Recovery)~\citep{DBLP:conf/iclr/SaidiHB25}, we alternate curvature-based edge filtering with shortcut removal.
In the first stage, candidate edges are selected based on curvature values using threshold parameter $\delta_{\text{M}}$.
In the second stage, we remove shortcut edges if their shortest-path distance violates a threshold determined by a parameter  $\lambda_{\text{M}}$.

\begin{itemize}
\item \textbf{ORC(lightweight)}: simplified ORC on a $k$NN graph, consistent with ORC-MANL.
\item \textbf{SRC(SPT)}: a variant of SRC based on an SPT, designed to capture local neighborhood structure.
\end{itemize}
It should be noted that while ORC and SRC can be directly compared, JC and FC cannot be directly compared, since their consistency with the Dirac measure is unclear and the ranges of curvature values they take differ substantially.
We perform pruning using the following five criteria:

\textbf{Distance-only}: removes edges exceeding a length threshold without curvature information.

\textbf{ORC only}: removes edges with $\kappa_{\mathrm{ORC}} \le -1 + 4(1-\delta_{\text{M}})$. See~\citep[\S A.3.1]{DBLP:conf/iclr/SaidiHB25} for details.

\textbf{ORC--MANL}: applies two-stage pruning using ORC in the first stage and shortcut detection in the second.

\textbf{SRC only}: same as ``ORC only'' but with SRC.

\textbf{SRC--MANL}: applies two-stage pruning using SRC in the first stage.

We set $(\delta_{\text{M}},\lambda_{\text{M}})=(0.75, 0.01)$, 
which achieve the best scores for both ORC-MANL and SRC-MANL 
within the parameter ranges 
$\delta_{\text{M}} \in [0.7,0.9]$ (step 0.05) and 
$\lambda_{\text{M}} \in \{0.001,0.01,0.1,0.2,0.5\}$, whose combinations are followed in~\citep{DBLP:conf/iclr/SaidiHB25}. 

Figure~\ref{fig:results_edge_pruning_selected} illustrates representative pruning outcomes, highlighting the benefit of curvature information.
\emph{Distance-only} pruning fails to reliably separate shortcut edges, underscoring the need for curvature information.
Comparing curvature signals, \emph{SRC only} consistently detects shortcut edges while avoiding excessive removal of good edges, whereas \emph{ORC only} often misclassifies edges.

\begin{wrapfigure}{r}{0.55\linewidth}
    \centering
    \includegraphics[width=\linewidth]{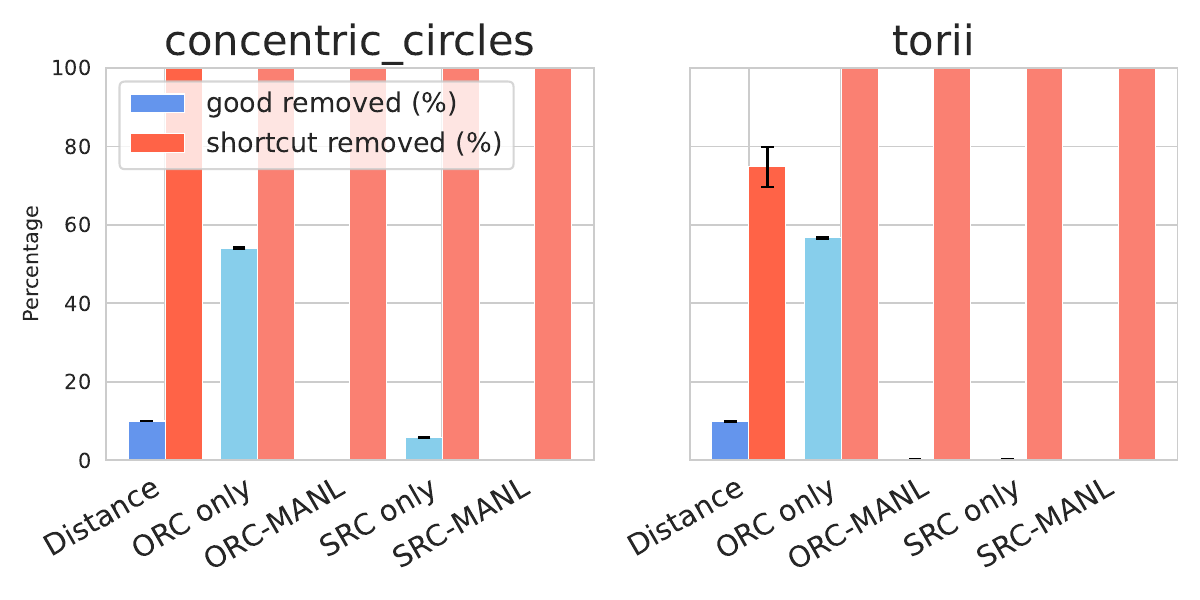}
    \caption{The bars show the percentage of shortcut edges correctly removed (red) and the percentage of good edges incorrectly removed (light blue). Representative results on two datasets.}
    \label{fig:results_edge_pruning_selected}
\end{wrapfigure}

Extending to the MANL framework, both SRC-MANL and ORC-MANL achieve accurate pruning.
SRC-MANL benefits from precise pruning already at the first stage, while ORC-MANL compensates for its initial lag through subsequent MANL operations.

These findings highlight that SRC(SPT) is both computationally efficient and substantially more reliable 
than the $k$NN-based ORC approximation for edge pruning. More results are provided 
in Appendix~\ref{sec:edge_pruning_for_all}. 

We conjecture that this improvement arises because of systematic differences in the curvature values assigned by SRC and ORC. Further intuition and an illustrative example are provided in Appendix~\ref{sec:intuition_on_curvature_values}.

\section{Conclusion and Discussion}\label{sec:conclusion}

In this work, we introduce \emph{Sobolev--Ricci Curvature (SRC)}, a Ricci curvature on graphs
naturally induced by Sobolev transport geometry.
SRC admits efficient evaluation through tree-metric Sobolev structure, and we instantiate it via
SRC(MST) and SRC(SPT).
We demonstrate SRC as a general-purpose curvature primitive in two representative pipelines:
SRF for community detection and SRC--MANL for manifold-oriented edge pruning,
where SRC matches ORC-based pipelines while offering substantial computational savings.
Moreover, our theory shows that ORC is recovered as a special case of SRC on trees endowed with the length measure (for $p=1$),
and SRC is flatness-consistent in the Dirac limit, providing a principled foundation for scalable curvature-driven graph transformations.
\paragraph{Limitations and Future Work.}
SRC depends on the tree choice, reflecting a modeling decision on the underlying Sobolev geometry; the SRC distribution is stable across tree constructions (See Appendix~\ref{app:tree_robustness}).
While we considered MST and SPT instantiations, developing principled and data-driven strategies for selecting or learning this structure remains an open direction.
Another limitation is that our formulation targets static graphs; extending SRC to dynamic or temporal networks will likely require time-dependent Sobolev structures and corresponding curvature notions.

\newpage

\section*{Impact Statement}
This paper presents work whose goal is to advance the field of machine learning.
There are many potential societal consequences of our work, none of which we feel
must be specifically highlighted here.

\bibliographystyle{apalike}
\bibliography{TWC}

\newpage
\appendix
\onecolumn

\renewcommand{\theequation}{\thesection.\arabic{equation}}
\setcounter{equation}{0}

\section{Proofs}\label{app:proofs}

This appendix provides detailed proofs for the main results, along with brief theoretical remarks
on root dependence and Sobolev geometry under the default choice $p=1$.

\subsection{Proof of Proposition~\ref{prop:SRC_orc_equivalence}}\label{subsec:SRC_orc_equivalence}

\begin{proof}
By \citet[Corollary 4.3]{DBLP:conf/aistats/LeNPN22}, the ST distance $S_1$ 
coincides with the Wasserstein distance $W_1$ when $G$ is a tree 
and $\lambda^*$ is the length measure. 
Since the ORC is defined in~\eqref{eq:orc_def}, and our SRC $\kappa_{\mathrm{S_1}}$ is defined analogously by replacing $W_1$ with $S_1$ as in~\eqref{eq:sobolev_ricci_curvature}. 
The conclusion follows directly:
\begin{equation}
\kappa_{\mathrm{S_1}}(x,y)
  = 1 - \frac{S_1(\mu_x,\mu_y)}{S_1(\delta_x,\delta_y)}
  = 1 - \frac{W_1(\mu_x,\mu_y)}{d(x,y)} \\
  = \kappa_{\mathrm{ORC}}(x,y). 
\end{equation}
\end{proof}

\subsection{Root dependence bound for SRC}\label{app:root_dependence_bount_for_src}

This section studies the dependence of SRC on the choice of root node.
We derive an explicit bound showing that, under mild assumptions,
the resulting curvature values vary in a controlled manner,
thereby justifying the robustness of SRC with respect to root selection.

\begin{theorem}[Root-dependence bound for SRC, $p=1$]
\label{thm:root_dependence_bound}
Let $G=(V,E,\ell)$ be a connected undirected graph with positive edge lengths $\ell(e)>0$.
For a choice of root $r\in V$, let $T_r$ denote the rooted tree used by SRC(SPT),
and define the SRC with $p=1$ on an edge $\langle u,v\rangle\in E$ by
\begin{equation}
\label{eq:src_curvature_tree_form}
\kappa^{(r)}(u,v)
\;:=\;
1-\frac{S_r(u,v)}{D_r(u,v)},
\end{equation}
where $S_r(u,v)$ is the tree-based Sobolev transport cost between $\mu_u$ and $\mu_v$ on $T_r$,
and $D_r(u,v)=d_{T_r}(u,v)$ is the induced tree distance.

Consider another root $r'\in V$ with the corresponding tree $T_{r'}$.
Let the symmetric difference of tree edge sets be
\begin{equation}
\label{eq:delta_definition}
\Delta \;:=\; E(T_r)\,\triangle\,E(T_{r'}).
\end{equation}
Assume that there exist constants $\ell_{\max}>0$, $D_{\min}>0$, and $S_{\max}>0$ such that
\begin{align}
\label{eq:assump_lmax}
&\ell(e)\le \ell_{\max} \qquad (\forall e\in E),\\
\label{eq:assump_dmin}
&D_r(u,v)\ge D_{\min},\quad D_{r'}(u,v)\ge D_{\min},\\
\label{eq:assump_smax}
&S_r(u,v)\le S_{\max},\quad S_{r'}(u,v)\le S_{\max}.
\end{align}
Then for any $\langle u,v\rangle\in E$, the SRC satisfies
\begin{equation}
\label{eq:root_dependence_bound}
\bigl|\kappa^{(r)}(u,v)-\kappa^{(r')}(u,v)\bigr|
\;\le\;
\ell_{\max}\,|\Delta|\left(
\frac{1}{D_{\min}}
+
\frac{S_{\max}}{D_{\min}^2}
\right).
\end{equation}
\end{theorem}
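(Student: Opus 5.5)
Write $S=S_r(u,v)$, $S'=S_{r'}(u,v)$, $D=D_r(u,v)$, $D'=D_{r'}(u,v)$. The plan is a standard ratio-perturbation argument: first bound $|S-S'|$ and $|D-D'|$ by $\ell_{\max}|\Delta|$, then propagate these bounds through the map $(S,D)\mapsto 1-S/D$.

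\textbf{Algebraic split.} We have
\begin{equation*}
\kappa^{(r)}-\kappa^{(r')} = \frac{S'}{D'}-\frac{S}{D} = \frac{S'-S}{D'} + S\Bigl(\frac{1}{D'}-\frac{1}{D}\Bigr) = \frac{S'-S}{D'} + \frac{S\,(D-D')}{D D'} .
\end{equation*}
Using \eqref{eq:assump_dmin} and \eqref{eq:assump_smax}, this gives
\begin{equation*}
\bigl|\kappa^{(r)}-\kappa^{(r')}\bigr| \le \frac{|S-S'|}{D_{\min}} + \frac{S_{\max}\,|D-D'|}{D_{\min}^2}.
\end{equation*}
So it suffices to show $|S-S'|\le \ell_{\max}|\Delta|$ and $|D-D'|\le \ell_{\max}|\Delta|$.

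\textbf{Bounding the transport terms.} For $p=1$, write $S$ via \eqref{def:s_p_via_st} on $T_r$ as $\sum_{e\in E(T_r)}\ell(e)\,|\mu_u(\gamma_e^{r})-\mu_v(\gamma_e^{r})|$, and similarly $D=\sum_{e\in E(T_r)}\ell(e)\,|\delta_u(\gamma^r_e)-\delta_v(\gamma^r_e)|$. The same formulas hold with $r'$ in place of $r$. Split each sum into edges in $E(T_r)\cap E(T_{r'})$ and edges in $\Delta$.

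On $\Delta$, each summand is at most $\ell_{\max}$, since every cut-mass difference lies in $[0,1]$. Summed over $r$ and $r'$, the $\Delta$ edges therefore contribute at most $\ell_{\max}|\Delta|$ to the difference.

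\textbf{Main obstacle: the shared edges.} A shared edge $e$ can have a different subtree $\gamma_e$ in $T_r$ and in $T_{r'}$, both because the root changed and because the tree changed elsewhere. So the shared-edge summands need not cancel. I would argue as follows.

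Removing a shared edge $e$ splits each tree into two components, giving a bipartition $\{A,V\setminus A\}$ for $T_r$ and another for $T_{r'}$. Changing the root only swaps which side is $\gamma_e$. Since $\mu_u,\mu_v$ are probability measures, $|\mu_u(\gamma)-\mu_v(\gamma)| = |\mu_u(V\setminus\gamma)-\mu_v(V\setminus\gamma)|$, so the summand depends only on the bipartition, not on the root.

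The bipartitions induced by $e$ in the two trees differ only through paths that use edges of $\Delta$. I would aim to charge each such change to an edge of $\Delta$, which yields the stated constant. This charging step is where the argument is most delicate. If a sharp charging fails, the fallback is to use the tree-metric identity $S=W_1^{T}(\mu_u,\mu_v)$ (Proposition~\ref{prop:SRC_orc_equivalence}) and compare the tree metrics $d_{T_r}$ and $d_{T_{r'}}$ directly. In that approach, the pointwise metric change is controlled by the lengths of the exchanged edges, and the $1$-Lipschitz dependence of $W_1$ on the ground metric then gives $|S-S'|\le \ell_{\max}|\Delta|$. The same argument with Dirac masses handles $|D-D'|$.

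Combining these bounds with the algebraic split gives \eqref{eq:root_dependence_bound}.
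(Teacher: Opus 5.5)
Your reduction is the same as the paper's. The ratio split is the paper's Step~1, with the roles of $(S,D)$ and $(S',D')$ exchanged. It is followed by the two claims $|S-S'|\le\ell_{\max}|\Delta|$ and $|D-D'|\le\ell_{\max}|\Delta|$, which are the paper's Steps~2--3. The split is fine, and your remark that a shared edge's summand depends only on its bipartition is correct. The gap is that \emph{both claims are false}, so neither your charging step nor your fallback can be completed.

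The paper does not get past the obstacle you flagged either. Its Step~2 uses $P_r(u,v)\triangle P_{r'}(u,v)\subseteq\Delta$, and its Step~3 asserts that shared edges do not contribute to $S_r-S_{r'}$; both fail. Take the odd cycle $c_0,\dots,c_{2k}$ with unit lengths, so shortest paths to any root are unique:
\begin{itemize}
\item the SPT from $r=c_0$ omits $\langle c_k,c_{k+1}\rangle$, and the SPT from $r'=c_1$ omits $\langle c_{k+1},c_{k+2}\rangle$, so $|\Delta|=2$ and $\ell_{\max}=1$;
\item for the graph edge $\langle u,v\rangle=\langle c_k,c_{k+1}\rangle$ we get $D_r(u,v)=2k$ but $D_{r'}(u,v)=1$;
\item with $\mu_u=\delta_u$, $\mu_v=\delta_v$ the same numbers break the bound on $|S-S'|$.
\end{itemize}
The mechanism is the one you noticed, and it cannot be charged to $\Delta$ at cost $\ell_{\max}$ per edge. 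An exchange $T\mapsto T-e+f$ changes the bipartition of \emph{every} shared edge on the fundamental cycle of $f$; here $c_{k+1}$ switches side for all of them. So a single exchanged pair can shift tree distances by the length of that whole cycle. For the same reason, your fallback claim that the pointwise change of the tree metric is controlled by the lengths of the exchanged edges is wrong.

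The fallback does work with the right quantity. Replace each edge $f\notin E(T_r)$ on the $T_{r'}$-path from $x$ to $y$ by its $T_r$-path, and symmetrically. This gives $|d_{T_r}(x,y)-d_{T_{r'}}(x,y)|\le\Lambda:=\sum_{f\in\Delta}d_{T(f)}(f^-,f^+)$, where $T(f)$ is whichever of the two trees does not contain $f$. Since $S=W_1$ for the tree metric and $|W_1^{d}-W_1^{d'}|\le\sup|d-d'|$, you get $|S-S'|,|D-D'|\le\Lambda$. This yields the stated inequality with $\ell_{\max}|\Delta|$ replaced by $\Lambda$.

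With $\ell_{\max}|\Delta|$, the inequality itself can fail, since the statement puts no restriction on $\mu_u,\mu_v$. Join $p,q$ by three internally disjoint paths:
\begin{itemize}
\item $p$--$r$--$q$ with edge lengths $1$ and $1+\varepsilon$, where $0<\varepsilon<1$;
\item a path of $M$ unit edges, $M$ even, with middle vertex $r'$;
\item a path of $2a+1$ unit edges whose middle edge is $\langle u,v\rangle$, with $u$ on the $p$ side.
\end{itemize}
Shortest paths from $r$ and from $r'$ are unique. $T_r$ omits $\langle u,v\rangle$ and an edge of the long path incident to $r'$; $T_{r'}$ omits $\langle u,v\rangle$ and $\langle r,q\rangle$. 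Thus $|\Delta|=2$, $\ell_{\max}=1+\varepsilon$, $D_r(u,v)=2a+2+\varepsilon$ and $D_{r'}(u,v)=2a+M$. For $\mu_u=\delta_u$, $\mu_v=\delta_p$ one has $S_r=S_{r'}=a$. With $a=10$ and $M=1000$, the left-hand side is about $0.44$, while the right-hand side, even with the sharpest admissible constants, is about $0.13$.
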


\begin{proof}
Fix an edge $\langle u,v\rangle\in E$.
By definition,
\[
\kappa^{(r)}(u,v)=1-\frac{S_r(u,v)}{D_r(u,v)},\qquad
\kappa^{(r')}(u,v)=1-\frac{S_{r'}(u,v)}{D_{r'}(u,v)}.
\]
Hence
\begin{equation}
\label{eq:kappa_diff_reduce}
\bigl|\kappa^{(r)}(u,v)-\kappa^{(r')}(u,v)\bigr|
=
\left|
\frac{S_r(u,v)}{D_r(u,v)}-\frac{S_{r'}(u,v)}{D_{r'}(u,v)}
\right|.
\end{equation}

\paragraph{Step 1: A generic ratio perturbation bound.}
For any positive numbers $S,S',D,D'>0$,
\[
\left|\frac{S}{D}-\frac{S'}{D'}\right|
=
\left|\frac{S-S'}{D} + S'\left(\frac{1}{D}-\frac{1}{D'}\right)\right|
\le
\frac{|S-S'|}{D} + \frac{S'}{DD'}|D-D'|.
\]
Applying this with $(S,D)=(S_r(u,v),D_r(u,v))$ and $(S',D')=(S_{r'}(u,v),D_{r'}(u,v))$ gives
\begin{equation}
\label{eq:ratio_bound_initial}
\left|
\frac{S_r(u,v)}{D_r(u,v)}-\frac{S_{r'}(u,v)}{D_{r'}(u,v)}
\right|
\le
\frac{|S_r(u,v)-S_{r'}(u,v)|}{D_r(u,v)}
+
\frac{S_{r'}(u,v)}{D_r(u,v)\,D_{r'}(u,v)}\,|D_r(u,v)-D_{r'}(u,v)|.
\end{equation}

\paragraph{Step 2: Bounding changes in tree distance.}
Recall that $D_r(u,v)=d_{T_r}(u,v)$ is the path length between $u$ and $v$ on $T_r$.
Let $P_r(u,v)$ denote the (unique) path between $u$ and $v$ in $T_r$,
and similarly $P_{r'}(u,v)$ in $T_{r'}$.
Since both are trees, the distance is a sum of edge lengths along the path:
\[
D_r(u,v)=\sum_{e\in P_r(u,v)} \ell(e),\qquad
D_{r'}(u,v)=\sum_{e\in P_{r'}(u,v)} \ell(e).
\]
Let $\Delta = E(T_r)\triangle E(T_{r'})$ be the symmetric difference of tree edge sets.
Only edges in $\Delta$ can differ between the two paths.
Thus
\[
|D_r(u,v)-D_{r'}(u,v)|
\le
\sum_{e\in P_r(u,v)\triangle P_{r'}(u,v)} \ell(e)
\le
\sum_{e\in \Delta} \ell(e)
\le
\ell_{\max}\,|\Delta|,
\]
where the last inequality uses the uniform bound $\ell(e)\le \ell_{\max}$ from~\eqref{eq:assump_lmax}.

\paragraph{Step 3: Bounding changes in Sobolev transport cost.}
The quantity $S_r(u,v)$ is a tree-based Sobolev transport cost on $T_r$,
which can be written as a sum over tree edges (cf. Eq.~(E.30) in the appendix):
\[
S_r(u,v) \;=\; \sum_{e\in E(T_r)} \lambda_r(e)\,\bigl|F_u^{(r)}(e)-F_v^{(r)}(e)\bigr|,
\]
where $\lambda_r(e)$ is the length associated with tree edge $e$ and $F_u^{(r)}(e)$ is the aggregated flow value.
Since the flow difference term is bounded by $1$ (both $F_u^{(r)}(e)$ and $F_v^{(r)}(e)$ represent total masses),
we obtain
\[
|S_r(u,v)-S_{r'}(u,v)|
\le
\sum_{e\in E(T_r)\triangle E(T_{r'})} \lambda(e)
\le
\sum_{e\in \Delta} \ell(e)
\le
\ell_{\max}\,|\Delta|.
\]
Here we used again $\ell(e)\le \ell_{\max}$ and that only edges in $\Delta$ contribute to the mismatch of the two tree-sums.

\paragraph{Step 4: Combine bounds under uniform lower/upper assumptions.}
Plugging the bounds from Steps 2--3 into~\eqref{eq:ratio_bound_initial} and using the assumptions
$D_r(u,v)\ge D_{\min}$, $D_{r'}(u,v)\ge D_{\min}$ from~\eqref{eq:assump_dmin},
and $S_{r'}(u,v)\le S_{\max}$ from~\eqref{eq:assump_smax}, we obtain
\[
\left|
\frac{S_r(u,v)}{D_r(u,v)}-\frac{S_{r'}(u,v)}{D_{r'}(u,v)}
\right|
\le
\frac{\ell_{\max}|\Delta|}{D_{\min}}
+
\frac{S_{\max}}{D_{\min}^2}\,\ell_{\max}|\Delta|.
\]
Together with~\eqref{eq:kappa_diff_reduce}, this implies
\[
\bigl|\kappa^{(r)}(u,v)-\kappa^{(r')}(u,v)\bigr|
\le
\ell_{\max}\,|\Delta|
\left(
\frac{1}{D_{\min}} + \frac{S_{\max}}{D_{\min}^2}
\right),
\]
which is exactly~\eqref{eq:root_dependence_bound}.
\end{proof}

\paragraph{From a pointwise bound to an averaged diagnostic.}
We recall Theorem~A.1, which bounds the root dependence of the SRC on each edge.
For two roots $r,r'\in V$, define
\[
\Delta := E(T_r)\triangle E(T_{r'})
\]
as the symmetric difference of the tree edge sets.
Under Assumptions~\eqref{eq:assump_lmax} -~\eqref{eq:assump_smax}, Theorem~A.1 yields that for any edge $e=\langle u,v\rangle\in E$,
\begin{equation}\label{eq:pointwise_root_bound}
\bigl|\kappa^{(r)}(e)-\kappa^{(r')}(e)\bigr|
\;\le\;
\ell_{\max}|\Delta|
\left(\frac{1}{D_{\min}}+\frac{S_{\max}}{D_{\min}^2}\right)
=: C\,|\Delta|,
\end{equation}
where $C := \ell_{\max}\bigl(\frac{1}{D_{\min}}+\frac{S_{\max}}{D_{\min}^2}\bigr)$.

\paragraph{An $L^1$-type root sensitivity measure.}
To summarize the overall root dependence across the entire graph,
we consider the edgewise $L^1$ difference
\[
\|\Delta\kappa\|_1
\;:=\;
\frac{1}{|E|}
\sum_{e\in E}
\bigl|\kappa^{(r)}(e)-\kappa^{(r')}(e)\bigr|.
\]
Combining this with~\eqref{eq:pointwise_root_bound}, we obtain
\begin{equation}\label{eq:l1_bound}
\|\Delta\kappa\|_1
\;\le\;
C\,|\Delta|.
\end{equation}

\paragraph{Normalized diagnostic (per changed tree edge).}
Dividing both sides of~\eqref{eq:l1_bound} by $|\Delta|$ (assuming $|\Delta|>0$) gives
\begin{equation}\label{eq:ratio_bound}
\frac{\|\Delta\kappa\|_1}{|\Delta|}
\;\le\;
C
\;=\;
\ell_{\max}\!\left(\frac{1}{D_{\min}}+\frac{S_{\max}}{D_{\min}^2}\right).
\end{equation}
The left-hand side corresponds exactly to our empirical diagnostic,
which can be interpreted as the \emph{average absolute curvature perturbation per changed tree edge}.

\paragraph{Expected bound under random root selection.}
If $r,r'$ are sampled uniformly at random from $V$, then taking expectation in~\eqref{eq:ratio_bound} yields
\begin{equation}\label{eq:expected_ratio_bound}
\mathbb{E}_{r,r'}\!\left[\frac{\|\Delta\kappa\|_1}{|\Delta|}\right]
\;\le\;
\ell_{\max}\!\left(\frac{1}{D_{\min}}+\frac{S_{\max}}{D_{\min}^2}\right),
\end{equation}
provided that $D_{\min}$ and $S_{\max}$ hold uniformly over the sampled roots.

\paragraph{Connection to SBM/LFR diagnostics.}
Figure~\ref{fig:result_sbm_lfr_root_ratio_mean_std} visualizes the empirical quantity
$\mathbb{E}_{r,r'}\bigl[\|\Delta\kappa\|_1/|\Delta|\bigr]$ for SRC(SPT),
estimated by sampling multiple root pairs $(r,r')$ and averaging over repeated graph instances.
Here, $\|\Delta\kappa\|_1$ is the $L^1$ difference of edge-wise SRC under two roots,
and $|\Delta|$ is the number of tree edges that differ between the corresponding rooted trees,
i.e., $\Delta = E(T_r)\triangle E(T_{r'})$.
Thus the plotted value can be interpreted as the \emph{average absolute curvature change per changed tree edge}.
The observed ratios remain small across both SBM and LFR,
supporting the bound in~\eqref{eq:expected_ratio_bound} and indicating that
the root dependence of SRC(SPT) is limited in practice.

\begin{figure}
    \centering
    \includegraphics[width=0.9\linewidth]{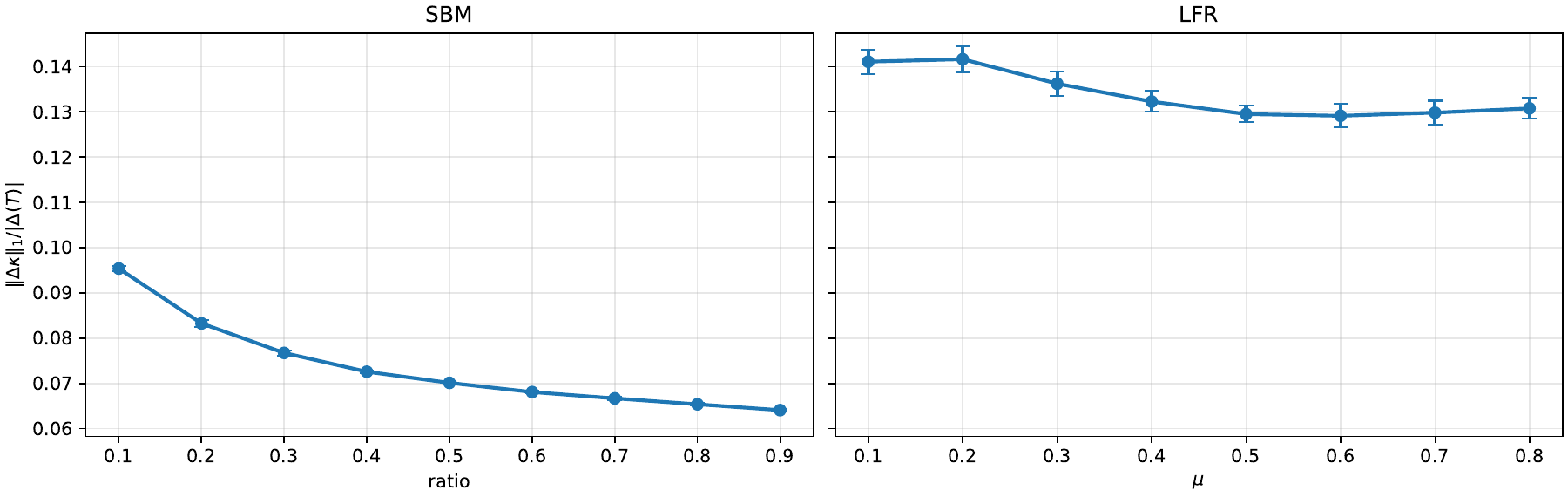}
    \caption{Empirical root sensitivity of SRC(SPT) on SBM and LFR.
For each dataset, we estimate $\mathbb{E}_{r,r'}[\|\Delta\kappa\|_1/|\Delta|]$
by sampling multiple root pairs $(r,r')$ and averaging over repeated graph instances,
where $\|\Delta\kappa\|_1$ is the $L^1$ difference of edge curvatures and
$|\Delta|=|E(T_r)\triangle E(T_{r'})|$ is the number of tree edges that change.
This quantity corresponds to the left-hand side of~\eqref{eq:expected_ratio_bound}
and measures the average curvature perturbation per changed tree edge.}
    \label{fig:result_sbm_lfr_root_ratio_mean_std}
\end{figure}

\subsection{Interpretation of Sobolev geometry}
\label{app:sobolev-geom}

In Appendix~\ref{app:root_dependence_bount_for_src}, we established that the curvature induced by the ST
depends on the choice of root (and the corresponding extracted tree), but that this
dependence is quantitatively controlled by the mismatch of tree edges.
In this subsection, we provide a geometric interpretation of this behavior,
without making any stronger invariance claims.
Our goal is to clarify why the normalization appearing in~\eqref{eq:kappa_diff_reduce} is geometrically meaningful,
rather than to define a fully root-independent geometry.

\subsubsection{Notation (recap)}
Let $G=(V,E_G)$ be a weighted connected graph, and let $z_0\in V$ be a designated root.

We identify $\mathbb{R}^{V}$ (resp.\ $\mathbb{R}^{E}$) with the space of real-valued
functions on the vertex set $V$ (resp.\ the tree-edge set $E$).

Let $T_{z_0}=(V,E)$ denote an extracted tree rooted at $z_0$ (e.g., a shortest-path tree).
For each tree edge $e\in E$, removing $e$ splits $T_{z_0}$ into two connected components;
we denote by $T_e\subset V$ the component not containing the root $z_0$, and define the
corresponding shadow set $\gamma_e := \mathbf{1}_{T_e}\in\{0,1\}^{V}$.
We write $\mathcal{P}(V):=\{\mu\in\mathbb{R}^V_+:\sum_{v\in V}\mu(v)=1\}$ for the probability simplex.
Throughout this subsection, the tree-edge weights $\lambda(e)>0$ are inherited from the
original graph weights restricted to the extracted tree.

\subsubsection{Correspondence with continuous geometry}

Table~\ref{table:correspondence_table} summarizes the correspondence between standard
notions in continuous Riemannian/Finsler geometry and the discrete Sobolev geometry
considered in this paper.
This table is intended purely as an interpretative guide:
while the structures are analogous, no claim is made that the discrete geometry
arises as a limit of a smooth manifold.

\paragraph{Why points are measures (intuition).}
Our Sobolev geometry is defined on the probability simplex $\mathcal{P}(V)$, i.e., distributions on the vertex set $V$.
Hence a ``point'' in this geometry is a measure $\mu\in\mathcal{P}(V)$ (equivalently, a nonnegative function on $V$ summing to one),
and Sobolev transport compares such measures.
The root $z_0$ does not change the point $\mu$ itself; it only changes its coordinate representation through cut masses $\mu(\gamma_e)$.

\begin{table}[t]
\centering
\caption{Interpretative correspondence between continuous geometry and the discrete
Sobolev geometry induced by an extracted tree.}
\small
\begin{tabular}{p{0.48\linewidth} p{0.48\linewidth}}
\toprule
Continuous geometry (Riemann/Finsler) & Discrete Sobolev geometry (this paper)\\
\midrule
Point $x\in M$ & Measure $\mu\in \mathcal{P}(V)$\\

Tangent space $T_xM$ & $T_\mu \mathcal P(V)=
\left\{
\xi \in \mathbb{R}^V \; ;\; \sum_{v\in V}\xi(v)=0
\right\}
\quad (\text{independent of } \mu).
$\\
Chart $\varphi:U\to\mathbb{R}^d$ & Cut-coordinate map $\Phi_{z_0}:\mu\mapsto (\mu(\gamma_e))_{e\in E}$\\
Riemannian metric $g$ & $p=2$: weighted inner product $\langle a,b\rangle_\lambda=\sum_e \lambda(e)a_eb_e$\\
Finsler norm $F(\xi)$ & General $p$: $F(\xi)=\|d\Phi_{z_0}(\xi)\|_{\ell^p_\lambda}$\\
Geodesic distance & $S_p(\mu,\nu)=\|\Phi_{z_0}(\mu)-\Phi_{z_0}(\nu)\|_{\ell^p_\lambda}$\\
Change of chart & Change of extracted tree / root\\
\bottomrule
\end{tabular}
\vspace{2mm}

\label{table:correspondence_table}
\end{table}

\subsubsection{Cut-coordinate embedding viewpoint}

Given a fixed root $z_0$, define the cut-coordinate embedding
$\Phi_{z_0}:\mathcal{P}(V)\to\mathbb{R}^{E}$ by
\[
(\Phi_{z_0}(\mu))_e := \mu(\gamma_e)
= \sum_{v\in V}\mu(v)\gamma_e(v), \qquad e\in E.
\]
We equip $\mathbb{R}^{E}$ with the weighted $\ell^p$ norm
\[
\|a\|_{\ell^p_\lambda(E)} := \Big(\sum_{e\in E}\lambda(e)|a_e|^p\Big)^{1/p}.
\]
With these definitions, the Sobolev transport distance admits the closed form
\[
S_p(\mu,\nu)=\|\Phi_{z_0}(\mu)-\Phi_{z_0}(\nu)\|_{\ell^p_\lambda(E)},
\]
as shown in Appendix~C.1.
This expression motivates viewing $S_p$ as the distance induced by measuring differences
between probability measures in cut coordinates.

\subsubsection{Geometric meaning of the normalization $S/D$}

Recall that the curvature used in this paper is defined through the normalized quantity
$S_{z_0}(u,v)/D_{z_0}(u,v)$.
Here, $S_{z_0}(u,v)$ measures the discrepancy between the local neighborhood measures
around $u$ and $v$ in the Sobolev transport geometry associated with the extracted tree,
while $D_{z_0}(u,v)$ provides the corresponding tree-induced ground distance between the
two base points.

As a result, the ratio $S_{z_0}(u,v)/D_{z_0}(u,v)$ is a dimensionless quantity that can be
interpreted as a local distortion factor of the transport geometry relative to the
underlying tree-metric.
Equation~(A.8),
\[
\bigl|\kappa^{(r)}(u,v)-\kappa^{(r')}(u,v)\bigr|
=
\left|
\frac{S_r(u,v)}{D_r(u,v)}-\frac{S_{r'}(u,v)}{D_{r'}(u,v)}
\right|,
\]
therefore compares two such normalized distortions obtained from different choices of
root (and hence different extracted trees).

Importantly, we do not claim that these quantities are invariant under changes of the
extracted tree.
Rather, Appendix~A.2 establishes that their discrepancy is controlled by the mismatch
between the corresponding tree structures, providing a stability guarantee without
requiring exact geometric equivalence.

\paragraph{Connection to the stability bound in Appendix~\ref{app:root_dependence_bount_for_src}.}
From the correspondence viewpoint in Table~\ref{table:correspondence_table}, changing the root/extracted tree can be viewed as changing the cut-coordinate chart used to represent measures and distances.
Equation~\eqref{eq:kappa_diff_reduce} therefore measures how the \emph{normalized} quantity $S/D$ varies across such chart choices for the same pair $(u,v)$.
Appendix~A.2 complements this interpretation by showing that this variation is controlled by the mismatch between the corresponding extracted trees, yielding a stability guarantee without asserting exact invariance.

\section{From Riemannian Ricci Curvature to Ollivier--Ricci Curvature on Graphs}\label{sec:from_riemannian_rc_to_orc_on_graphs}

In this section we recall how Ricci curvature, originally defined in the smooth setting of Riemannian manifolds, admits reformulations that naturally extend to discrete spaces.

We first review the classical notion of Ricci curvature and its characterization via OT. 
We then explain the small-ball contraction principle, which motivates Ollivier’s coarse Ricci curvature for general metric spaces. 
Specializing this construction to graphs yields a tractable notion of curvature for networks. 
Finally, we highlight the consistency between the discrete and smooth notions, showing that Ollivier curvature converges to classical Ricci curvature in the infinitesimal limit.

\subsection{Ricci curvature on Riemannian manifolds}\label{sec:ricci_curv+on_riem_mfd}
We begin with the classical smooth setting.  
Let $(M,g)$ be an $n$-dimensional Riemannian manifold.  
For $x\in M$ and a unit tangent vector $v\in T_xM$, the \emph{directional Ricci curvature} is defined as the average of sectional curvatures over planes containing $v$:
\begin{equation}
\mathrm{Ric}(v)\;=\;\frac{1}{n-1}\sum_{i=2}^n K_{\mathrm{sec}}(v,v_i),
\end{equation}
where $\{v,v_2,\ldots,v_n\}$ is an orthonormal basis of $T_xM$ with respect to the metric $g_x$, and $K_{\mathrm{sec}}(\cdot,\cdot)$ denotes sectional curvature.

Ricci curvature governs how volumes contract or expand along geodesic flow.  
A modern reformulation links lower Ricci bounds to optimal transport: the works of Lott–Villani and Sturm \cite{lott2009ricci,sturm2006geometry} show that a uniform bound 
\begin{equation}
\mathrm{Ric}(v)\;\ge\; K_0, \qquad \forall v\in T_xM, \text{ and } g_x(v,v)=1,    
\end{equation}

for some constant $K_0\in\mathbb{R}$,
is equivalent to the Boltzmann entropy being $K_0$--convex along $2$--Wasserstein geodesics $(\rho_t)_{t\in[0,1]}$:
\begin{equation}
S(\rho_t)\ \ge\ (1-t)S(\rho_0)+tS(\rho_1)+\tfrac{K_0}{2}t(1-t)\,W_2(\rho_0,\rho_1)^2,
\end{equation}
with entropy $S(\rho)=-\int_M \rho \log \rho\,dm$.  
Intuitively, positive Ricci curvature enforces an ``average contraction'' of mass distributions as they evolve under optimal transport.

\subsection{Ball-contraction inequality}
An equivalent geometric picture is obtained by comparing small geodesic balls.  
Let $m_{r,x}$ be the uniform probability measure on the ball $B(x,r)$. Then as $r\to 0$,
\begin{equation}\label{eq:ball-contraction}
W_1(m_{r,x},m_{r,y}) \;\le\; 
\Bigl(1 - \tfrac{\mathrm{Ric}(v)}{2(n+2)}\,r^2 + o(r^2)\Bigr)\, d(x,y),
\end{equation}
where $v$ is the unit tangent from $x$ to $y$.  
Thus, \emph{positive Ricci curvature implies that the $W_1$ distance between small balls shrinks more than the center distance $d(x,y)$}.  
This inequality will serve as the prototype for the discrete definition.

\subsection{Ollivier curvature on metric spaces}
Ollivier \cite{ollivier2009ricci} generalized the ball-contraction idea from smooth manifolds to arbitrary metric measure spaces.  
Let $(X,d)$ be a metric space equipped with a Markov kernel.  
Associate to each $x\in X$ a probability measure $p_x$ (e.g., the distribution of one step of a random walk).  
Then the \emph{Ollivier--Ricci curvature (ORC)} is defined by
\begin{equation}\label{eq:ORC}
W_1(p_x,p_y) \;=\; \bigl(1-\kappa(x,y)\bigr)\,d(x,y), \text{ for } x,y\in X.
\end{equation}
Here $W_1$ is the $1$--Wasserstein distance on $X$.  
This definition mirrors \eqref{eq:ball-contraction}, but replaces infinitesimal balls $m_{r,x}$ by neighborhood measures $p_x$. Based on the equation~\eqref{eq:ORC}, the sign of the curvature $\kappa$ can be interpreted in terms of the relationship between $W_1$ and $d$ as follows:
\begin{itemize}
\item $\kappa(x,y)>0$: 
\begin{equation}
W_1(p_x,p_y) < d(x,y),    
\end{equation}

meaning that the neighborhoods $p_x$ and $p_y$ are closer in $W_1$ than their centers in $d$ (contraction, positive curvature).
\item $\kappa(x,y)<0$: 
\begin{equation}
W_1(p_x,p_y) > d(x,y),    
\end{equation}
so the neighborhoods spread farther apart than their centers (expansion, negative curvature).
\end{itemize}

\subsection{Specialization to graphs}
Graphs are a natural discrete setting of the above.  
Let $G=(V,E,w)$ be a weighted graph with degree $d_x=\sum_{z\in \mathcal{N}(x)}w_{xz}$, where $\mathcal{N}(x)$ is the set of $k$-nearest neighbors.
The one-step random walk measure is
\begin{equation}
p_x(y)=\frac{w_{xy}}{d_x},\quad y\in \mathcal{N}(x).
\end{equation}
Using \eqref{eq:ORC}, the edge curvature $\kappa(x,y)$ can be computed.  
At the node level, one often aggregates adjacent curvatures:
\begin{equation}
\kappa_x=\sum_{y\in \mathcal{N}(x)}\kappa(x,y)\quad\text{(or normalized average)}.
\end{equation}
Here $\kappa_x>0$ indicates local contraction, while $\kappa_x<0$ signals local expansion or bottlenecks.

\subsection{Consistency with the smooth case}
Finally, if $M$ is a smooth Riemannian manifold as in Section~\ref{sec:ricci_curv+on_riem_mfd} and $p_x$ is taken as the uniform measure on a small ball $m_{r,x}$, then \eqref{eq:ORC} recovers the expansion
\begin{equation}
\kappa(x,y)\;\sim\;\frac{\mathrm{Ric}(v)}{2(n+2)}\,r^2,\qquad r\to 0,
\end{equation}
as noted in Example 7 of~\citep{ollivier2009ricci}.  
Thus, after rescaling by $r^{-2}$, Ollivier curvature converges to classical Ricci curvature, confirming that the discrete notion is consistent with the smooth theory.

\section{Dirac-limit consistency of Sobolev--Ricci Curvature and Ollivier--Ricci Curvature}
\label{sec:appendix-dirac-consistency}

In this section we develop the framework needed to compare ST and
ORC on graphs.

We begin by introducing the graph-theoretic setting and notation, including
the root, shadow sets, and probability measures on the vertex set. 
We then define the ST $S_p$ via graph-based Sobolev
functions and present its closed-form expressions. 
On this basis, we introduce Sobolev curvature as a natural analogue of
ORC, both formulated through ratios of transport distances (See Section~\ref{sec:preliminaries_dirac_limit_of_SRC}). 

Finally, we establish auxiliary lemmas and propositions that show consistency in
the Dirac-limit, ensuring that both curvatures behave as expected when measures
concentrate at single vertices.
We consider two Dirac-limit regimes: (i) Gaussian localization as $\sigma\to0$, and
(ii) the lazy random-walk neighborhood measures $\mu_{x,\alpha}$ as $\alpha\to1$
(See Sections~\ref{sec:auxiliary_lemmas_dirac_limit} and~\ref{sec:main_propositions_dirac_limit}).

\subsection{Preliminaries}\label{sec:preliminaries_dirac_limit_of_SRC}

\paragraph{Setting and notation.}
Let $G=(V,E)$ be a connected undirected graph regarded as a 1D metric space
equipped with the shortest-path distance $d$.
Fix a \emph{root} $z_0\in V$ and assume the uniqueness of shortest paths to the root:
for every $y\in V$, there exists a \emph{unique} shortest path $[z_0,y]$ in $G$.
Let $\lambda$ be a nonnegative Borel measure on $G$ such that (i) $\lambda(\{x\})=0$ for every $x\in V$
(i.e.\ no atoms), and (ii) $\lambda(e)<\infty$ for every edge $e\in E$.
For $x\in V$, define
\begin{equation}
\Lambda(x):=\{\,y\in V:\ x\in [z_0,y]\,\}.
\end{equation}

For an edge $e\in E$, define the \emph{shadow} set
\begin{equation}
\gamma_e:=\{\,y\in V:\ e\subset [z_0,y]\,\}.    
\end{equation}

We denote by $\mathcal{P}(V)$ the set of all probability measures supported on the vertex set $V$.
For $\mu_n,\mu \in \mathcal{P}(V)$,
we write $\mu_n \Rightarrow \mu$ if
\begin{equation}
   \int_V f(x)\,\mu_n(dx) \;\longrightarrow\; \int_V f(x)\,\mu(dx)
   \qquad \text{for all bounded continuous functions } f:V\to\mathbb{R},    
\end{equation}
i.e., $\mu_n$ converges weakly to $\mu$.

\paragraph{Graph-Sobolev space and Sobolev transport.}
Let $1\le p\le\infty$ and $p'$ be its conjugate.
A continuous function $f:G\to\mathbb{R}$ belongs to the graph-based Sobolev space
$W^{1,p'}(G,\lambda)$ if there exists $h\in L^{p'}(G,\lambda)$ such that
\[
f(x)-f(z_0)=\int_{[z_0,x]} h(y)\,\lambda(dy)\quad \text{for all }x\in V,
\]
in which case we call $f'=h$ the \emph{graph derivative} of $f$.
The (order-$p$) ST distance between $\mu,\nu\in\mathcal{P}(V)$ is
\begin{equation}
S_p(\mu,\nu):=\sup\Big\{\ \int f\,d\mu-\int f\,d\nu\ :\ f\in W^{1,p'}(G,\lambda),\ \|f'\|_{L^{p'}(\lambda)}\le 1\ \Big\}.    
\end{equation}

It is known that $S_p$ is a metric on $\mathcal{P}(V)$ and admits the closed form
\begin{equation}\label{eq:Sp-closed}
S_p(\mu,\nu)^p=\int_G \big|\mu(\Lambda(x))-\nu(\Lambda(x))\big|^p\,\lambda(dx),
\end{equation}
and, because $\lambda$ is non-atomic, the discrete form
\begin{equation}\label{eq:Sp-discrete}
S_p(\mu,\nu)^p=\sum_{e\in E}\lambda(e)\,\big|\mu(\gamma_e)-\nu(\gamma_e)\big|^p.
\end{equation}

\paragraph{Curvatures.}
Given distinct $x,y \in V$ and measures $\mu,\nu \in \mathcal{P}(V)$ ``centered'' at $x,y$,
we define the Sobolev curvature by
\begin{equation}
\kappa_{\mathrm{S_p}}(\mu,\nu;x,y)
:=1-\frac{S_p(\mu,\nu)}{S_p(\delta_x,\delta_y)}
\quad (\text{and set }\kappa_{\mathrm{S_p}}(\mu,\nu;x,x):=0),
\end{equation}
and the Ollivier--Ricci curvature by
\begin{equation}
\kappa_{\mathrm{ORC}}(\mu,\nu;x,y)
:=1-\frac{W_1(\mu,\nu)}{d(x,y)}
\quad (\text{and set }\kappa_{\mathrm{ORC}}(\mu,\nu;x,x):=0).
\end{equation}

\subsection{Auxiliary lemmas}\label{sec:auxiliary_lemmas_dirac_limit}

As an illustrative example, recall the localized measures 
$\mu_x^{(\sigma)} \in \mathcal{P}(V)$ defined in Section~\ref{sec:sobolev_curvature}:
\begin{equation}
\mu_x^{(\sigma)}(v) \;\propto\;
   \exp\!\Big(-\tfrac{\|x-v\|_p^2}{\sigma^2}\Big)\,
   \mathbf{1}_{\{v \in \mathcal{N}(x)\}},    
\end{equation}

which converge weakly to the Dirac measure $\delta_x$ as $\sigma \to 0$.
Lemma~\ref{lem:gamma-stability} below shows that any such family 
$(\mu^{(\sigma)})_{\sigma>0} \subseteq \mathcal{P}(V)$ converging weakly to 
$\delta_x$ satisfies the desired stability property.

\begin{lemma}[Stability of $\gamma_e$ under weak convergence]\label{lem:gamma-stability}
Fix $x\in V$ and an edge $e\in E$.
Let $(\mu^{(\sigma)})_{\sigma>0}\subset\mathcal{P}(V)$ satisfy $\mu^{(\sigma)}\Rightarrow \delta_x$ as $\sigma\downarrow 0$.
Then $\mu^{(\sigma)}(\gamma_e)\to \delta_x(\gamma_e)$ as $\sigma\downarrow 0$.
\end{lemma}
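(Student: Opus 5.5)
The plan is to write $\mu^{(\sigma)}(\gamma_e)$ as the integral of the indicator of the shadow set and use weak convergence directly:
\[
\mu^{(\sigma)}(\gamma_e)=\int_V \mathbf{1}_{\gamma_e}\,d\mu^{(\sigma)} .
\]
The key point is that $\mathbf{1}_{\gamma_e}$ is a legitimate test function in the definition of $\Rightarrow$ given in the preliminaries.

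\textbf{Step 1 (continuity of the test function).} The convergence $\mu_n\Rightarrow\mu$ is defined on $\mathcal{P}(V)$ with test functions $f:V\to\mathbb{R}$. Here $V$ is a finite vertex set carrying the restriction of the shortest-path metric $d$. Since all edge lengths are positive, $V$ is a finite metric space, so every point is isolated and the topology is discrete. Hence every function on $V$ is continuous. In particular, $f=\mathbf{1}_{\gamma_e}$ is continuous on $V$, and it is bounded by $1$.

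\textbf{Step 2 (apply weak convergence).} Applying the definition of $\mu^{(\sigma)}\Rightarrow\delta_x$ to this $f$ gives
\[
\int_V \mathbf{1}_{\gamma_e}\,d\mu^{(\sigma)}\;\longrightarrow\;\int_V \mathbf{1}_{\gamma_e}\,d\delta_x .
\]
The left side is $\mu^{(\sigma)}(\gamma_e)$ and the right side is $\mathbf{1}_{\gamma_e}(x)=\delta_x(\gamma_e)$, which is the claim.

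\textbf{Alternative route via pointwise convergence.} Alternatively, one can argue through masses. Taking $f=\mathbf{1}_{\{v\}}$ shows $\mu^{(\sigma)}(v)\to\delta_x(v)$ for each $v\in V$. Summing over the finitely many $v\in\gamma_e$ gives the result, since a finite sum of convergent terms converges to the sum of the limits. This makes explicit that no uniformity issues arise.

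\textbf{Main obstacle.} The only step needing care is Step 1. The indicator of a set is generally discontinuous, and on the metric graph $G$ (the 1D space) $\gamma_e$ would be problematic. The argument relies on the measures living on the discrete set $V$, where weak convergence amounts to convergence of each atom's mass. I will state explicitly that the measures are supported on $V$ and that $V$ is finite with the discrete topology, so that no boundary ($\mu$-continuity set) condition from the portmanteau theorem is needed. If one instead regarded $\mu^{(\sigma)}$ as measures on $G$, I would invoke the portmanteau theorem. In that case $\partial\gamma_e$ (taken within the support $V$) is empty, so $\gamma_e$ is a $\delta_x$-continuity set and the same conclusion follows.
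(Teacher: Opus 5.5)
Your argument is correct, and it takes a different, more elementary route than the paper. The paper first argues that $\gamma_e$ is closed, via uniqueness of root paths. It then splits into cases on the position of $x$ and uses the portmanteau theorem in each:
\begin{itemize}
\item when $x\notin\gamma_e$, the closed-set bound $\limsup_{\sigma}\mu^{(\sigma)}(\gamma_e)\le\delta_x(\gamma_e)=0$;
\item when $x\in\mathrm{int}(\gamma_e)$, the open-set bound $\liminf_{\sigma}\mu^{(\sigma)}(\gamma_e)\ge 1$;
\item a separate case for $x\in\partial\gamma_e$.
\end{itemize}

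You observe instead that $V$ is finite with the discrete topology. So $\mathbf{1}_{\gamma_e}$ is itself a bounded continuous test function, and the lemma is a single instance of the definition of $\Rightarrow$. Equivalently, it is convergence of finitely many atom masses, summed over $\gamma_e$.

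Besides being shorter, your route shows why the paper's boundary case is harmless. As written, that case is inconsistent. A closed $\gamma_e$ contains its boundary, so $x\in\partial\gamma_e$ would give $\delta_x(\gamma_e)=1$ rather than $0$, and the closed-set bound would then say nothing. But in the discrete topology $\partial\gamma_e=\emptyset$, so the case never arises. The paper's proof goes through only because the setting is discrete anyway.

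One correction to your fallback remark about measures on the metric graph $G$. In $G$ itself, $\partial\gamma_e=\gamma_e$, so $\gamma_e$ is not a $\delta_x$-continuity set there when $x\in\gamma_e$. You can fix this in either of two ways:
\begin{itemize}
\item pass to the closed subspace $V$ first, which is legitimate because all measures are supported on $V$ and functions on the finite set $V$ extend continuously to $G$; or
\item apply the closed-set bound to both $\gamma_e$ and $V\setminus\gamma_e$, and use that the total mass on $V$ is $1$.
\end{itemize}
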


\begin{proof}
First, $\gamma_e$ is closed: if $y_n\to y$ and $e\subset [z_0,y_n]$ for all $n$,
uniqueness of shortest paths to $z_0$ implies $[z_0,y_n]\to [z_0,y]$ as sets,
hence $e\subset [z_0,y]$.

We distinguish the three cases depending on the relative position of $x$ with respect to $\gamma_e$:

If $x\notin \gamma_e$, Portmanteau theorem~\citep{billingsley2013convergence} gives $\limsup_{\sigma}\mu^{(\sigma)}(\gamma_e)\le \delta_x(\gamma_e)=0$.

If $x\in \mathrm{int}(\gamma_e)$, then for some open $U\subset\gamma_e$ with $x\in U$,
Portmanteau yields $\liminf_{\sigma}\mu^{(\sigma)}(\gamma_e)\ge \liminf_{\sigma}\mu^{(\sigma)}(U)\ge 1$,
hence the limit is $1=\delta_x(\gamma_e)$.

If $x$ lies on $\partial\gamma_e$, then necessarily $\delta_x(\gamma_e)=0$ and
again $\limsup_{\sigma}\mu^{(\sigma)}(\gamma_e)\le 0$. Thus in all cases $\mu^{(\sigma)}(\gamma_e)\to \delta_x(\gamma_e)$.
\end{proof}

\begin{lemma}[Vanishing of $S_p$ to a Dirac]\label{lem:Sp-to-dirac}
Under the same assumptions as above, for any $1\le p<\infty$,
\begin{equation}
S_p\!\big(\mu^{(\sigma)},\delta_x\big)\ \xrightarrow[\sigma\downarrow 0]{}\ 0.    
\end{equation}
\end{lemma}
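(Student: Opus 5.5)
The plan is to work directly from the discrete closed form \eqref{eq:Sp-discrete}. Applied to the pair $(\mu^{(\sigma)},\delta_x)$, it reads
\begin{equation*}
S_p\big(\mu^{(\sigma)},\delta_x\big)^p=\sum_{e\in E}\lambda(e)\,\big|\mu^{(\sigma)}(\gamma_e)-\delta_x(\gamma_e)\big|^p .
\end{equation*}
This is a sum over the edge set $E$. Since $G$ is a finite graph, $E$ is a finite index set that does not depend on $\sigma$. The claim therefore reduces to showing that each summand tends to zero, after which the finite sum, and its $p$-th root, also tend to zero.

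For a fixed edge $e$, Lemma~\ref{lem:gamma-stability} gives $\mu^{(\sigma)}(\gamma_e)\to\delta_x(\gamma_e)$ as $\sigma\downarrow0$. Because $1\le p<\infty$, the map $t\mapsto|t|^p$ is continuous at $0$, so $\big|\mu^{(\sigma)}(\gamma_e)-\delta_x(\gamma_e)\big|^p\to0$. The weights satisfy $\lambda(e)<\infty$ by the standing assumption on $\lambda$, so each term $\lambda(e)\,|\cdot|^p$ tends to zero.

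Summing these finitely many null terms gives $S_p(\mu^{(\sigma)},\delta_x)^p\to0$. Since $s\mapsto s^{1/p}$ is continuous at $0$, we conclude $S_p(\mu^{(\sigma)},\delta_x)\to0$. Uniform domination of the terms, as in a dominated-convergence argument, is not needed: each difference lies in $[0,1]$, and finiteness of $E$ already gives the bound $\sum_e\lambda(e)<\infty$.

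The substantive work is already done in Lemma~\ref{lem:gamma-stability}, namely the convergence of the cut masses, including the boundary case $x\in\partial\gamma_e$. In the present proof the only point needing care is justifying use of the discrete form \eqref{eq:Sp-discrete} rather than the integral form \eqref{eq:Sp-closed}. That identification relies on $\lambda$ being non-atomic and on the unique-path-to-root assumption, both of which are part of the setting. If one preferred to work from \eqref{eq:Sp-closed}, one would instead apply bounded convergence on each edge, using that $\mu(\Lambda(\cdot))$ is constant along the interior of each edge and equals the corresponding cut mass $\mu(\gamma_e)$.
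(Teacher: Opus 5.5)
Your proof is correct and follows the paper's argument. Like the paper, you apply the discrete form \eqref{eq:Sp-discrete} with $\nu=\delta_x$, get termwise convergence from Lemma~\ref{lem:gamma-stability}, and pass the limit through the sum. The only difference is cosmetic: the paper justifies the interchange by dominated convergence (summands bounded by $\lambda(e)$ with $\sum_{e}\lambda(e)<\infty$), which would also handle a countable edge set with summable weights, whereas you use finiteness of $E$ directly, which suffices in the paper's finite-graph setting.
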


\begin{proof}
Apply \eqref{eq:Sp-discrete} with $\nu=\delta_x$:
\begin{equation}
S_p\!\big(\mu^{(\sigma)},\delta_x\big)^p
=\sum_{e\in E}\lambda(e)\,\big|\mu^{(\sigma)}(\gamma_e)-\delta_x(\gamma_e)\big|^p.    
\end{equation}

By Lemma~\ref{lem:gamma-stability}, each summand converges to $0$.
Since $|\mu^{(\sigma)}(\gamma_e)-\delta_x(\gamma_e)|\le 1$ and $\sum_{e\in E}\lambda(e)<\infty$,
the dominated convergence theorem gives $S_p(\mu^{(\sigma)},\delta_x)\to 0$.
\end{proof}

\begin{lemma}[Two basic facts for $W_1$]\label{lem:W1-basics}
For any metric space $(G,d)$ and any $x,y\in V$:
\begin{enumerate}
\item $W_1(\delta_x,\delta_y)=d(x,y)$.
\item If $\mu^{(\sigma)}\Rightarrow \delta_x$ and either $\mathrm{diam}(G)<\infty$ or $\int d(z,x)\,\mu^{(\sigma)}(dz)\to 0$,
then $W_1(\mu^{(\sigma)},\delta_x)\to 0$.
\end{enumerate}
\end{lemma}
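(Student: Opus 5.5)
For part 1, I will use that the only coupling of $\delta_x$ and $\delta_y$ is the product $\delta_x\otimes\delta_y$, so the infimum defining $W_1$ is attained there. The transport cost is then $\int d\,d(\delta_x\otimes\delta_y)=d(x,y)$, which gives $W_1(\delta_x,\delta_y)=d(x,y)$ directly. Equivalently, Kantorovich--Rubinstein duality with the $1$-Lipschitz test function $f=d(\cdot,y)$ gives the lower bound, and the product coupling gives the matching upper bound.

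For part 2, I will again use that $\mu^{(\sigma)}$ and $\delta_x$ have exactly one coupling, namely $\mu^{(\sigma)}\otimes\delta_x$. This gives the exact identity
\[
W_1(\mu^{(\sigma)},\delta_x)=\int_G d(z,x)\,\mu^{(\sigma)}(dz).
\]
The claim therefore reduces to showing that the first moment about $x$ tends to $0$. Under the second alternative hypothesis this is exactly what is assumed, so that case is immediate.

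Under $\mathrm{diam}(G)<\infty$, I will show that $z\mapsto d(z,x)$ is bounded (by $\mathrm{diam}(G)$) and continuous (indeed $1$-Lipschitz). It is then an admissible test function for weak convergence, and $\mu^{(\sigma)}\Rightarrow\delta_x$ yields $\int d(z,x)\,\mu^{(\sigma)}(dz)\to d(x,x)=0$. If one prefers to avoid relying on the exact formula, the same bound follows from taking the product coupling as a competitor in the infimum, which gives $W_1\le\int d(z,x)\,d\mu^{(\sigma)}$.

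The only point needing care is confirming that $d(\cdot,x)$ is continuous and bounded in the topology used for weak convergence on $V$ (or $G$). Continuity follows from the triangle inequality. Boundedness holds because the diameter is finite, and it holds automatically for a finite connected graph, which is the setting of interest. No step here is a genuine obstacle.
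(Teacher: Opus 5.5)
Your proof is correct. Part 1 is essentially the paper's argument: the paper uses Kantorovich--Rubinstein duality with $f=-d(\cdot,x)$, and you add the more direct remark that the product is the only coupling. The moment alternative of part 2 is also identical, since the paper's trivial coupling $(\mathrm{Id},x)_{\#}\mu^{(\sigma)}$ is exactly your $\mu^{(\sigma)}\otimes\delta_x$.

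You take a genuinely different route in the case $\mathrm{diam}(G)<\infty$. The paper stays on the dual side. It asserts that the $1$-Lipschitz class is uniformly bounded and equicontinuous, so that weak convergence forces $\sup_{\mathrm{Lip}(f)\le 1}\{\int f\,d\mu^{(\sigma)}-f(x)\}\to 0$. This tacitly needs two things: a normalization such as $f(x)=0$ to make the class bounded, and an Arzel\`a--Ascoli or Ranga Rao type uniformity result to pass from convergence for each fixed $f$ to convergence of the supremum.

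You instead note that the coupling set with a Dirac target is a singleton, so that $W_1(\mu^{(\sigma)},\delta_x)=\int d(z,x)\,\mu^{(\sigma)}(dz)$ exactly. A single bounded continuous test function, $d(\cdot,x)$, then finishes the proof straight from the definition of weak convergence. Your route is more elementary and fully rigorous. It also shows that the finite-diameter hypothesis is just a sufficient condition for the moment hypothesis, so the two alternatives collapse into one argument. Indeed, for a Dirac limit the paper's dual argument reduces to yours once made precise, because $\int f\,d\mu^{(\sigma)}-f(x)\le\int d(z,x)\,\mu^{(\sigma)}(dz)$ for every $1$-Lipschitz $f$.

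What the paper's dual viewpoint buys, if carried out carefully, is generality. It is the standard proof that $W_1$ metrizes weak convergence on bounded separable spaces for an arbitrary limit $\mu$. Your exact formula works only for a Dirac limit, but that is all this lemma requires.
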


\begin{proof}
(1) By the Kantorovich--Rubinstein duality,
$W_1(\delta_x,\delta_y)=\sup_{\mathrm{Lip}(f)\le 1}\{f(x)-f(y)\}=d(x,y)$
(the upper bound is immediate; equality is attained e.g.\ by $f=-d(\cdot,x)$).
(2) If $\mathrm{diam}(G)<\infty$, then the class of 1-Lipschitz functions is uniformly bounded and equicontinuous,
hence weak convergence implies convergence of the dual objective to $0$; thus $W_1(\mu^{(\sigma)},\delta_x)\to 0$.
Alternatively, for any $\sigma$, consider the coupling 
$\pi^{(\sigma)}=(\mathrm{Id},x)_{\#}\mu^{(\sigma)}$, i.e., the pushforward of $\mu^{(\sigma)}$
under the map $z\mapsto (z,x)$. This is the trivial coupling sending all mass of $\mu^{(\sigma)}$ 
to the point $x$, and it yields
\begin{equation}
W_1(\mu^{(\sigma)},\delta_x)\le \int d(z,x)\,\mu^{(\sigma)}(dz),    
\end{equation}
which vanishes by assumption.
\end{proof}

\subsection{Main propositions}\label{sec:main_propositions_dirac_limit}
We present Dirac-limit flatness results for two concentration mechanisms:
Gaussian localization as $\sigma\downarrow0$ and the lazy random-walk family as $\alpha\to1$ (used in SRF).

\paragraph{Dirac-limit flatness via Gaussian / heat-kernel-like.}
We first treat the canonical Dirac concentration $\mu_x^{(\sigma)}\Rightarrow\delta_x$ as $\sigma\downarrow0$,
which covers Gaussian/heat-kernel localizations.

\begin{proposition}[Dirac-limit consistency of $S_p$]\label{prop:SRC-dirac}
Let $x,y\in V$ and $1\le p<\infty$.
Suppose $\mu_x^{(\sigma)},\mu_y^{(\sigma)}\in\mathcal{P}(V)$ satisfy
$\mu_x^{(\sigma)}\Rightarrow \delta_x$ and $\mu_y^{(\sigma)}\Rightarrow \delta_y$ as $\sigma\downarrow 0$.
Then
\begin{equation}
S_p\!\big(\mu_x^{(\sigma)},\mu_y^{(\sigma)}\big)\ \longrightarrow\ S_p(\delta_x,\delta_y),
\qquad\text{hence}\qquad
\kappa_{\mathrm{S_p}}\!\big(\mu_x^{(\sigma)},\mu_y^{(\sigma)};x,y\big)\ \longrightarrow\ 0.
\end{equation}
\end{proposition}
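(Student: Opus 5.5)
The plan is to reduce everything to Lemma~\ref{lem:Sp-to-dirac} via the metric structure of $S_p$. Since $S_p$ is a metric on $\mathcal{P}(V)$, the triangle inequality gives
\[
\bigl|S_p(\mu_x^{(\sigma)},\mu_y^{(\sigma)})-S_p(\delta_x,\delta_y)\bigr|
\le S_p(\mu_x^{(\sigma)},\delta_x)+S_p(\mu_y^{(\sigma)},\delta_y).
\]
Applying Lemma~\ref{lem:Sp-to-dirac} twice, once for $\mu_x^{(\sigma)}\Rightarrow\delta_x$ and once for $\mu_y^{(\sigma)}\Rightarrow\delta_y$, shows that the right-hand side tends to $0$ as $\sigma\downarrow0$. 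This yields the first convergence claim.

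As a backup that avoids invoking the metric property, I would work directly with the discrete form \eqref{eq:Sp-discrete}. By Lemma~\ref{lem:gamma-stability}, each cut mass satisfies $\mu_x^{(\sigma)}(\gamma_e)\to\delta_x(\gamma_e)$, and similarly for $y$. So each summand $\lambda(e)|\mu_x^{(\sigma)}(\gamma_e)-\mu_y^{(\sigma)}(\gamma_e)|^p$ converges to $\lambda(e)|\delta_x(\gamma_e)-\delta_y(\gamma_e)|^p$. The summands are dominated by $\lambda(e)$, and $E$ is finite (or $\sum_e\lambda(e)<\infty$), so the sum converges. Taking $p$-th roots, which is continuous, gives the same conclusion.

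For the curvature, fix $x\neq y$; the case $x=y$ is $0$ by convention. The denominator $S_p(\delta_x,\delta_y)$ does not depend on $\sigma$. It is strictly positive because $S_p$ is a metric and $\delta_x\neq\delta_y$. Concretely, any edge $e$ on the path separating $x$ from $y$ has $|\delta_x(\gamma_e)-\delta_y(\gamma_e)|=1$, and this needs $\lambda(e)>0$ for such an edge. Hence
\[
\kappa_{\mathrm{S_p}}\bigl(\mu_x^{(\sigma)},\mu_y^{(\sigma)};x,y\bigr)=1-\frac{S_p(\mu_x^{(\sigma)},\mu_y^{(\sigma)})}{S_p(\delta_x,\delta_y)}\longrightarrow 1-1=0 .
\]

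The only delicate point is the positivity of $S_p(\delta_x,\delta_y)$. It is needed both to make the ratio well defined and to pass to the limit, and it relies on $\lambda$ charging the edges between $x$ and $y$. I would state this explicitly, noting that it holds for the tree constructions with positive edge lengths used in the paper. Everything else follows from the lemmas already established.
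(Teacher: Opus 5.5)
Your proof is correct and follows essentially the same route as the paper: the triangle inequality for $S_p$, Lemma~\ref{lem:Sp-to-dirac} applied to each of the two measures, then the curvature limit using $S_p(\delta_x,\delta_y)>0$ for $x\neq y$ and the convention for $x=y$. Your remark that this positivity needs $\lambda$ to charge some edge separating $x$ from $y$ is a fair sharpening of what the paper simply takes from the metric property of $S_p$.
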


\begin{proof}
By the triangle inequality for the metric $S_p$,
\begin{equation}
\big|S_p(\mu_x^{(\sigma)},\mu_y^{(\sigma)})-S_p(\delta_x,\delta_y)\big|
\le S_p(\mu_x^{(\sigma)},\delta_x)+S_p(\mu_y^{(\sigma)},\delta_y).    
\end{equation}

Both terms on the right vanish by Lemma~\ref{lem:Sp-to-dirac}.
If $x\neq y$ then $S_p(\delta_x,\delta_y)>0$ and the curvature converges to $0$ by definition;
if $x=y$ the claim is trivial by our convention.
\end{proof}

\begin{proposition}[Dirac-limit consistency of ORC]\label{prop:ORC-dirac}
Let $x,y\in V$.
Suppose $\mu_x^{(\sigma)}\Rightarrow \delta_x$ and $\mu_y^{(\sigma)}\Rightarrow \delta_y$ as $\sigma\downarrow 0$,
and either $\mathrm{diam}(G)<\infty$ or
$\int d(z,x)\,\mu_x^{(\sigma)}(dz)\to 0$ and $\int d(w,y)\,\mu_y^{(\sigma)}(dw)\to 0$.
Then
\begin{equation}
W_1\!\big(\mu_x^{(\sigma)},\mu_y^{(\sigma)}\big)\ \longrightarrow\ d(x,y),
\qquad\text{hence}\qquad
\kappa_{\mathrm{ORC}}\!\big(\mu_x^{(\sigma)},\mu_y^{(\sigma)};x,y\big)\ \longrightarrow\ 0.    
\end{equation}
\end{proposition}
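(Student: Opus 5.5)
The proof mirrors that of Proposition~\ref{prop:SRC-dirac}, with $S_p$ replaced by $W_1$ and Lemma~\ref{lem:Sp-to-dirac} replaced by Lemma~\ref{lem:W1-basics}. The first step is to use that $W_1$ is a metric on $\mathcal{P}(V)$ (with finite first moments, which holds here since the measures are supported on $V$ and $G$ has finite diameter or the stated moment conditions hold). The triangle inequality, applied twice, gives
\begin{equation*}
\bigl|W_1(\mu_x^{(\sigma)},\mu_y^{(\sigma)})-W_1(\delta_x,\delta_y)\bigr|
\le W_1(\mu_x^{(\sigma)},\delta_x)+W_1(\mu_y^{(\sigma)},\delta_y).
\end{equation*}

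Next, I invoke part (2) of Lemma~\ref{lem:W1-basics} separately for $x$ and for $y$. Under either alternative hypothesis, finite diameter or vanishing first moments about the centers, both terms on the right tend to $0$ as $\sigma\downarrow0$. Part (1) of the same lemma identifies $W_1(\delta_x,\delta_y)=d(x,y)$, which yields the first convergence claim.

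For the curvature, I split into cases. If $x\neq y$, then $d(x,y)>0$ is a fixed positive constant, so $W_1(\mu_x^{(\sigma)},\mu_y^{(\sigma)})/d(x,y)\to1$ and hence $\kappa_{\mathrm{ORC}}\to0$. If $x=y$, the curvature is $0$ by convention.

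The only delicate point is the applicability of part (2) of Lemma~\ref{lem:W1-basics}, in particular the finite-diameter branch, which relies on weak convergence implying convergence of the dual objective uniformly over $1$-Lipschitz functions. I take this as given from the lemma. Alternatively, on a finite vertex set it is immediate from the trivial coupling bound $W_1(\mu^{(\sigma)},\delta_x)\le\sum_z d(z,x)\mu^{(\sigma)}(z)$, since weak convergence on finite $V$ gives $\mu^{(\sigma)}(z)\to0$ for every $z\neq x$.
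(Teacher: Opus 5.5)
Your proposal is correct and follows the paper's proof: the triangle inequality for $W_1$ reduces the claim to $W_1(\mu_x^{(\sigma)},\delta_x)+W_1(\mu_y^{(\sigma)},\delta_y)\to 0$ via Lemma~\ref{lem:W1-basics}(2), and part (1) of that lemma identifies the limit as $d(x,y)$. Your explicit $x=y$ case is harmless. Your trivial-coupling argument on finite $V$ is a sound addition that makes the finite-diameter branch fully elementary, without relying on the lemma's equicontinuity sketch.
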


\begin{proof}
By the triangle inequality for $W_1$ and Lemma~\ref{lem:W1-basics},
\begin{equation}
\big|W_1(\mu_x^{(\sigma)},\mu_y^{(\sigma)})-W_1(\delta_x,\delta_y)\big|
\le W_1(\mu_x^{(\sigma)},\delta_x)+W_1(\mu_y^{(\sigma)},\delta_y)\ \longrightarrow\ 0.
\end{equation}

Since $W_1(\delta_x,\delta_y)=d(x,y)$ by Lemma~\ref{lem:W1-basics}(1),
we obtain the desired convergence of $W_1$ and hence of the curvature.
\end{proof}

\paragraph{Dirac-limit flatness via lazy random walk.}
For the flow-type neighborhood measure with parameter $\alpha\in(0,1]$,
define
\begin{equation}\label{eq:mu_alpha}
\mu_{x,\alpha} \;:=\; \alpha\,\delta_x \;+\; (1-\alpha)\,\nu_x,
\qquad
\nu_x \;:=\; \frac{1}{|N(x)|}\sum_{v\in N(x)}\delta_v .
\end{equation}
Then $\mu_{x,\alpha}\Rightarrow\delta_x$ as $\alpha\to1$ (on the finite set $V$, this is equivalent to
pointwise convergence of masses).

Recall that ST on a tree satisfying the unique-path-to-root property is
\[
S_p(\mu,\nu)^p \;=\; \sum_{e\in E}\lambda(e)\,\big|\mu(\mathbf{1}_{T_e})-\nu(\mathbf{1}_{T_e})\big|^p .
\]
Fix an edge $e\in E$ and write $m_{x,\alpha}(e):=\mu_{x,\alpha}(\mathbf{1}_{T_e})$.
By \eqref{eq:mu_alpha} and linearity,
\[
m_{x,\alpha}(e)
= \alpha\,\delta_x(\mathbf{1}_{T_e}) + (1-\alpha)\,\nu_x(\mathbf{1}_{T_e})
\xrightarrow[\alpha\to1]{}
\delta_x(\mathbf{1}_{T_e}).
\]
Hence for any $(x,y)$,
\[
\big|m_{x,\alpha}(e)-m_{y,\alpha}(e)\big|^p
\;\xrightarrow[\alpha\to1]{}\;
\big|\delta_x(\mathbf{1}_{T_e})-\delta_y(\mathbf{1}_{T_e})\big|^p.
\]
Since $0\le m_{x,\alpha}(e)\le1$, the above differences are uniformly bounded,
and the sum over $e\in E$ is finite; therefore we may pass the limit inside the sum to obtain
\[
\lim_{\alpha\to1} S_p(\mu_{x,\alpha},\mu_{y,\alpha})^p
=
\sum_{e\in E}\lambda(e)\,\big|\delta_x(\mathbf{1}_{T_e})-\delta_y(\mathbf{1}_{T_e})\big|^p
=
S_p(\delta_x,\delta_y)^p
= D_p(x,y)^p .
\]
Taking $p$-th roots yields $S_p(\mu_{x,\alpha},\mu_{y,\alpha})\to D_p(x,y)$, and thus
\[
\kappa_{S_p}^{(\alpha)}(x,y)
= 1 - \frac{S_p(\mu_{x,\alpha},\mu_{y,\alpha})}{D_p(x,y)}
\xrightarrow[\alpha\to1]{} 0,
\]
showing that SRC is flat in the Dirac limit.

\section{Complexity Analysis for Ricci Flow Curvatures}
\label{sec:complexity_appendix}

This appendix provides a complexity analysis of the Ricci flow variants studied in this paper.
In particular, we compare the computational cost of SRC with that of standard ORC-based Ricci flow,
clarifying the dependence on graph size and tree construction.

\subsection{Notation and Setup}
\label{subsec:complexity_notation}

Let $G=(V,E)$ be a graph with $n = |V|$ nodes and $m = |E|$ edges.
Throughout, we assume that localized neighborhood measures $\mu_x$ are supported on at most $k+1$ nodes
(i.e., the node itself and its $k$ neighbors), hence the effective support size is $O(k)$.
We write $T_{\mathrm{sk}}$ for the number of Sinkhorn iterations used in the entropic OT solver.
We analyze the cost of \emph{one Ricci-flow iteration}, i.e., one full pass of (i) curvature computation on edges and (ii) metric update.

In practice, the overall runtime is also affected by preprocessing steps such as all-pairs shortest paths or nearest-neighbor search.
These steps depend on the implementation and experimental setting; here we focus on the dominant per-iteration terms
that explain the relative cost among ORC and SRC variants.

\subsection{ORC Complexity (OTD/Sinkhorn Mix)}
GraphRicciCurvature computes ORC by solving an OT problem on each edge.
Here $k$ denotes the support size of local measures $\mu_x$, which in our graph setting is induced by one-hop neighborhoods, i.e., $|\mathrm{supp}(\mu_x)|=\deg(x)+1$ (Sec.~\ref{subsec:complexity_notation}).
In our experiments, we use the default option \texttt{OTDSinkhornMix},
which applies exact OT (OTD) for small neighborhoods and switches to Sinkhorn for large ones.
Thus, the per-edge cost is
$\tilde{O}(k^3)$ in the OTD regime and $\tilde{O}(T_{sk} k^2)$ in the Sinkhorn regime,
leading to an overall per-iteration cost of
$\tilde{O}(|E| \cdot \min\{k^3,\, T_{sk} k^2\})$
(up to preprocessing).

\paragraph{Cost per edge.}
With an entropic regularization, the Sinkhorn algorithm iteratively updates dual potentials or a transport scaling.
Each Sinkhorn iteration typically involves operations on a dense $k\times k$ kernel matrix,
leading to $O(k^2)$ arithmetic per iteration.
Therefore, the OT cost per edge scales as
\begin{equation}
  \tilde{O}(T_{sk} k^2),
\end{equation}
where $\tilde{O}(\cdot)$ hides logarithmic factors and small constant overheads.
(Some implementations include additional steps such as small-scale exact EMD for tiny supports or mixing strategies,
which do not change the dominant scaling.)

\paragraph{Cost per flow iteration.}
Computing ORC on all edges yields the per-iteration complexity
\begin{equation}
  \tilde{O}(T_{\mathrm{sk}}\, m\, k^2).
  \label{eq:orc_cost}
\end{equation}
Implementation-dependent preprocessing may additionally appear (e.g., shortest-path computations for initial edge lengths),
but the dominant term in ORC-based Ricci flow is the repeated OT solve per edge,
which results in large constant factors in practice.

\subsection{SRC Complexity: Tree Sobolev Transport}
\label{subsec:complexity_src}

SRC replaces the Wasserstein OT distance with ST on a tree-metric.
Given a rooted tree representation, the ST distance between two measures can be computed
by aggregating mass differences over tree edges, avoiding iterative OT solvers.

\paragraph{Tree representation and flow vectors.}
Consider a rooted tree on $n$ nodes.
Let the set of tree edges be indexed as $\{e_1,\dots,e_{n-1}\}$, with associated edge lengths $\{\lambda(e)\}$.
A standard property of tree transport is that for $p=1$,
the transport cost between $\mu_u$ and $\mu_v$ can be written as
\begin{equation}
  S_1(\mu_u,\mu_v)
  \;=\;
  \sum_{e \in \text{tree}} \lambda(e)\,\big|F_u(e)-F_v(e)\big|,
  \label{eq:tree_transport_flowform}
\end{equation}
where $F_u(e)$ denotes the total mass of $\mu_u$ whose root-path crosses edge $e$.
Hence the core computation reduces to constructing these aggregated values along the tree.

\paragraph{Cost to build aggregated representations.}
For each node $x$, the measure $\mu_x$ has support size $O(k)$.
To accumulate $F_x(e)$, one may push each support mass upward to the root along parent pointers.
If the average root-to-support depth is $h$ (depending on the tree construction),
then each support point requires $O(h)$ steps,
and the total aggregation cost over all nodes is
\begin{equation}
  O(n \cdot k \cdot h).
  \label{eq:src_aggregation_cost}
\end{equation}

\paragraph{Cost to evaluate curvature on edges.}
Once the aggregated tree representation is available,
computing $S_1(\mu_u,\mu_v)$ for a given edge $(u,v)$ requires evaluating the sum in~\eqref{eq:tree_transport_flowform}.
In our reference implementation, we explicitly store per-node flow vectors over all tree edges of the extracted tree $T_r$.
Since $|E(T_r)| = n-1$, evaluating $S_1(\mu_u,\mu_v)$ costs $O(n)$ time per graph edge.
Thus, curvature evaluation over all graph edges scales as
\begin{equation}
  O(mn)
  \quad \text{(dense evaluation)}.
  \label{eq:src_eval_cost_dense}
\end{equation}

The $O(mn)$ cost corresponds to dense evaluation; exploiting sparsity of neighborhood supports on the tree could reduce the practical cost.
Combining~\eqref{eq:src_aggregation_cost} and~\eqref{eq:src_eval_cost_dense}, we obtain the SRC per-iteration cost
\begin{equation}
  \tilde{O}(\texttt{tree-construction} \;+\; n k h \;+\; mn).
  \label{eq:src_generic_cost}
\end{equation}
Although SRC admits a closed-form evaluation on trees, in practice optimized Sinkhorn solvers can be faster in certain regimes due to highly optimized dense linear-algebra kernels (e.g., GPU/BLAS), despite the iterative nature of Sinkhorn.

\subsection{Tree Construction: SRC(SPT) vs.\ SRC(MST)}
\label{subsec:complexity_tree_construction}

SRC requires a tree structure, constructed at each Ricci-flow iteration using the current edge lengths.

\paragraph{SRC(SPT).}
SRC(SPT) constructs a shortest-path tree (SPT) rooted at a fixed node.
On a sparse graph, Dijkstra's algorithm yields
\begin{equation}
  \tilde{O}(m \log n)
  \label{eq:spt_cost}
\end{equation}
per iteration.

\paragraph{SRC(MST).}
SRC(MST) constructs a minimum spanning tree (MST).
Using standard algorithms (e.g., Kruskal), MST construction costs
\begin{equation}
  \tilde{O}(m \log n),
  \label{eq:mst_cost}
\end{equation}
followed by a linear-time rooting (BFS/DFS).
Thus, both trees have similar asymptotic construction cost.

\paragraph{Why SRC(SPT) is faster in practice.}
Although~\eqref{eq:spt_cost} and~\eqref{eq:mst_cost} share the same asymptotic order,
the subsequent transport aggregation depends on the induced average tree depth $h$.
SPT tends to preserve locality with respect to the current graph metric,
which typically yields shorter root-to-support paths than MST.
Consequently,
\begin{equation}
  h_{\mathrm{SPT}} < h_{\mathrm{MST}}
  \quad \Rightarrow \quad
  nk h_{\mathrm{SPT}} < nk h_{\mathrm{MST}},
\end{equation}
which explains the empirical ordering $\text{SRC(SPT)} < \text{SRC(MST)}$
observed in our runtime measurements.

\subsection{Putting Everything Together}
\label{subsec:complexity_summary}

Substituting tree construction costs into~\eqref{eq:src_generic_cost} yields:
\begin{align}
\mathrm{SRC(SPT)} &: \tilde{O}\!\left(m\log n \;+\; nk h_{\mathrm{SPT}} \;+\; mn\right), \\
\mathrm{SRC(MST)} &: \tilde{O}\!\left(m\log n \;+\; nk h_{\mathrm{MST}} \;+\; mn\right).
\end{align}
In comparison, ORC with Sinkhorn scales as~\eqref{eq:orc_cost}:
\begin{equation}
\mathrm{ORC}: \tilde{O}\!\left(T m k^2\right).
\end{equation}
Since $T$ is typically tens to hundreds in practice,
and since ORC requires repeated dense $k\times k$ updates per edge,
we expect ORC to be substantially more expensive than SRC-based flows.
Moreover, the difference $h_{\mathrm{SPT}} < h_{\mathrm{MST}}$ explains why SRC(SPT) is faster than SRC(MST) in practice,
even though both share the same asymptotic tree-construction cost.

\subsection{Concrete Scaling on SBM ($n=500$)}
\label{subsec:complexity_sbm_numbers}

To connect the above asymptotic bounds with our SBM runtime results,
we provide a back-of-the-envelope scaling estimate under the SBM setting in Table~\ref{tab:dataset_settings_synth} of Appendix~\ref{app:exp_settings}:
two equal-size communities ($250$+$250$), $P_{\mathrm{intra}}=0.15$, and
$P_{\mathrm{inter}}=\rho P_{\mathrm{intra}}$ with $\rho\in\{0.1,\dots,0.9\}$.

\paragraph{Expected degree and edge count.}
For a node, the expected degree is
\begin{equation}
\mathbb{E}[\deg] \;=\; (250-1)p_{\mathrm{in}} + 250 p_{\mathrm{out}}
\;=\; 37.35 + 37.5\rho,
\end{equation}
which ranges from $\approx 41$ ($\rho=0.1$) to $\approx 71$ ($\rho=0.9$).
The expected number of edges is
\begin{equation}
\mathbb{E}[m] \;=\; 2\binom{250}{2}p_{\mathrm{in}} + 250^2 p_{\mathrm{out}}
\;=\; 9337.5 + 9375\rho,
\end{equation}
ranging from $\approx 1.0\times 10^{4}$ to $\approx 1.8\times 10^{4}$.

\paragraph{Implication for ORC vs.\ SRC.}
In this SBM regime, the number of edges is on the order of $10^4$ (i.e., $m=\Theta(10^4)$) and the effective neighborhood support size is $O(k)$ with
$k$ on the order of the average degree.
Thus, the ORC per-iteration cost $\tilde{O}(T_{sk} m k^2)$ grows quadratically in $k$.
Increasing $\rho$ from $0.1$ to $0.9$ increases the expected degree from $\approx 41$ to $\approx 71$,
which amplifies the $k^2$ factor by roughly $(71/41)^2 \approx 3$, even before accounting for
the Sinkhorn iteration count $T_{sk}$.

In contrast, SRC avoids per-edge OT. In our reference implementation, the per-iteration cost is
$\tilde{O}(m\log n + nk\,h + mn)$, where the $nk\,h$ term accounts for aggregating measures over supports of size $k$ on the extracted tree, and the $mn$ term comes from dense curvature evaluation over $|E(T_r)|=n-1$ tree edges for each graph edge.
The factor $h$ reflects the depth of the induced tree used by ST; in practice, SPTs rooted at $z_0$ tend to have smaller depth than MSTs on these graphs.

Consequently, these considerations suggest that SRC(SPT) can be faster than SRC(MST) due to smaller $h$,
and that ORC can be substantially slower in this setting, consistent with the empirical runtimes
reported in Fig.~\ref{fig:sbm_lfr_real_runtime}(d).

\section{Experimental settings for community detection}\label{app:exp_settings}

This appendix documents the experimental protocol for reproducibility.
Table~\ref{tab:common_hparams} lists hyperparameters shared across all experiments
(e.g., flow iterations, SRC parameters, and Louvain settings).
For ORC-based baselines, we use the default settings of the \texttt{GraphRicciCurvature} and \texttt{iGraph} implementations, without additional tuning.
We run the flow for at most $T_{\mathrm{flow}}$ iterations and apply early stopping when the curvature change falls below a threshold $\varepsilon$, where
\begin{equation}
\Delta\kappa^{(t)} := \max_{e\in E}\left|\kappa_{e}^{(t)} - \kappa_{e}^{(t-1)}\right|,
\end{equation}

We set the maximum number of SRC flow iterations $T_{\mathrm{flow}}$ depending on the purpose of each experiment.
For the main SBM/LFR evaluations and the associated runtime measurements (Fig.~\ref{fig:sbm_lfr_real_runtime}(a),(b),(d)), we use
$T_{\mathrm{flow}}=50$ to match the standard setting and to enable a fair comparison under the same iteration budget.
For experiments on real-world networks, we use $T_{\mathrm{flow}}=20$ to prioritize stability and reproducibility,
as we observed that increasing the number of iterations can lead to unstable behavior on some small graphs.
For the remaining auxiliary analyses, $T_{\mathrm{flow}}=20$ is used since our goal is to capture qualitative trends
rather than to fully optimize the final converged solution.
We employ an early-stopping rule based on the change in curvature; however, under the above settings it was rarely triggered.
For ORC-based Ricci flow (\texttt{GraphRicciCurvature}), we follow the default configuration and use a maximum of 10 iterations.

Table~\ref{tab:dataset_settings_synth} and Table~\ref{tab:dataset_stats_real} summarize dataset-specific configurations.
Section~\ref{sec:profiles_of_datasets} provides dataset sources and licenses, and Section~\ref{sec:machine_spec} provides machine specification.

\begin{table}[t]
\centering
\caption{Common hyperparameters used across community detection experiments for SRF.}
\small
\begin{tabular}{l l}
\toprule
\textbf{Item} & \textbf{Value} \\
\midrule
Flow iterations $T_{\mathrm{flow}}$ & $20$ , early stop if $\Delta\kappa^{(t)} < \varepsilon$\\
Lazy RW parameter $\alpha$ & $0.5$\\
ST exponent $p$ (SRC) & $1$\\
Tree construction & MST / SPT \\
Root selection (SPT) &  $0$\\
Length-to-affinity $\beta$ & $1.0$\\
\bottomrule
\end{tabular}
\label{tab:common_hparams}
\end{table}

\begin{table}[t]
\centering
\small
\renewcommand{\arraystretch}{1.0}

\begin{minipage}[t]{0.57\linewidth}
\centering
\caption{Dataset-specific settings for synthetic graphs (SBM/LFR).}
\begin{tabular}{l p{0.78\linewidth}}
\toprule
\textbf{Dataset} & \textbf{Parameters} \\
\midrule
SBM &
$n=500$, $K=2$ (equal-size);\par
$P_{\mathrm{intra}}=0.15$, $P_{\mathrm{inter}}=\rho P_{\mathrm{intra}}$, $\rho\in\{0.1,0.2,\dots,0.9\}$;\par
$\mathbb{E}[\deg]=37.35+37.5\rho$, range $[41,71]$;\par
$\mathbb{E}[m]=9337.5+9375\rho$, range $[1.0,1.8]\times10^{4}$;\par
instances $=10$.\\
\midrule
LFR &
$n=500$;\par
$\mu\in\{0.1,0.2,\dots,0.9\}$;\par
$\bar{k}=20$, $k_{\max}=50$;\par
$\tau_1=3.0$, $\tau_2=1.5$;\par
sizes $\in[20,100]$;\par
$K$ from ground truth;\par
instances $=10$.\\
\bottomrule
\end{tabular}
\label{tab:dataset_settings_synth}
\end{minipage}
\hfill
\begin{minipage}[t]{0.40\linewidth}
\centering
\caption{Statistics of real-network benchmarks.}
\normalsize
\begin{tabular}{lrrr}
\toprule
\textbf{Dataset} & $n$ & $m$ & $K$ \\
\midrule
karate & 34 & 78 & 2 \\
football & 115 & 613 & 12 \\
polbooks & 105 & 441 & 3 \\
polblogs & 1222 & 16714 & 2 \\
email-eu-core & 986 & 16064 & 42 \\
\bottomrule
\end{tabular}
\label{tab:dataset_stats_real}
\end{minipage}

\end{table}

\subsection{Profiles of datasets}\label{sec:profiles_of_datasets}

We summarize below the datasets used in our experiments, along with their 
sources and licenses. For synthetic datasets, we specify the generation 
procedure or implementation used.

\begin{itemize}
    \item \textbf{Stochastic Block Model (SBM).}
    We generate graphs from the stochastic block model with a fixed number of nodes
    and planted community structure.
    Each graph is constructed by specifying intra-community and inter-community connection
    probabilities, denoted by $P_{\mathrm{intra}}$ and $P_{\mathrm{inter}}$, respectively.
    We report results across a range of separation regimes controlled by the ratio
    $P_{\mathrm{inter}} / P_{\mathrm{intra}}$.
    Unless otherwise specified, we use $n=500$ nodes and a fixed number of communities,
    and average results over multiple random seeds.

    \item \textbf{Lancichinetti--Fortunato--Radicchi (LFR).}
    We also evaluate on LFR benchmark graphs, which exhibit heterogeneous degree
    distributions and community sizes.
    The primary parameter is the mixing coefficient $\mu\in[0,1]$, which controls the fraction
    of edges that connect to nodes outside the ground-truth community.
    Smaller $\mu$ corresponds to clearer community structure.
    Unless otherwise specified, we generate graphs with $n=500$ nodes and vary $\mu$
    over a predefined grid, averaging results over multiple random seeds.

    \item \textbf{Real networks.}
    Finally, we evaluate on commonly used real-world community detection benchmarks:
    \texttt{karate} (Zachary's Karate Club), \texttt{football} (American college football),
    \texttt{polbooks} (Political Books), \texttt{polblogs} (Political Blogs),
    and \texttt{email-eu-core} (Email-Eu-core network).
    For each dataset, we use the standard ground-truth community labels provided
    in the original benchmark sources and report clustering quality using ARI.
\end{itemize}

\subsection{Machine specification}\label{sec:machine_spec}
All experiments were run on a server running Ubuntu 20.04 with 
two Intel(R) Xeon(R) Gold 6354 CPUs (3.00GHz, 36 cores each) and 252GB RAM.

\section{Ablation study on community detection}\label{app:ablation_study_on_comm_detection}

This appendix provides additional experimental details and ablation results for the Ricci-flow-based community detection pipeline.

We first summarize the downstream community detection procedure used throughout the paper (Louvain modularity maximization and the length-to-affinity conversion) in Section~\ref{app:louvain}, and then report ablations that isolate the effect of modeling choices in SRC, including (i) the transport exponent $p$ and the choice of downstream clustering rule in Section~\ref{sec:ablation_p_downstream_lfr}, (ii) the neighborhood-measure parameter $\alpha$ in the lazy random walk in Section~\ref{app:ablation-alpha-lfr},~\ref{app:sensitivity_to_alpha_small_nw}, (iii) the temperature parameter of $\beta$ in Section~\ref{sec:ablation_beta},  (iv) curvature distributions and statistics in Section~\ref{app:curvature_histograms_lfr}, and
(v) robustness to tree construction (random spanning trees) in Section~\ref{app:tree_robustness}.

\subsection{Louvain community detection}\label{app:louvain}

We use the Louvain method as the downstream community-detection routine after running Ricci-flow reweighting.
Given an undirected weighted graph $(V,E,\tilde w)$ with nonnegative affinity weights $\tilde w_{uv}$,
Louvain seeks a partition $\mathcal{C}=\{C_1,\dots,C_K\}$ that (approximately) maximizes the modularity
\begin{equation}\label{eq:modularity}
Q(\mathcal{C})
=\frac{1}{2m}\sum_{x,y\in V}\Big(\tilde w_{xy}-\frac{k_x k_y}{2m}\Big)\,\mathbf{1}\{c_x=c_y\},
\end{equation}
where $k_x=\sum_{y}\tilde w_{xy}$ is the weighted degree of $u$ and $m=\frac{1}{2}\sum_{x,y}\tilde w_{xy}$.
The Louvain algorithm performs greedy local node moves that increase $Q$, followed by graph coarsening
that aggregates each community into a super-node, and repeats these two phases until convergence.
In our experiments, Ricci flow produces edge \emph{lengths} $w_{uv}$; since modularity-based methods assume affinities,
we convert lengths to similarities via $\tilde w_{xy}=\exp(-\beta w_{xy})$ with fixed $\beta>0$ before applying Louvain.

\subsection{Ablation Study: Effect of $p$ and Downstream Clustering (LFR)}
\label{sec:ablation_p_downstream_lfr}

Our main experiments compare curvature-driven Ricci flows using community detection as a downstream task.
Among our SRC variants, SRC(SPT) is consistently the most computationally efficient,
while delivering competitive performance that is close to the ORC-based flow (see Section~\ref{sec:community_detection_with_ricciflow}).
For this reason, we focus the following ablation study on SRC(SPT) in order to isolate
the effect of modeling choices \emph{within} SRC without introducing additional computational confounders.

\paragraph{Setup.}
We conduct the ablation on the \textbf{LFR benchmark}, which provides a controlled knob for community recovery difficulty
via the mixing parameter $\mu$.
For each $\mu$, we run SRC(SPT)-based SRF for a fixed number of iterations,
and evaluate the resulting graph using two downstream clustering procedures:
\begin{itemize}
    \item \textbf{Louvain:} modularity-based community detection on similarity weights obtained from the learned edge lengths,
    \item \textbf{Length-cut:} direct clustering based on thresholding learned edge lengths as dissimilarities.
\end{itemize}
We repeat each configuration across multiple random graph instances and report both the mean and standard deviation of ARI.

\paragraph{Varying the SRC exponent $p$.}
SRC is defined using a ST cost $S_p(\mu_u,\mu_v)$, where $p$ controls how transport mass differences
are aggregated along the tree.
To assess sensitivity to this choice, we test $p\in\{1, 1.5, 2\}$.
Figure~\ref{fig:result_lfr_src_p_ablation_mu_series_method_p} summarizes the results as a function of $\mu$.

\paragraph{Results: Louvain is robust while Length-cut is unstable.}
The left panels of Fig.~\ref{fig:result_lfr_src_p_ablation_mu_series_method_p} show that
\textbf{Louvain-based evaluation yields stable performance} across different values of the SRC exponent $p$.
In the easy to moderately difficult regimes (e.g., $\mu \lesssim 0.3$), the ARI curves for
$p \in \{1, 1.5, 2\}$ largely overlap, indicating only a weak dependence on $p$.
Around the transition region ($\mu \approx 0.4$--$0.5$), performance drops sharply and the standard deviations increase,
reflecting the onset of a more challenging community recovery regime.
Beyond this region, performance degrades smoothly with no abrupt sensitivity to the choice of $p$.

In contrast, the right panels demonstrate that \textbf{length-cut evaluation is substantially more brittle}.
The achieved ARI is already low compared to Louvain-based evaluation, and some variability appears even at very low
mixing rates (e.g., $\mu=0.1$).
As $\mu$ increases, length-cut clustering quickly degenerates to near-zero ARI; once recovery fails, the results show
little dependence on the choice of $p$.
This behavior highlights that direct length-cut clustering can be unstable:
small fluctuations in the learned edge lengths, or in the threshold choice, can induce large changes in the resulting partitions.

\begin{figure}
    \centering
    \includegraphics[width=0.9\linewidth]{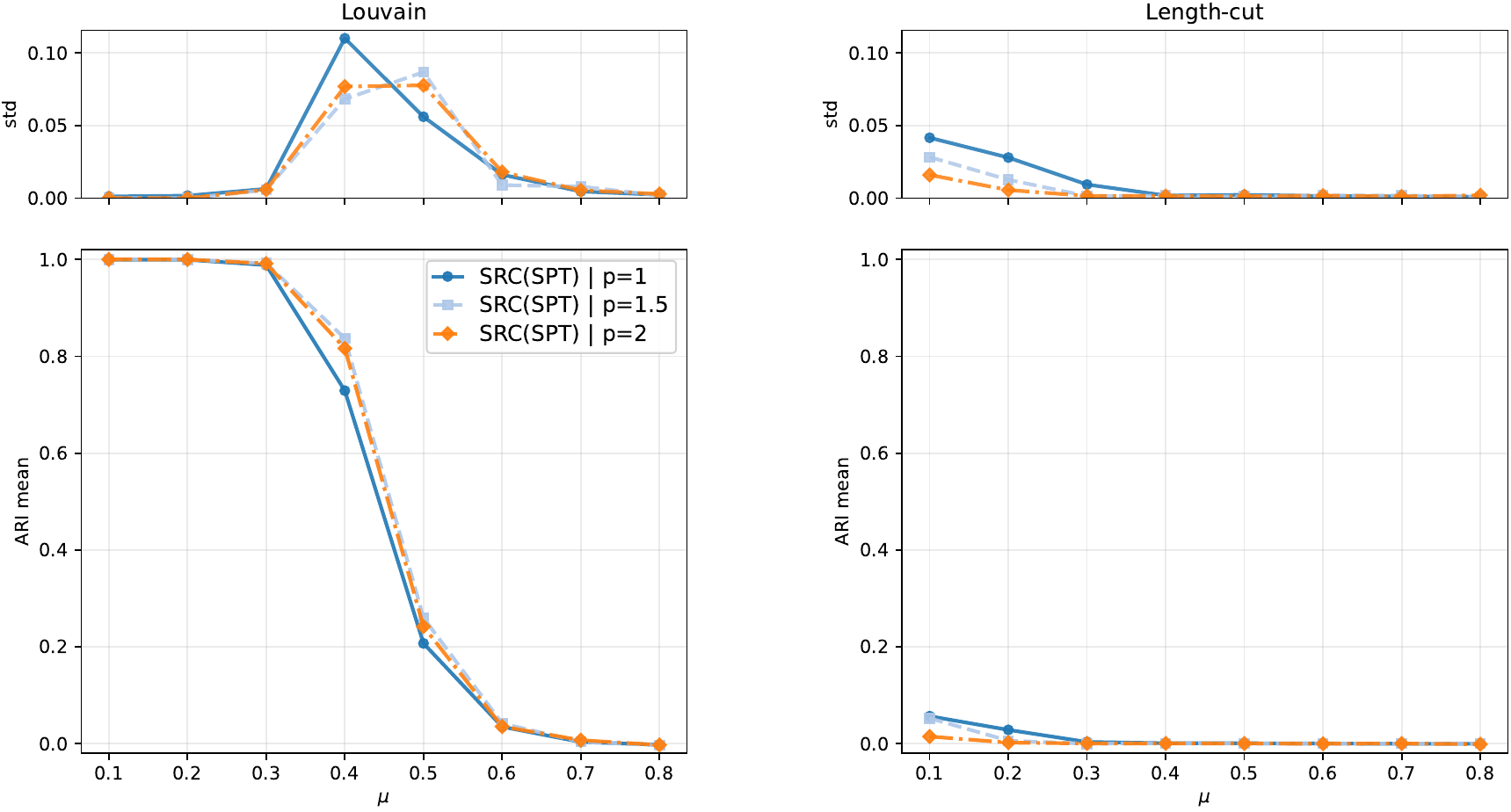}
    \caption{SRC(SPT) ablation on LFR: effect of the SRC exponent $p$ and downstream clustering.
We evaluate $p\in\{1,1.5,2\}$ using two downstream procedures:
Louvain on similarity weights (left) and threshold-based length-cut clustering (right).
Top panels show standard deviation of ARI across trials, and bottom panels show the mean ARI.
Louvain yields stable performance with little dependence on $p$,
whereas length-cut is unstable and substantially degrades ARI.}
    \label{fig:result_lfr_src_p_ablation_mu_series_method_p}
\end{figure}

\subsection{Ablation on the lazy random walk parameter $\alpha$ (LFR)}
\label{app:ablation-alpha-lfr}

In our framework, the curvature is computed from a neighborhood measure $\mu_x$ defined on the graph.
When using a lazy random walk in~\eqref{eq:mu_flow}, the measure is parameterized by $\alpha\in[0,1)$, which determines the self-loop (stay) probability.
Therefore, $\alpha$ directly controls the locality of the neighborhood distribution: larger $\alpha$ yields a more concentrated (near-Dirac) measure, while smaller $\alpha$ yields a more diffusive neighborhood.
To assess the robustness of our method to this design choice, we perform an ablation study by sweeping $\alpha$ and measuring the downstream clustering performance.

Figure~\ref{fig:alpha-ablation-lfr} shows the results on the LFR benchmark with $500$ nodes.
For each mixing parameter $\mu$, we run multiple random seeds and report the mean and standard deviation of ARI.
Overall, SRC(MST) is largely insensitive to $\alpha$ across the tested range of $\mu$, suggesting that its performance does not rely on a particular degree of locality in the lazy random walk measure.
In contrast, SRC(SPT) exhibits a mild dependence on $\alpha$ in the transition region (around $\mu\approx0.4$--$0.5$), where different $\alpha$ values lead to visibly different ARI means and increased variability.
These observations indicate that the interaction between the tree construction (SPT vs.\ MST) and the locality of $\mu_x^{\alpha}$ can affect performance near the detectability threshold, while both variants remain stable in the easy and hard regimes.

\begin{figure}[t]
  \centering
    \includegraphics[width=0.9\linewidth]{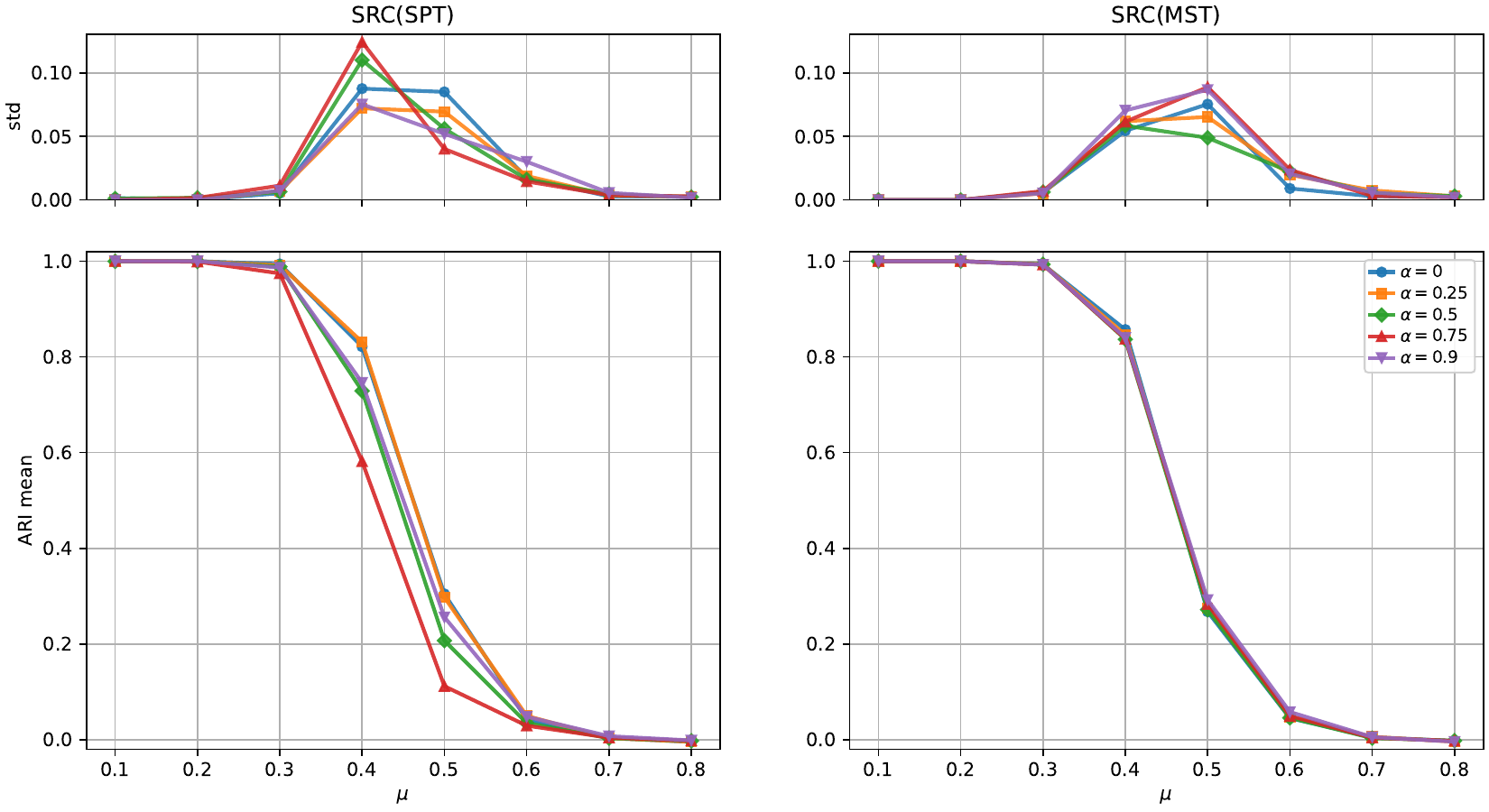}
  \caption{
    Ablation on the lazy random walk parameter $\alpha$ on LFR ($500$ nodes).
    We sweep $\alpha$ and plot the standard deviation (top row) and the mean (bottom row) of ARI over multiple random seeds as a function of the mixing parameter $\mu$.
    Left: SRC(SPT). Right: SRC(MST).
  }
  \label{fig:alpha-ablation-lfr}
\end{figure}

\subsection{Sensitivity to $\alpha$ on small networks (\texttt{Karate})}\label{app:sensitivity_to_alpha_small_nw}

As discussed in Section~\ref{sec:results_on_srf}, the \texttt{karate} network exhibits relatively poor ARI at $\alpha = 0.5$.
Figure~\ref{fig:alpha_ablation_real}(a) shows that this behavior is observed for both SRC(MST) and SRC(SPT).
While SRC(SPT) sometimes attains a higher mean ARI, the corresponding standard deviation is large, indicating unstable clustering results across runs.

We attribute this instability primarily to the extremely small graph size of the \texttt{karate} network.
In contrast, as shown in Fig.~\ref{fig:alpha_ablation_real}(b), the football network with 115 nodes exhibits consistently high ARI, and the performance is largely insensitive to the choice of $\alpha$ for both SRC(MST) and SRC(SPT).

These results suggest that the sensitivity to $\alpha$ is amplified in very small graphs, where stochastic effects and minor structural perturbations can significantly influence the detected communities.

\begin{figure}[t]
    \centering
    \begin{subfigure}[t]{0.48\linewidth}
        \centering
        \includegraphics[width=\linewidth]{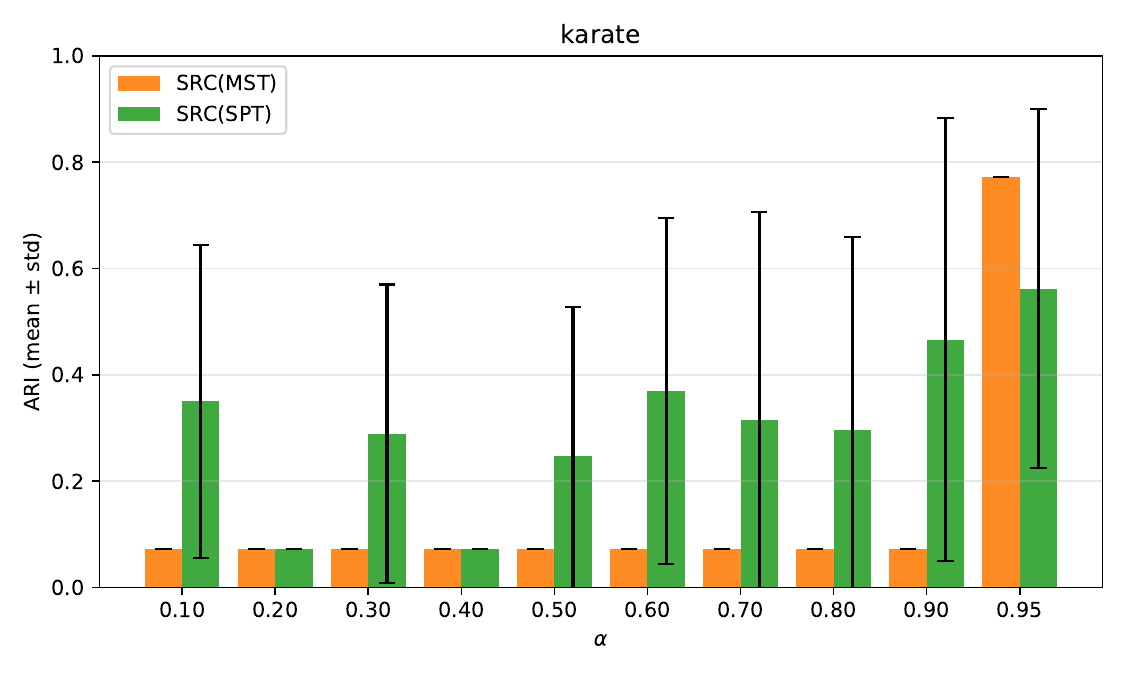}
        \caption{Karate}
        \label{fig:alpha_ablation_karate}
    \end{subfigure}\hfill
    \begin{subfigure}[t]{0.48\linewidth}
        \centering
        \includegraphics[width=\linewidth]{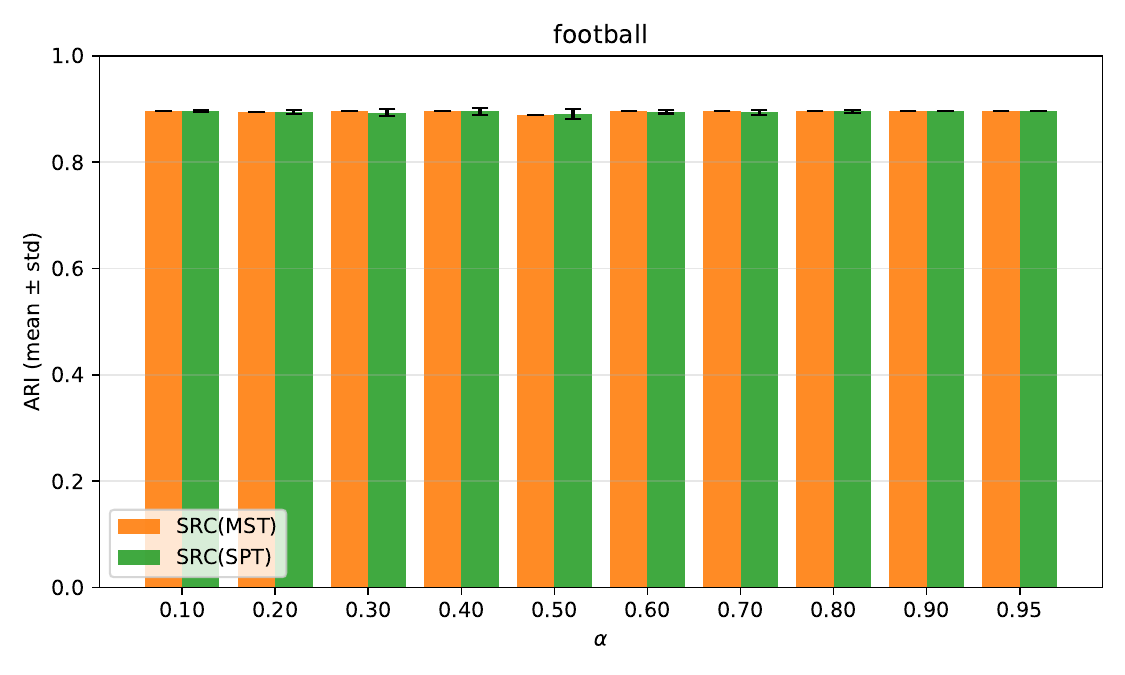}
        \caption{Football}
        \label{fig:alpha_ablation_football}
    \end{subfigure}
    \caption{Sensitivity to $\alpha$ on real-world networks. Error bars indicate std over runs.}
    \label{fig:alpha_ablation_real}
\end{figure}

\subsection{Ablation Study: Sensitivity to the Length-to-Similarity Temperature $\beta$}\label{sec:ablation_beta}

Our Ricci-flow procedures (SRC/ORC) evolve a set of \emph{edge weights} $\{w_{xy}\}_{\langle x,y\rangle\in E}$,
which are used as a \emph{length-like quantity} in shortest-path computations.
In contrast, modularity-based community detection algorithms such as Louvain operate on \emph{similarity weights}.
To apply Louvain after the flow, we therefore convert the learned values $w_{xy}$ into similarity weights
via a monotone transformation as in~\eqref{eq:length_to_similarity}
\begin{equation}
\label{eq:length_to_similarity_beta}
    \tilde{w}_{xy} \;=\; \exp(-\beta\, w_{xy}),
\end{equation}
where $\beta>0$ controls the sharpness of the conversion.
Larger $\beta$ emphasizes small values of $w_{xy}$ more strongly, while smaller $\beta$ yields a smoother weighting.
Since $\beta$ plays the role of a temperature parameter in an RBF-like kernel, it is natural to ask how sensitive the downstream clustering performance is to this choice.

\paragraph{Setup.}
We conduct this ablation on the LFR benchmark, using SRC(SPT) for efficiency.
For each mixing parameter $\mu$, we run the SRF for a fixed number of iterations and then apply Louvain clustering to the transformed similarities $\tilde{w}_{xy}$ in \eqref{eq:length_to_similarity_beta}.
To avoid favoring any particular $\beta$, we perform the same resolution grid search for Louvain under every $\beta$ value.
We report the mean and standard deviation over multiple random LFR instances.

\paragraph{Results.}
Figure~\ref{fig:result_lfr_beta_ari_and_res_2panel} summarizes the effect of $\beta$ on (left) ARI and (right) the selected Louvain resolution.
For moderate values of $\beta$ (e.g., $\beta \in \{0.01, 0.1, 1\}$), SRC(SPT)+Louvain is highly stable: the ARI curves almost overlap in the recoverable regimes (e.g., $\mu \le 0.3$), and the transition occurs around similar $\mu$ with comparable variability.
In contrast, larger $\beta$ values lead to noticeably earlier degradation. In particular, $\beta=5$ and $\beta=10$ reduce ARI substantially starting around the intermediate regime (e.g., $\mu \approx 0.4$--$0.5$), indicating that overly sharp length-to-weight conversion can hinder downstream community recovery.

The selected Louvain resolution (right panel) varies smoothly with $\mu$ and shows only moderate sensitivity to $\beta$ for $\beta \le 5$, while $\beta=10$ tends to select slightly smaller resolutions especially for larger $\mu$ (e.g., $\mu \gtrsim 0.5$).

\paragraph{Default choice.}
Overall, the ablation supports that SRF-based community detection is \emph{insensitive} to $\beta$ within a reasonable range,
but can fail when $\beta$ is excessively large.
Unless otherwise stated, we fix $\beta=1$ throughout the paper.
This choice is natural because the flow applies normalization at each iteration, yielding a stable scale for $w_{xy}$,
and thus $\beta=1$ serves as a robust default without additional tuning.

\begin{figure}[t]
    \centering
    \includegraphics[width=0.90\linewidth]{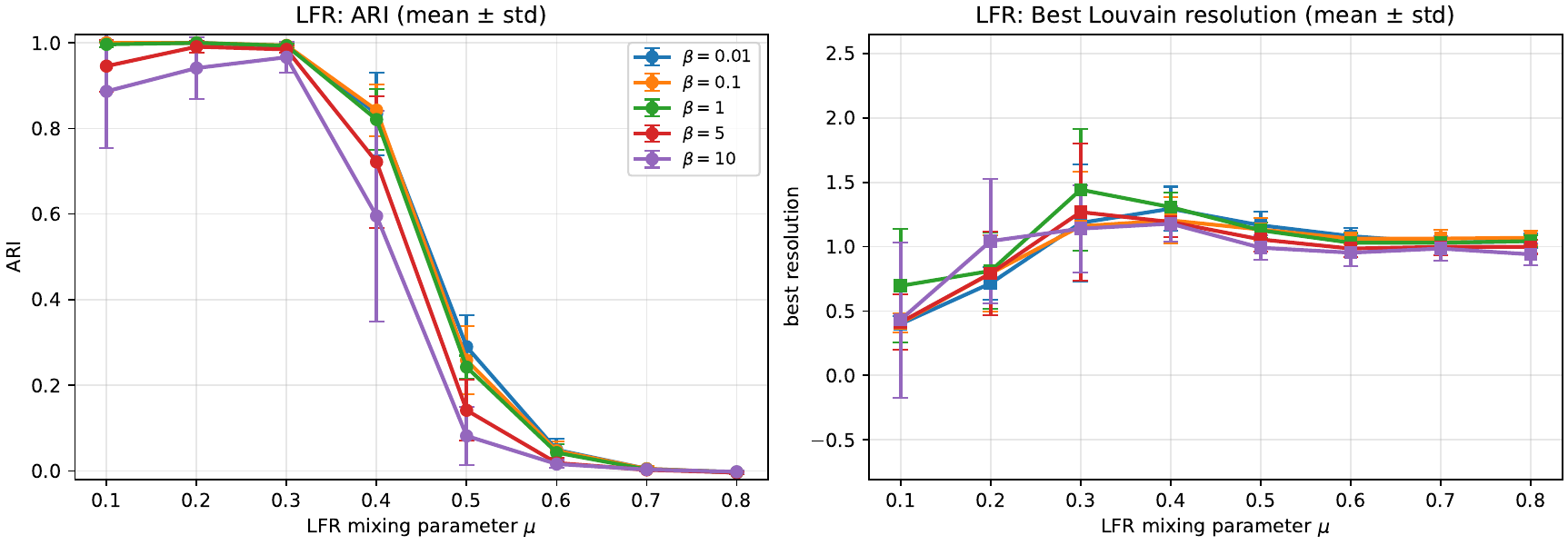}
    \caption{LFR benchmark: sensitivity to the length-to-weight temperature $\beta$ in $\tilde{w}_{xy}=\exp(-\beta w_{xy})$.
Left: ARI (mean $\pm$ std) of SRC(SPT)+Louvain after the SRF.
Right: best Louvain resolution selected by grid search (mean $\pm$ std).
Performance is stable for moderate $\beta$ (0.01--2), while overly large $\beta$ (5,10) collapses weights and degrades modularity-based clustering.}
    \label{fig:result_lfr_beta_ari_and_res_2panel}
\end{figure}

\subsection{Curvature histograms on LFR graphs}
\label{app:curvature_histograms_lfr}

To better understand the empirical behavior of SRC and how it differs from ORC, 
we visualize the distributions of edge-wise curvature values on LFR benchmark graphs.
Figure~\ref{fig:ricci_curvature_mu_compare} shows representative histograms for two mixing parameters, $\mu = 0.10$ and $\mu = 0.40$, 
where smaller $\mu$ corresponds to more clearly separated communities observed in Fig.~\ref{fig:sbm_lfr_real_runtime} (b).

\paragraph{ORC exhibits bimodality for highly separable graphs.}
A notable observation is that for $\mu = 0.10$, the ORC histogram displays a clear bimodal structure (two separated peaks).
This suggests that ORC assigns systematically different curvature values to two classes of edges, 
which can be interpreted as edges inside communities versus edges bridging across communities.
In other words, in an easy regime where the community structure is strong, ORC provides a sharp separation of edge types,
leading to high discriminability in curvature space.

\paragraph{SRC produces higher peaks but heavier negative tails.}
In contrast, SRC (both SRC(SPT) and SRC(MST)) tends to produce a higher concentration of curvature values, 
yielding sharper peaks than ORC.
At the same time, SRC also exhibits a noticeably longer tail toward low curvature values.
This behavior indicates that while SRC assigns relatively large curvature values to many edges (making the main mass more concentrated),
it still produces a non-negligible fraction of edges with strongly negative curvature.
Such negatively curved edges are likely to correspond to structurally important bridges or shortcuts, which SRC emphasizes more strongly.

\paragraph{Interpretation: tree-based aggregation induces smoothing yet preserves extreme edges.}
This systematic difference can be intuitively explained by the tree-based formulation of SRC.
Since SRC replaces the Wasserstein cost in ORC by ST on a tree, 
local mass discrepancies are aggregated along the tree paths.
This aggregation introduces a smoothing effect, pushing the bulk of curvature values upward compared to ORC.
Nevertheless, edges that align with long-range discrepancies in the tree-metric can accumulate large transport costs,
resulting in a long negative tail.
This observation is consistent with the qualitative trend described in Appendix~\ref{sec:intuition_on_curvature_values},
where SRC is often larger than ORC on the same graph, yet exhibits distinct extreme behavior.

\paragraph{Effect of increasing $\mu$.}
When $\mu$ increases to $0.40$, community separation becomes weaker and the curvature histograms become less structured.
The bimodality observed in ORC at $\mu=0.10$ diminishes, 
and the distributions of SRC and ORC become closer in shape, reflecting the reduced distinction between intra- and inter-community edges.
Overall, these histograms highlight that SRC retains similar qualitative sensitivity to community structure as ORC,
while producing systematically shifted and more concentrated curvature values.

\paragraph{Why can SRC still improve community detection even without clear histogram separation?}
Although the SRC histograms are not as clearly bimodal as ORC at $\mu=0.10$, 
SRC still yields strong Louvain-based community separation (see Fig.~\ref{fig:sbm_lfr_real_runtime}).
This is because the histogram visualizes only the \emph{marginal distribution} of curvature values,
while Louvain clustering depends on the \emph{graph-structured assignment} of edge weights.

Ablation results in Appendix~\ref{app:ablation_study_on_comm_detection} further show that directly clustering by thresholding
the learned lengths (length-cut) is considerably less reliable, exhibiting substantially larger variance and degraded ARI,
whereas Louvain-based post-processing remains stable.
Therefore, rather than requiring explicit separation in the curvature histogram, effective clustering in our pipeline is better
characterized empirically by the downstream partitions produced by Louvain from the flow-updated weights.
We leave a finer-grained analysis of how curvature-induced reweighting interacts with modularity optimization to future work.

\begin{figure}[t]
    \centering
    \begin{subfigure}[b]{0.48\linewidth}
        \centering
        \includegraphics[width=\linewidth]{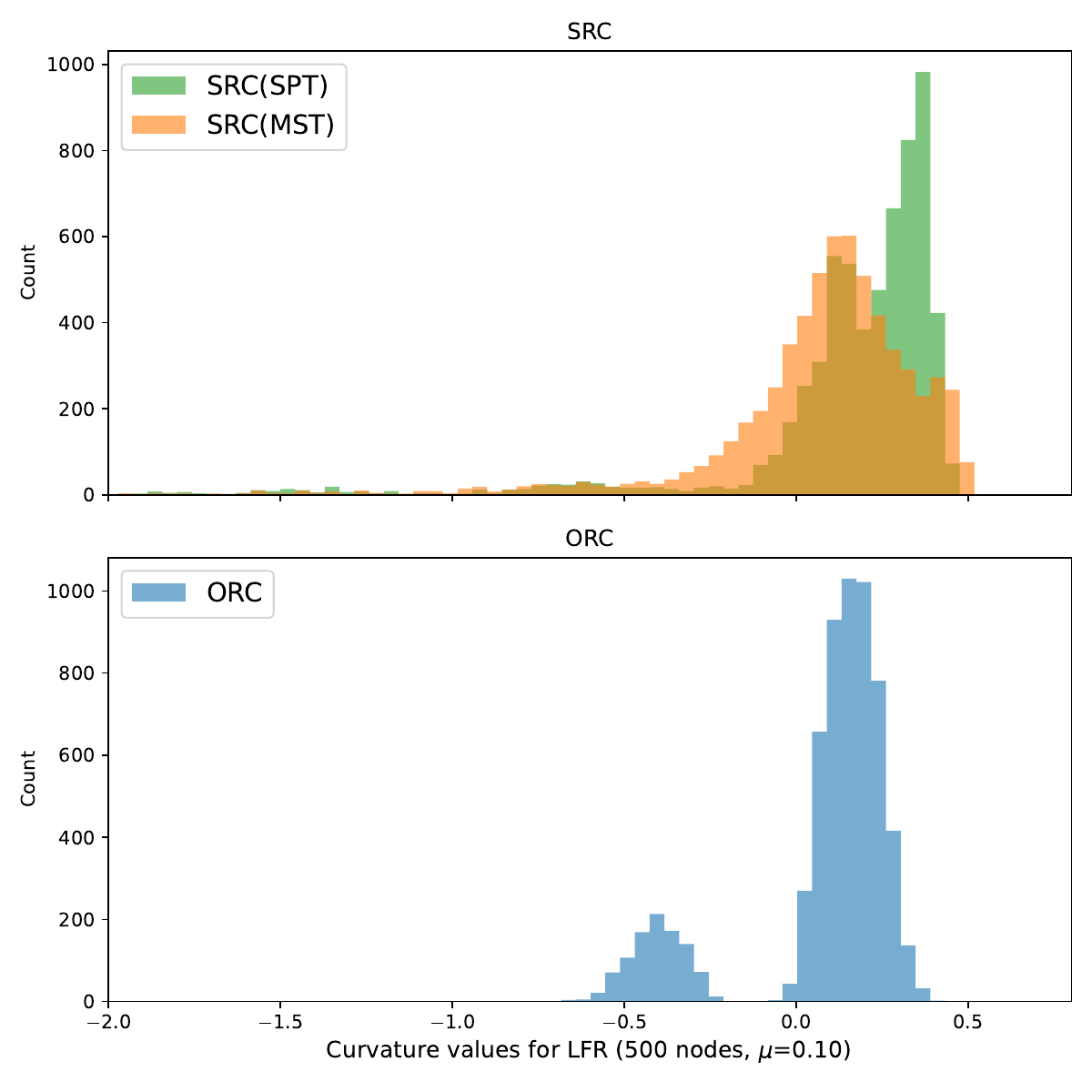}
        \caption{$\mu = 0.10$}
        \label{fig:ricci_mu010}
    \end{subfigure}
    \hfill
    \begin{subfigure}[b]{0.48\linewidth}
        \centering
        \includegraphics[width=\linewidth]{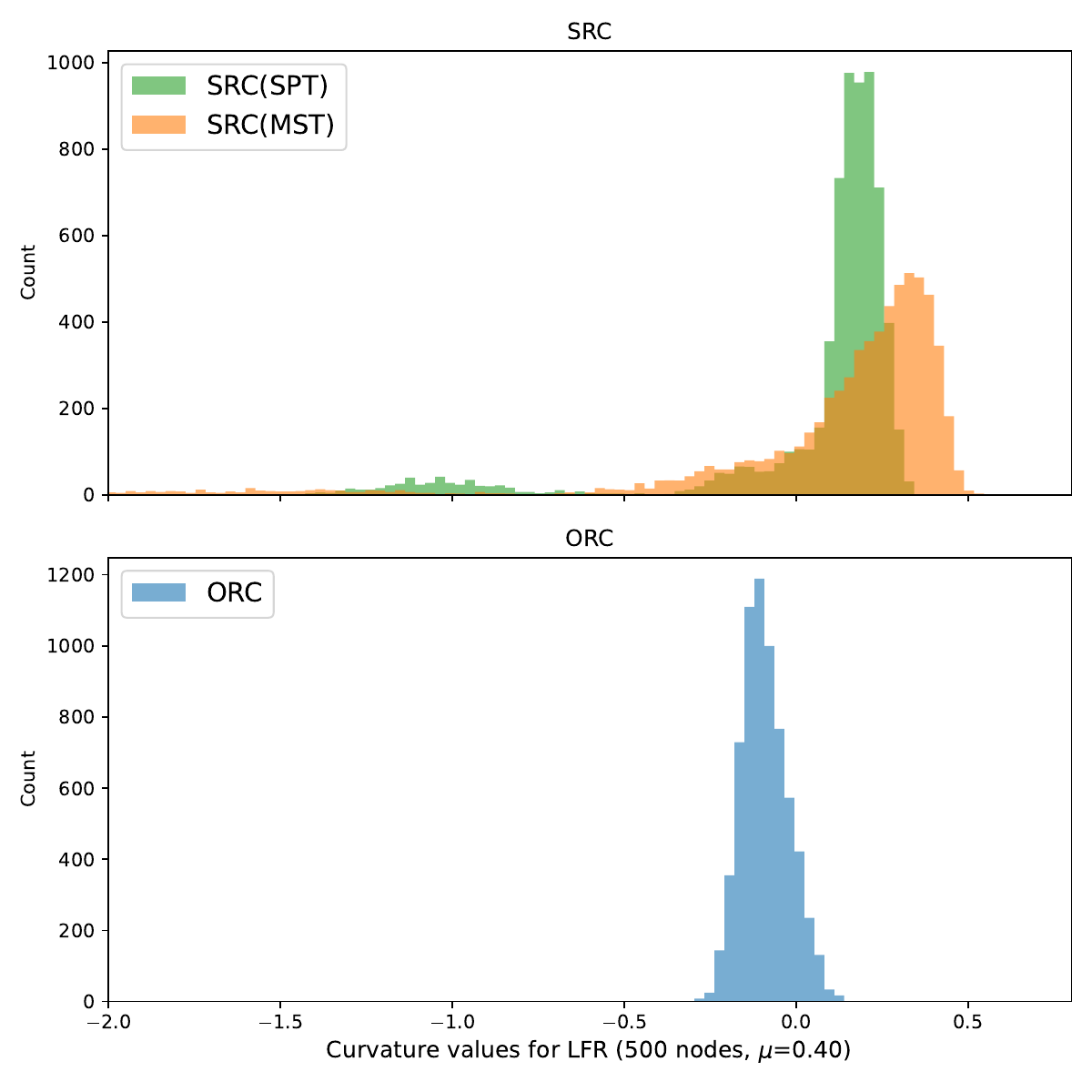}
        \caption{$\mu = 0.40$}
        \label{fig:ricci_mu040}
    \end{subfigure}

    \caption{Ricci curvature distributions for different mixing parameters $\mu$.}
    \label{fig:ricci_curvature_mu_compare}
\end{figure}

\subsection{Robustness to Tree Construction (SPT vs. MST vs. Random Spanning Tree)}\label{app:tree_robustness}

\begin{figure*}[t]
    \centering
    \begin{subfigure}[t]{0.32\textwidth}
        \centering
        \includegraphics[width=\linewidth]{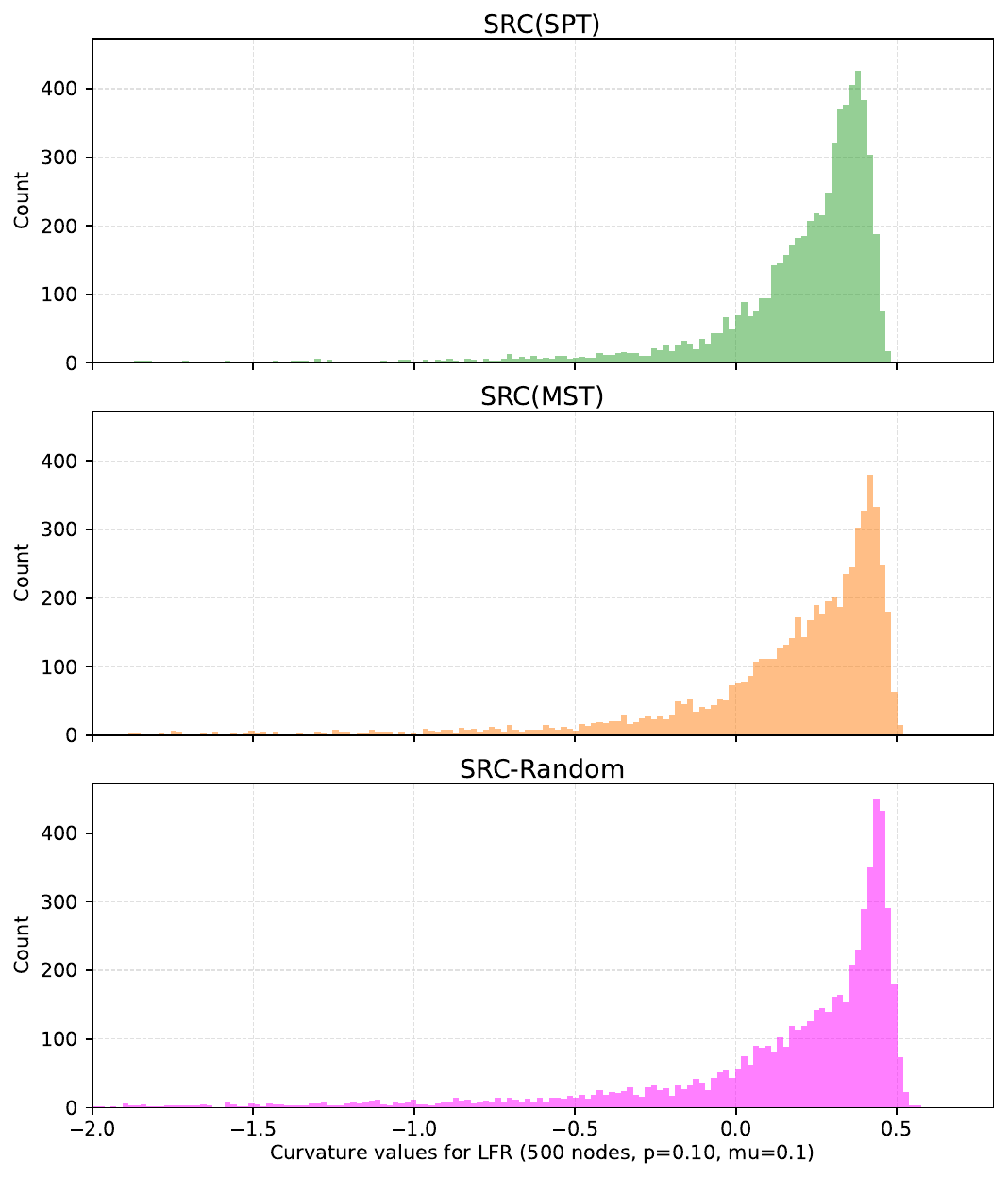}
        \caption{$\mu=0.1$}
        \label{fig:src_tree_robust_mu01}
    \end{subfigure}
    \hfill
    \begin{subfigure}[t]{0.32\textwidth}
        \centering
        \includegraphics[width=\linewidth]{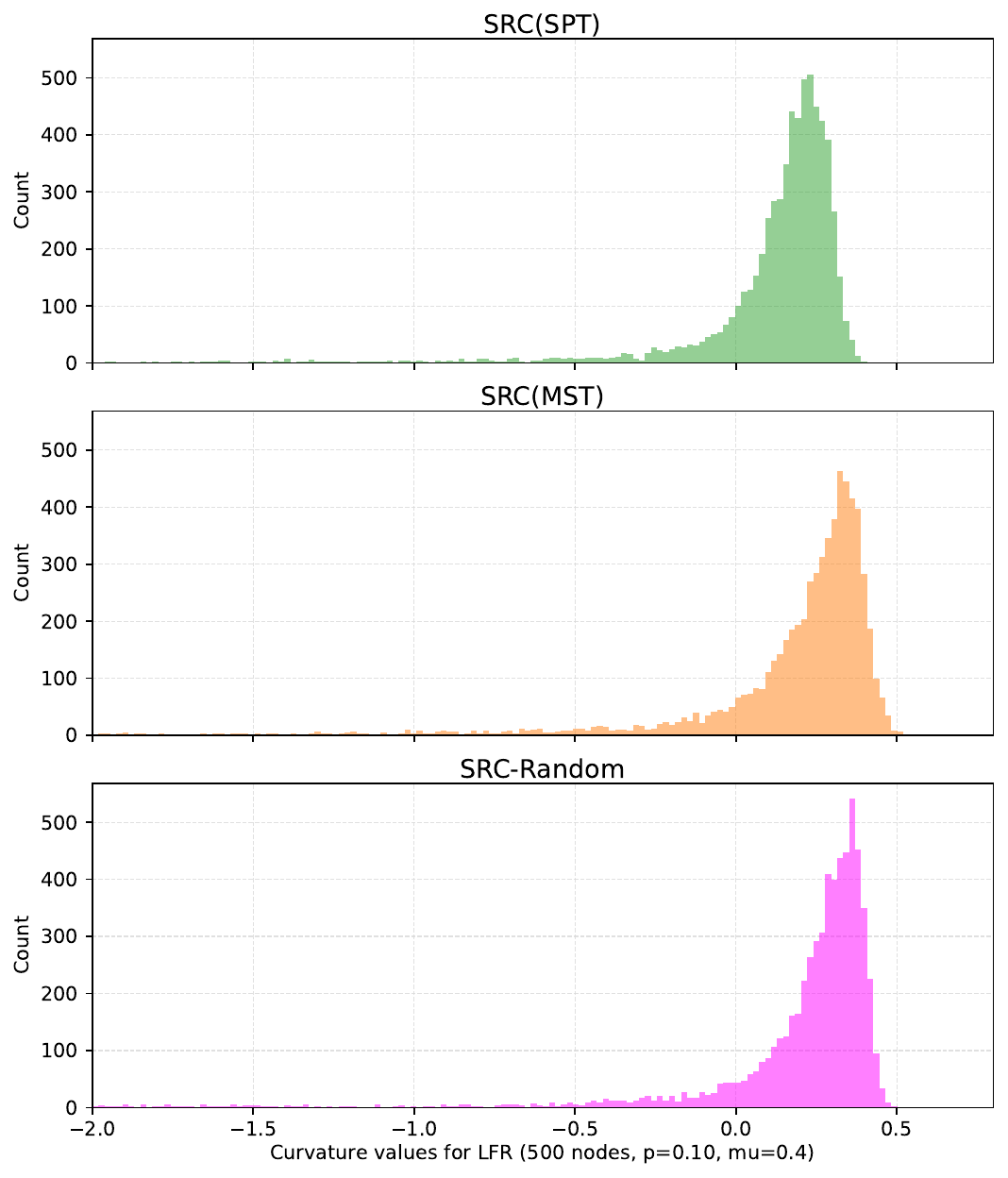}
        \caption{$\mu=0.4$}
        \label{fig:src_tree_robust_mu04}
    \end{subfigure}
    \hfill
    \begin{subfigure}[t]{0.32\textwidth}
        \centering
        \includegraphics[width=\linewidth]{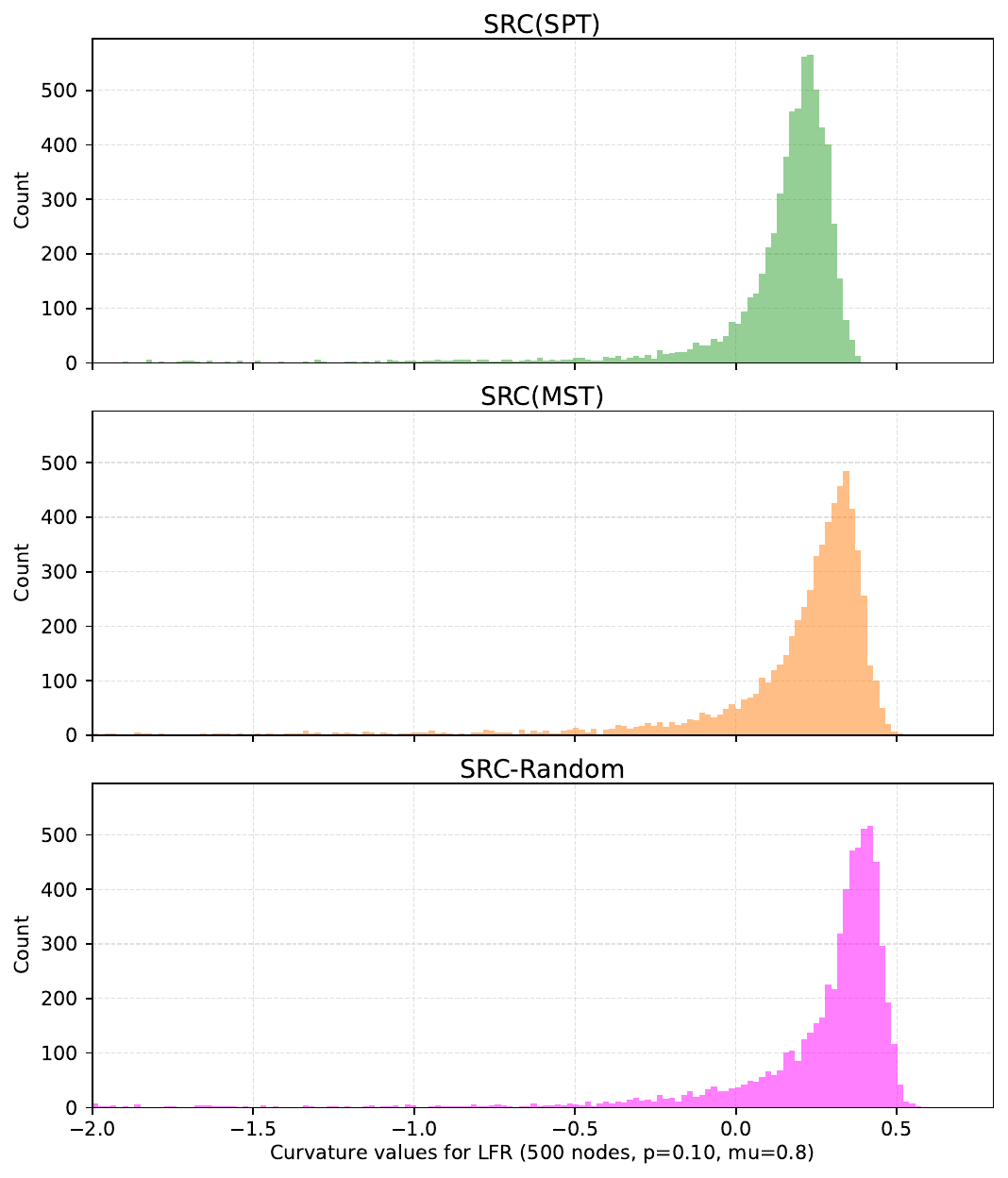}
        \caption{$\mu=0.8$}
        \label{fig:src_tree_robust_mu08}
    \end{subfigure}

    \caption{\textbf{Robustness to tree construction.} SRC distributions on the same LFR graphs computed using different extracted trees (SPT, MST, Random spanning tree). Across $\mu$, the global distributional shapes are similar, suggesting SRC mainly reflects coarse graph geometry rather than a specific tree optimality criterion.}
    \label{fig:src_tree_robustness}
\end{figure*}

SRC is computed on a tree extracted from the original graph, which raises a natural question:
to what extent does SRC depend on \emph{how} the tree is constructed?
In the main paper we primarily use structured choices such as SPT or MST,
as they are common and often considered to better preserve certain aspects of the graph geometry.
Here, we empirically examine whether the \emph{optimality criterion} of the extracted tree is essential,
or whether using an arbitrary spanning tree already yields a similar curvature profile.

\paragraph{Setup.}
We fix the underlying graph and change only the extracted tree used by SRC.
Specifically, for each LFR instance we compute SRC on (i) an SPT, (ii) an MST, and (iii) a random spanning tree sampled from the same graph,
while keeping all other settings unchanged.
Figure~\ref{fig:src_tree_robustness} reports the resulting SRC histograms for three mixing parameters $\mu\in\{0.1,0.4,0.8\}$.

\paragraph{Observation.}
Across $\mu$, the three tree constructions produce curvature distributions with a similar global shape:
the main mass of the distribution and the overall skew/tail behavior are largely consistent across SPT, MST, and random spanning trees.
We do observe non-negligible differences at a finer level (e.g., local shifts of the peak or tail thickness),
indicating that SRC is not strictly invariant to the tree choice; however, the coarse distributional profile remains stable.

\section{Edge pruning experiments with curvatures}\label{sec:orc_manl_additional_content}

In this section we present additional analyses on curvature-based edge pruning in
Section~\ref{sec:edge_pruning_with_Curvature}.
Before presenting the results, we summarize the benchmark datasets and data-generation
protocol used in these experiments, including the synthetic datasets adopted from ORC-MANL
(See Section~\ref{sec:profiles_of_datasets_for_manl}).

We begin by visualizing the synthetic 2D and 3D datasets used in the experiments,
highlighting the presence of shortcut edges that deviate from the underlying
manifold structure (See Section~\ref{sec:visualization_of_datasets_for_edge_pruning}). 
We then provide extended pruning results across ten benchmark datasets,
evaluating the reliability of SRC- and ORC-based pruning schemes in comparison
to distance-only baselines (See Section~\ref{sec:edge_pruning_for_all}). 
To aid interpretation, we also examine the distribution of curvature values,
offering intuition for why SRC yields more effective filtering of shortcuts (See Section~\ref{sec:intuition_on_curvature_values}).
Finally, we remark on the overall pruning procedure to clarify the roles of the
main parameters (See Section~\ref{sec:remark_on_procedure}).
Together, these results reinforce the main-text findings that SRC-based pruning,
especially when combined with MANL, consistently outperforms ORC and
distance-only approaches.

\subsection{Profiles of datasets}\label{sec:profiles_of_datasets_for_manl}

We additionally include the synthetic benchmarks used in ORC-MANL
to evaluate curvature-based edge pruning under the same protocol.

\begin{itemize}
    \item \textbf{ORC-MANL datasets.}
        We additionally include the synthetic benchmarks used in ORC-MANL.
        Specifically, we generate 10 synthetic datasets using the reference implementation provided at
        \href{https://github.com/TristanSaidi/orcml}{\texttt{github.com/TristanSaidi/orcml}} (MIT license).
        We used the authors’ reference implementation and applied minor compatibility fixes to reproduce the dataset generation procedure.
\end{itemize}

\subsection{Visualization of datasets for edge pruning}\label{sec:visualization_of_datasets_for_edge_pruning}

We provide an overview of the datasets employed for edge pruning through visualization.
Fig.~\ref{fig:five_2D_datasets} depicts a two-dimensional configuration, whereas Fig.~\ref{fig:five_3D_datasets_with_shortcut_edges} presents a three-dimensional distribution, each comprising $4,000$ data points. Edges that deviate from the underlying manifold are referred to as shortcut edges, which are highlighted in red in the figures.

The subsequent analysis in Section~\ref{sec:edge_pruning_with_Curvature} examines whether SRC and ORC are capable of excluding these shortcut edges in a sufficient and reliable manner.

\begin{figure}[ht]
    \centering
    \includegraphics[width=1\linewidth]{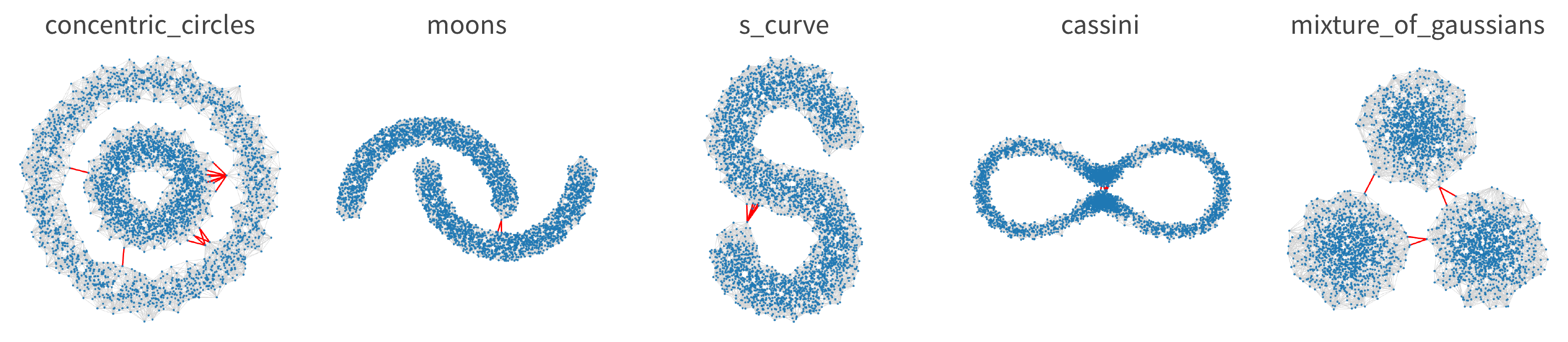}
    \caption{Two-dimensional synthetic datasets employed in the edge pruning experiments. Each dataset consists of 4,000 data points. Red lines indicate shortcut edges, i.e., edges that do not follow the underlying manifold structure.}
    \label{fig:five_2D_datasets}
\end{figure}

\begin{figure}[ht]
    \centering
    \includegraphics[width=0.99\linewidth]{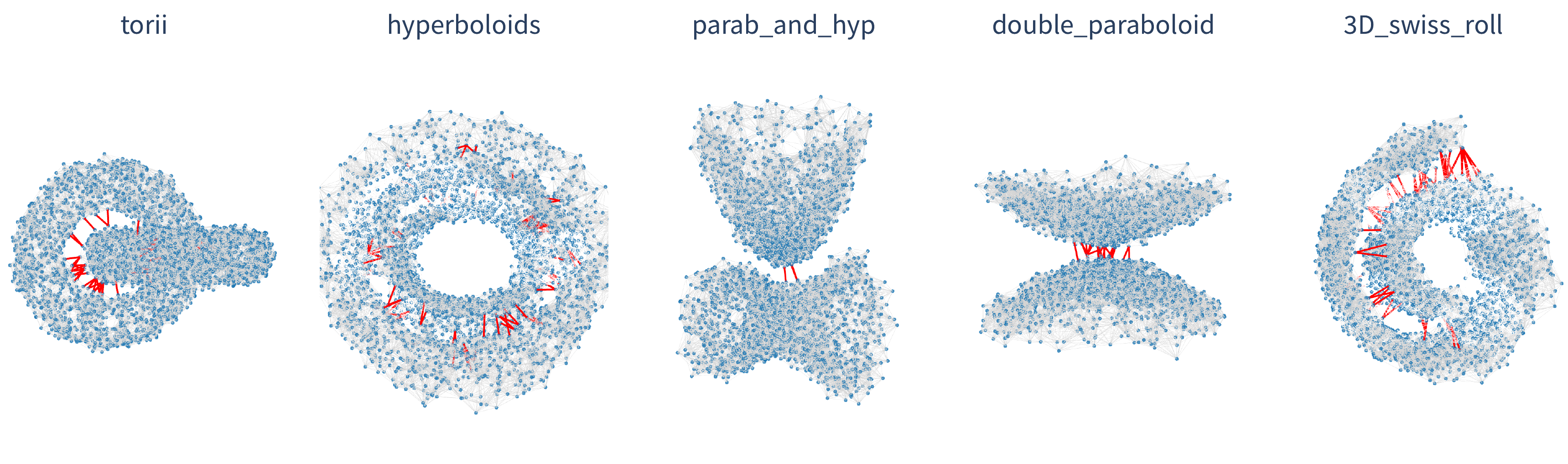}
    \caption{Three-dimensional synthetic datasets employed in the edge pruning experiments. Each dataset contains 4,000 data points. Red lines indicate shortcut edges, i.e., edges that deviate from the underlying manifold structure.}
    \label{fig:five_3D_datasets_with_shortcut_edges}
\end{figure}

\subsection{Extended Results for Edge Pruning}\label{sec:edge_pruning_for_all}

We report the complete edge pruning results across all ten benchmark datasets introduced in Section~\ref{sec:edge_pruning_with_Curvature} used in~\citep{DBLP:conf/iclr/SaidiHB25}: 
\texttt{concentric\_circles}, \texttt{moons}, \texttt{s\_curve}, \texttt{cassini}, \texttt{mixture\_of\_gaussians}, 
\texttt{torii}, \texttt{hyperboloids}, \texttt{parab\_and\_hyp}, \texttt{double\_paraboloid}, and \texttt{3D\_swiss\_roll}. 
Each dataset consists of $4,000$ samples.

The experiments are conducted under the parameter pair $\delta_{\text{M}}=0.75$ and $\lambda_{\text{M}}=0.01$, which achieve the best scores for both ORC-MANL and SRC-MANL 
within the parameter ranges 
$\delta_{\text{M}} \in [0.7,0.9]$ (step 0.05) and 
$\lambda_{\text{M}} \in \{10^{-3},10^{-2},0.1,0.2,0.5\}$.
Figures~\ref{fig:results_edge_pruning} summarize the percentage of shortcut edges correctly removed (red) and good edges incorrectly removed (light blue). 

\vspace{0.5em}
\noindent\textbf{Observations.}  
Across all datasets, the same overall trend holds: 
\begin{itemize}
    \item ``SRC only'' pruning already removes shortcut edges reliably while keeping most good edges intact.  
    \item Adding MANL on top of SRC further improves precision, leading to the best overall pruning quality.  
    \item By contrast, Distance-only pruning fails to separate shortcuts in most datasets, although in a few cases 
    (e.g., \texttt{concentric\_circles}, \texttt{moons}, \texttt{s\_curve} and \texttt{mixture\_of\_gaussians}) they accidentally capture some shortcuts.  
    \item ``ORC only'' pruning remains unstable, often misclassifying a large number of good edges; even with MANL, it only manages to reach performance comparable to the SRC-based counterpart. 
\end{itemize}

\begin{figure}[htbp]
    \centering
    \includegraphics[width=0.98\linewidth]{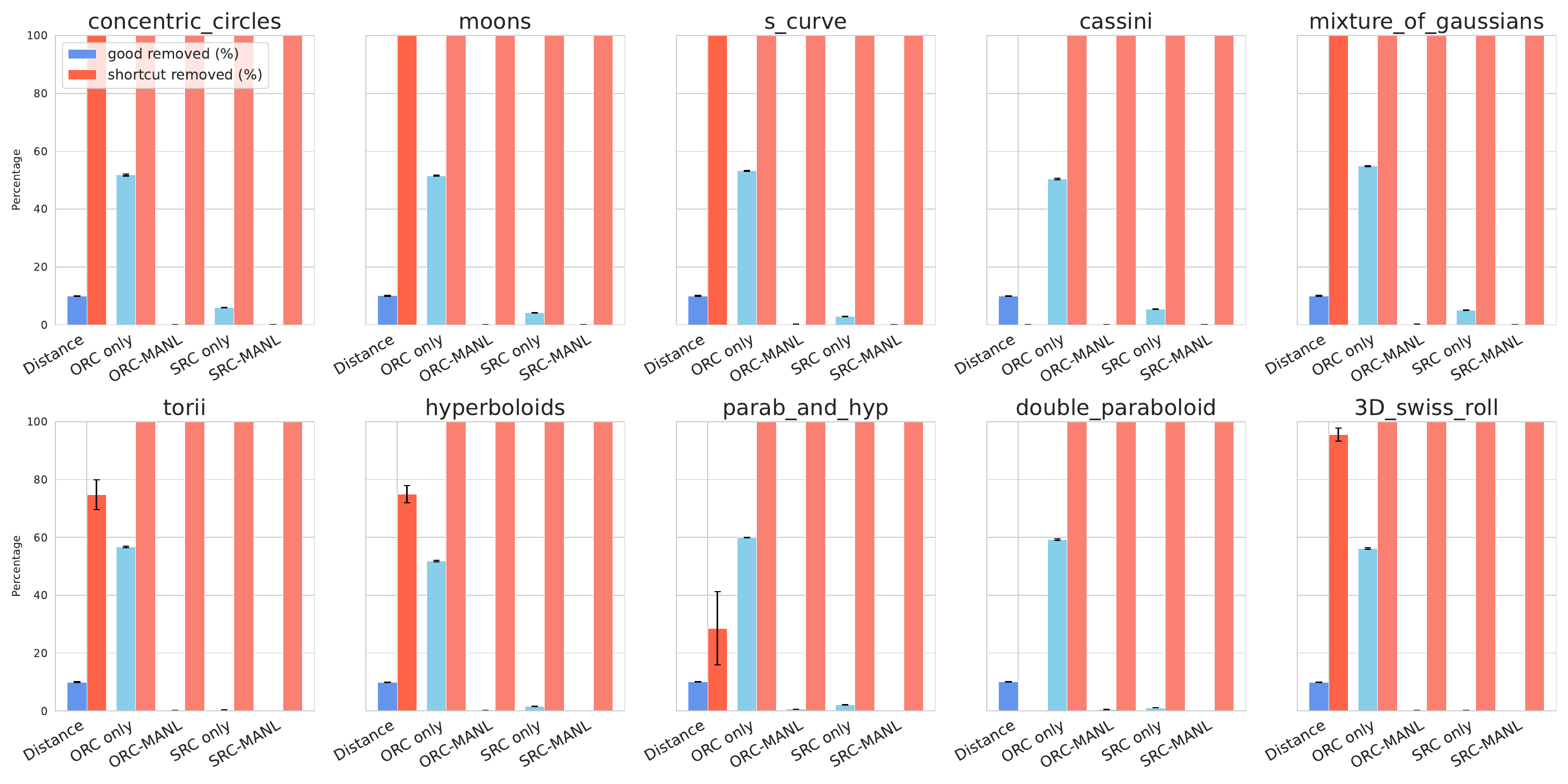}
    \caption{
Bars indicate the percentage of shortcut edges correctly removed (red) 
and good edges incorrectly removed (light blue). 
While Distance-only and ORC pruning often fail, ``SRC only'' achieves reliable shortcut detection with fewer mistakes. 
With MANL, ORC and SRC(SPT) consistently succeed.}
    \label{fig:results_edge_pruning}
\end{figure}

\vspace{0.5em}
\noindent Overall, these extended results confirm the trends shown in the main text: Distance-only pruning fails to separate shortcuts, ``ORC only'' suffers from excessive misclassification, and SRC(SPT) consistently yields the most reliable pruning both alone and within the MANL framework.

\subsection{Intuition on curvature values}\label{sec:intuition_on_curvature_values}

Empirically, we observe that SRC(SPT) values are often larger than those of ORC 
on the same graph as in Fig.~\ref{fig:two_circles_curvature}. This implies that, under an identical thresholding scheme, 
the first-stage filtering with SRC preferentially selects edges that are 
geometrically close to shortcuts. As a result, the subsequent shortcut-detection 
step becomes more effective, explaining why SRC-MANL achieves higher pruning 
accuracy compared to the ORC-based counterpart.

\begin{wrapfigure}{r}{0.48\linewidth}
\vspace{-3pt}
\centering
\includegraphics[width=\linewidth]{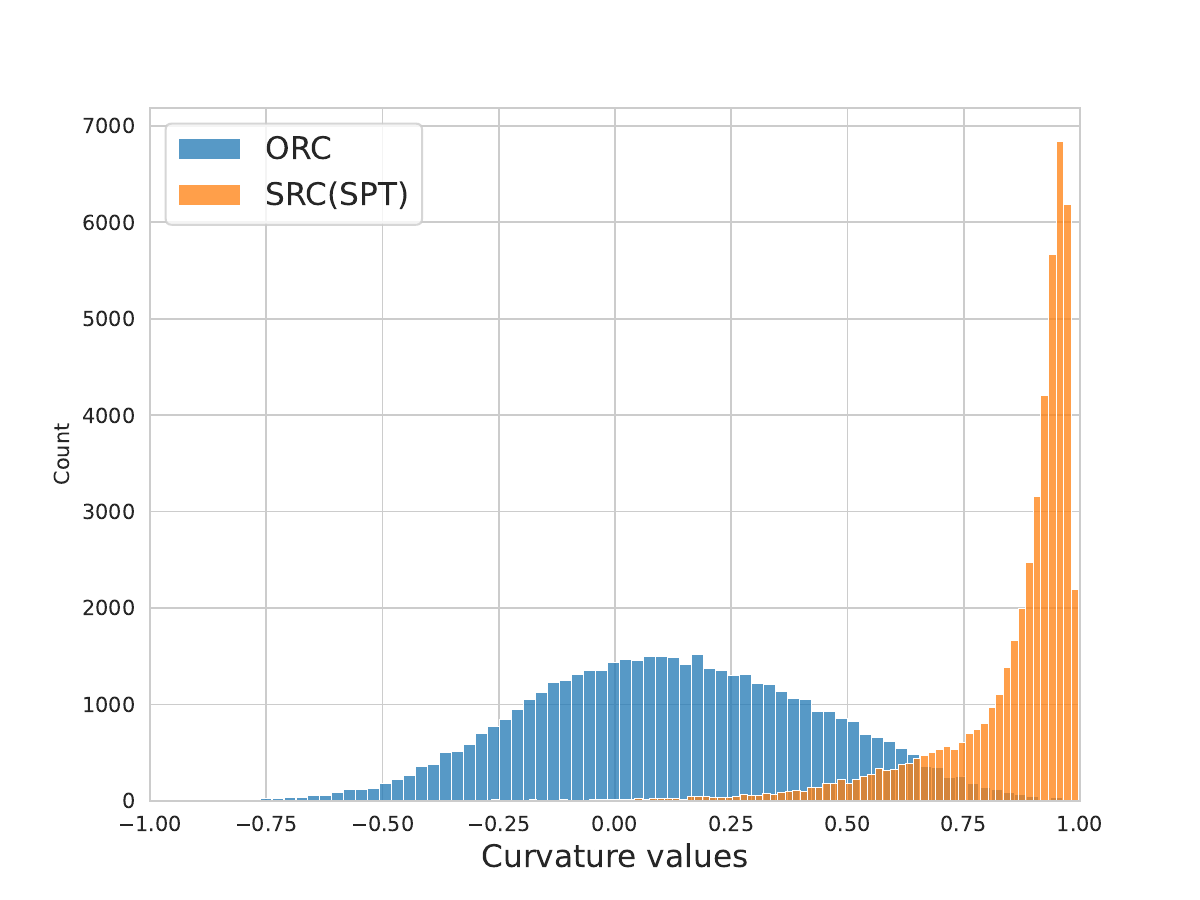}
\caption{Illustrative example on the \texttt{3D\_swiss\_roll} dataset, where we see in Fig.~\ref{fig:five_2D_datasets}. Distribution of curvature values, showing that SRC(SPT) tends to assign larger curvature values than ORC. This leads to more effective identification of shortcut edges in the first-stage filtering.}
\label{fig:two_circles_curvature}
\vspace{-5pt}
\end{wrapfigure}

This behavior can be intuitively understood: 
Since SRC relies on tree-based ST, local fluctuations in neighborhood measures are aggregated along paths of the tree. 
This aggregation has a smoothing effect, leading to systematically larger curvature values compared to ORC, which directly computes Wasserstein distances on raw neighborhoods.

\subsection{Remark on the procedure}\label{sec:remark_on_procedure}
The ORC-MANL procedure consists of two stages: 
(i) a curvature-based filtering step, where edges with curvature below a threshold 
controlled by $\delta_{\text{M}} \in (0,1)$ are selected as candidates; and 
(ii) a shortcut-detection step is compared 
against a threshold determined by the regularization parameter $\lambda_{\text{M}} > 0$. 
These parameter roles are consistent with the original ORC-MANL implementation 
\citep{DBLP:conf/iclr/SaidiHB25}.

\end{document}